\def\eqref#1{equation~\ref{#1}}
\def\1{\bm{1}}
\newcommand{\train}{\mathcal{D}}
\def\eps{{\epsilon}}
\def\inn#1{\langle #1 \rangle}
\def\rt{{\textnormal{t}}}
\def\rvx{{\mathbf{x}}}
\def\rmR{{\mathbf{R}}}
\def\rmW{{\mathbf{W}}}
\def\vw{{\bm{w}}}
\def\vy{{\bm{y}}}
\def\vz{{\bm{z}}}
\def\mI{{\bm{I}}}
\def\mX{{\bm{X}}}
\def\mY{{\bm{Y}}}
\DeclareMathAlphabet{\mathsfit}{\encodingdefault}{\sfdefault}{m}{sl}
\SetMathAlphabet{\mathsfit}{bold}{\encodingdefault}{\sfdefault}{bx}{n}
\newcommand{\tens}[1]{\bm{\mathsfit{#1}}}
\def\tP{{\tens{P}}}
\def\tQ{{\tens{Q}}}
\def\gD{{\mathcal{D}}}
\def\gE{{\mathcal{E}}}
\def\gF{{\mathcal{F}}}
\def\gH{{\mathcal{H}}}
\def\gJ{{\mathcal{J}}}
\def\gL{{\mathcal{L}}}
\def\gM{{\mathcal{M}}}
\def\gN{{\mathcal{N}}}
\def\gP{{\mathcal{P}}}
\def\gQ{{\mathcal{Q}}}
\def\gR{{\mathcal{R}}}
\def\gS{{\mathcal{S}}}
\def\gT{{\mathcal{T}}}
\def\sP{{\mathbb{P}}}
\def\sQ{{\mathbb{Q}}}
\newcommand{\E}{\mathbb{E}}
\newcommand{\Ls}{\mathcal{L}}
\newcommand{\R}{\mathbb{R}}
\DeclareMathOperator*{\argmin}{arg\,min}
\DeclareMathOperator*{\arginf}{arg\,inf}
\newcommand\reallywidetilde[1]{\ThisStyle{%
  \setbox0=\hbox{$\SavedStyle#1$}%
  \stackengine{-.1\LMpt}{$\SavedStyle#1$}{%
    \stretchto{\scaleto{\SavedStyle\mkern.2mu\AC}{.5150\wd0}}{.6\ht0}%
  }{O}{c}{F}{T}{S}%
}}
\newcommand{\pmval}[2]{#1\,{\scriptsize$\pm$\,#2}}
\DeclareRobustCommand{\mybox}[2][gray!20]{%
\begin{tcolorbox}[   %% Adjust the following parameters at will.
        breakable,
        left=0pt,
        right=0pt,
        top=0pt,
        bottom=0pt,
        colback=#1,
        colframe=#1,
        width=\dimexpr\columnwidth\relax, 
        enlarge left by=0mm,
        boxsep=5pt,
        arc=0pt,outer arc=0pt,
        ]
        #2
\end{tcolorbox}
}
\renewcommand*{\thefootnote}{\fnsymbol{footnote}}
\newcommand{\tablestyle}[2]{\setlength{\tabcolsep}{#1}\renewcommand{\arraystretch}{#2}\centering\footnotesize}
\newtheorem{theorem}{Theorem}
\newtheorem{proposition}{Proposition}
\newtheorem{lemma}{Lemma}
\newcommand{\fsl}{N$\mathcal{M}$}
\begin{document}

% If your paper is accepted and the title of your paper is very long,
% the style will print as headings an error message. Use the following
% command to supply a shorter title of your paper so that it can be
% used as headings.
%
%\runningtitle{I use this title instead because the last one was very long}

% If your paper is accepted and the number of authors is large, the
% style will print as headings an error message. Use the following
% command to supply a shorter version of the author names so that
% they can be used as headings (for example, use only the surnames)
%
%\runningauthor{Surname 1, Surname 2, Surname 3, ...., Surname n}

\twocolumn[

\aistatstitle{Flows and Diffusions on the Neural Manifold}

\aistatsauthor{ Daniel Saragih${}^\dagger$ \And Deyu Cao${}^{*,\, \dagger}$ \And Tejas Balaji${}^*$ }

\aistatsaddress{ Queen's University \and Vector Institute \And  University of Tokyo \And University of Toronto } ]

\def\thefootnote{*}\footnotetext{Equal contribution.}\def\thefootnote{\arabic{footnote}}
\def\thefootnote{$\dagger$}\footnotetext{Work done while at the University of Toronto.}
\def\thefootnote{\arabic{footnote}}

\begin{abstract}
 Diffusion and flow-based generative models have achieved remarkable success in domains such as image synthesis, video generation, and natural language modeling. In this work, we extend these advances to \textit{weight space learning} by leveraging recent techniques to incorporate structural priors derived from optimization dynamics. Central to our approach is modeling the trajectory induced by gradient descent as a trajectory inference problem. We unify several trajectory inference techniques towards matching a gradient flow, providing a theoretical framework for treating optimization paths as inductive bias. We further explore architectural and algorithmic choices, including reward fine-tuning by adjoint matching, the use of autoencoders for latent weight representation, conditioning on task-specific context data, and adopting informative source distributions such as Kaiming uniform. Experiments demonstrate that our method matches or surpasses baselines in generating in-distribution weights, improves initialization for downstream training, and supports fine-tuning to enhance performance. Finally, we illustrate a practical application in safety-critical systems: detecting harmful covariate shifts, where our method outperforms the closest comparable baseline.
\end{abstract}

% \subsubsection*{Acknowledgements}
% The authors would like to thank Lazar Atanackovic, Kirill Neklyudov, and Kacper Kapusniak for valuable discussions on methodology and implementation. We are also grateful to Soro Bedionita for assistance in resolving issues related to the D2NWG implementation. Special thanks go to Edwin Chacko, Vedant Swamy, and Ashwin Santhosh---developers in the UTMIST project group where this work was originally conceived---for their early collaboration.
\section{Introduction}
\label{sec:intro}
Flow matching (FM) \citep{albergo2023building, lipmanFlowMatchingGenerative2023, liu2023flow} is a prominent fixture in generative modeling tasks from imaging \citep{lipmanFlowMatchingGenerative2023, tongImprovingGeneralizingFlowbased2024, esser2024scalingrectifiedflowtransformers, liu2024instaflow} to language \citep{gat2024discrete, shaul2024flowmatchinggeneraldiscrete, campbell2024generative}. However, its application to neural network weights has not been explored. By leveraging the principled, yet versatile training of FM, we aim to generate task-specific weights on novel tasks.

The natural question is: why generate task-specific weights instead of relying on conventional training methods? One compelling reason is efficiency. If we can train a meta-model to produce classifiers conditioned only on the evaluation dataset, then generating weights reduces to a single inference pass of our flow or diffusion model. This motivation parallels recent work in zero- and few-shot learning \citep{zhang2024metadiff, soroDiffusionbasedNeuralNetwork2024}, where generalization to new tasks is achieved with minimal or no training. Further on efficiency, conditionally generated weights can also serve as a strong, head-start initialization for downstream fine-tuning, which we later evaluate on corrupted datasets. This approach is especially practical when training a large number of smaller networks, such as in applications involving implicit neural representations \citep{essakine2025where}. Finally, we argue that learning to generate neural network weights opens a new perspective: it allows us to reinterpret diverse problems as questions on weight space. We exemplify this in Sec. \ref{sec:experiments} through an application to detecting harmful covariate shift.

\begin{figure*}
    \centering
    \label{fig:pipeline}
    \includegraphics[width=0.8\textwidth]{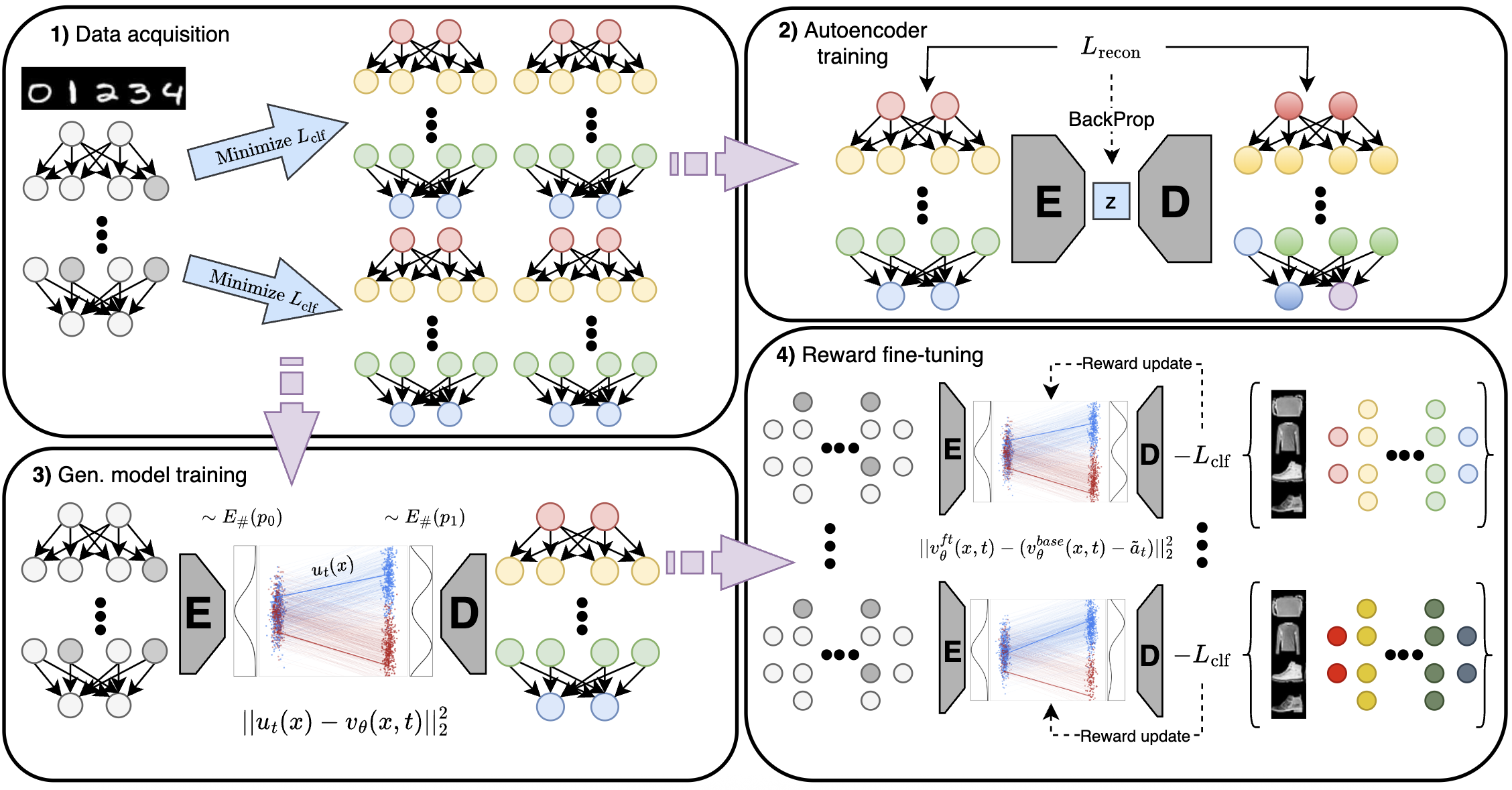}
    \caption{\textbf{Example unconditional pipeline.} \textbf{(1)} Base model pre-training, shown here on MNIST, producing checkpoints across epochs. \textbf{(2) Optional:} variational autoencoder training with a weight-space reconstruction objective. \textbf{(3)} Generative meta-model training; here we illustrate \textit{unconditional} \fsl-CFM w/ (trained) VAE (our default \fsl-CFM is on weight space directly) using the weight initialization from \textbf{(1)} as $p_0$. \textbf{(4) Optional:} reward fine-tuning via adjoint matching where $r(\boldsymbol{\cdot}) = -L_{\text{clf}}(\mX_{\mathrm{FashionMNIST}}; \, \boldsymbol{\cdot})$, steering the \textit{trained} meta-model towards generating FashionMNIST classifiers.}
    \vspace{-1em}
\end{figure*}

In this paper, we introduce flow matching as a new class of methods for generating neural network weights, designed to incorporate structural priors such as training trajectories and source distributions. Under this framework, we may cast our goal as one of \textit{trajectory inference} \citep{lavenant2024towardstrajectory}, reconstructing the continuous-time dynamics $t \mapsto p_t^*$ given easy sampling from the marginal distributions $(p_{t_k}^*)_{k=0}^K$. In practice, temporal observations are sparse, necessitating methods that can sensibly interpolate between observed timepoints, often leveraging biases in data. Indeed, we further ground our approach in the \textit{manifold hypothesis} \citep{bengio2013representation}, which posits that natural data lies on a low-dimensional submanifold of the ambient space, representing a bias that may be incorporated as a prior. Drawing on the Lottery Ticket Hypothesis \citep{frankle2018the, zhang2021validatinglottery, liu2024surveylotterytickethypothesis} as well as the body of work on pruning \citep{cheng2024asurveyonpruning}, we extend this intuition to weight space: neural networks themselves tend to lie on a low-dimensional structure, which we refer to as the \textit{neural manifold} (\fsl)\footnote{
We note that modern deep neural network architectures exhibit parameter symmetries \citep{hechtnielsen1990algebraicstructure, chen1993geometryfeedforward} that may be exploited to reduce the size of weight space. In particular, the neural manifold can be viewed as a \textit{quotient set} where we identify distinct weights perturbed by symmetric transformations. As our focus is on generative techniques, the main paper limits itself to the un-identified space, deferring a discussion and modest experiments to App. \ref{app:futher-exps}.
}. We will make use of these observations to motivate the various experiments conducted later.

In this work, we take preliminary steps toward understanding flows and diffusions on the neural manifold. Our contributions are: \textbf{(1)} unifying and characterizing methods for approximating gradient descent trajectories, which enables principled modeling of structural priors; \textbf{(2)} leveraging these insights to design flow- and diffusion-based approaches for generating neural network weights that not only match or exceed conventionally trained models on in-distribution tasks but also provide stronger initializations for downstream training and allow conditioning on context data to retrieve weights from a distribution pre-trained across datasets; \textbf{(3)} incorporating a fine-tuning mechanism grounded in adjoint matching \citep{domingo-enrichAdjointMatchingFinetuning2024} to adapt pretrained flows efficiently; and \textbf{(4)} demonstrating an application to covariate shift detection, where our approach outperforms the closest comparable baseline. While we view these contributions as a step toward reinterpreting learning problems on weight space, we do not claim to resolve every aspect of generative modeling in this domain. In particular, our scope is limited to three directions—exploiting structural priors (e.g., training trajectories and source distributions), exploring informative priors, and incorporating reward fine-tuning for downstream tasks—implemented within the flow matching framework. Broader challenges such as scaling to very large models or comprehensive accounting of parameter symmetries remain important directions for future work.

\section{Background \& Related Work}
\label{sec:prelims}

\paragraph{Conditional flow models.} \citet{chenNeuralOrdinaryDifferential2019} first introduced continuous normalizing flows as an effective data generation process through modeling dynamics. Simulation-free methods improve on this concept by simplifying the training objective \citep{albergo2023building, lipmanFlowMatchingGenerative2023, liu2023flow}. Following the formulation of \citet{lipmanFlowMatchingGenerative2023}, given random variables $\bar \rvx_0 \sim p_0$ and $\bar \rvx_1 \sim p_1$ a data distribution, define a reference flow $\bar \rvx = (\bar \rvx_t)_{t \in [0, 1]}$ where $\bar \rvx_t = \beta_t \bar \rvx_0 + \alpha_t \bar \rvx_1$ with the constraint that $\alpha_0 = \beta_1 = 0$ and $\alpha_1 = \beta_0 = 1$. The aim of flow modeling is to learn a path $\rvx = (\rvx_t)_{t \in [0, 1]}$ which has the same marginal distribution as $\bar \rvx$. To make this a feasible task, we describe this process as an ODE: $d\rvx_t = v(\rvx_t, t) dt$ where $\rvx_0 \sim \mathcal{N} (0 , \mI)$. Training proceeds by first parameterizing $v(\rvx_t, t)$ by a neural network $\theta$ and matching the reference flow velocity, i.e. $u(\rvx_t, t) := \frac{d}{d t} \bar \rvx_t$. This would, however, be an unfeasible training objective, therefore, we condition on samples from the distribution $\rvx_1 \sim p_1$ and train
\begin{equation}
    \label{eq:cfm}
    \Ls_{\text{cfm}}(\theta) = \E_{\rt, \rvx_1, \rvx_t}||v_\theta(\rvx_t, t) - u(\rvx_t, t \mid \rvx_1)||.
\end{equation}
\citet{lipmanFlowMatchingGenerative2023} proved that this loss produces the same gradients as the marginal loss, thus optimizing it will result in convergence to the reference $u(\rvx_t, t)$. Moreover, we can always marginalize an independent conditioning variable $\vy$ on $v_\theta, u$ -- this will serve as our context conditioning vector.

\paragraph{Modeling probability paths.} The straight line paths in the CFM framework can be situated in the broader framework of matching interpolants $(\rvx_t)_{t \in [0, 1]}$ to minimize a given energy function. Typically, the kinetic energy $\gE(\rvx_t, \dot \rvx_t) = \E_{\rt \sim U[0, 1]} ||{\dot \rvx_t}||^2$ is minimized \citep{shaul2023kineticoptimal}, however, more general energies have been considered \citep{neklyudov2023actionmatching, neklyudov2024wassersteinlagrangian, kapusniakMetricFlowMatching2024, liu2024generalized} to broaden the class of learnable paths. Of interest to us is the concept of finding interpolants that depend on prior knowledge as provided through samples. For instance, Metric Flow Matching \citep{kapusniakMetricFlowMatching2024} learns parametric interpolants that minimize 
    $\E_{(\rvx_0, \rvx_1) \sim \pi} \gE_g(\rvx_t) = \E_{\rt, (\rvx_0, \rvx_1)} ||\dot \rvx_t||_{g(\rvx_t)}^2$,
where $\pi$ is some distribution on the product, typically $p_0 \otimes p_1$, and $g$ is a data-dependent Riemannian metric on the ambient space $\R^d$. Alternatively, \citet{rohbeck2025modeling} proposed a more stable option using cubic splines for multi-marginal flow matching. Most related to our setting is the task of modeling distributions evolved via gradient flow: $\frac{d}{d t} p_t = -\nabla \cdot (p_t \nabla s_t)$. We explore this task and prior methods in the following Section.

\paragraph{Harmful covariate shift detection.} Covariate shifts refers to changes in the test data distribution $p_{\text{test}}(x)$ as compared to the training distribution $p_{\text{train}}(x)$ while the relation between inputs and outputs remain fixed, i.e. $p_{\text{test}}(y|x) = p_{\text{train}}(y|x)$. Importantly, we do not require labels to determine this shift, thus it is practical to do so in a standard deployment setting. Prior work in this domain include deep kernel MMD \citep{liu2020learningdeepkernels}, H-divergences \citep{zhao2022comparing}, and Detectron \citep{ginsberg2023harmfulcov}. As Detectron requires minimal tuning and is most performant in low-data regimes ($N < 100$ samples), we emphasize the use of this approach. In particular, Detectron \citep{ginsberg2023harmfulcov} builds off of \textit{selective classification}---building classifiers that accept/reject test data depending on closeness to the training distribution---and \textit{PQ-learning} \citep{goldwasser2020beyondperturbations} that extends the conventional theory of PAC learning to arbitrary test distributions by employing selective classification. The main idea considers the generalization set $\gR$ of a classifier $f_\theta$ and samples $\gQ$ from an unknown distribution. The strategy is to fine-tune constrained disagreement classifiers (CDCs) to agree with $f_\theta$ on $\gR$ but disagree on $\gQ$. If $\gQ \subset \gR$, then it will be difficult to disagree on $\gQ$, but if the CDCs behave inconsistently on $\gQ$, that suggests a covariate shift. Notably, this method is sample-efficient, agnostic to classifier architecture, and may be used in tests of statistical significance.

See Appendix \ref{app:rel-works} for further related works.

\section{Modeling Weight Trajectories}
\label{sec:modeling}

In this section, we build towards our approach. Proofs are provided in App. \ref{app:proofs}. 

\paragraph{The continuity equation on neural network parameters.}
For the purpose of our analysis, let us restrict our view to neural networks that are optimized by gradient descent (GD) algorithms to minimize a loss $\gL(\theta_k) := d(\gM_\theta(\mX) - \mY)$, where $\gM_\theta$ is a neural network parameterized by trainable weights $\theta \in \R^p$, $\mX \in \R^{N \times D}$ are inputs, $\mY \in \R^{N \times c}$ are labels, and $d$ is some differentiable distance function, such as cross-entropy. To minimize via GD, parameter updates are done by $\theta_{k + 1} = \theta_k - \alpha \nabla \gL(\theta_k)$ given some learning rate $\alpha > 0$. Taking the learning rate to zero, we can view parameter evolution as a \textit{gradient flow}. For simplicity, we assume that the updates are deterministic (contrary to stochastic gradient descent which randomly selects training batches), and defer to App. \ref{app:sb-formulation} an approach incorporating stochasticity via Schr\"odinger bridges. For now, the sole source of randomness is the initialization $\theta_0 \sim p_0$. Within this setting, we can write down a continuity equation. In later sections, we show how this result underpins the choice of modeling framework.

\mybox[orange!20!white]{
\begin{theorem}[name=Informal; follows Ch. 8.3 of \citet{santambrogio2015optimal}]
\label{thm:ce}
    Let $\theta_0 \sim p_0$ be initialized network parameters and the loss $\gL$ is $C^1$ in $\theta$. If $(\theta_t)_{t \geq 0}$ is the gradient descent curve, we have $p_t = \mathrm{Law}(\theta_t)$ with 
    \begin{equation}
        \label{eq:ce}
        \partial_t p_t - \nabla\cdot(p_t\nabla \gL) = 0.
    \end{equation}
\end{theorem}
}

\subsection{Modeling Eqn. \ref{eq:ce}}
\label{sec:proxies}
The problem of learning the continuous dynamics of a system governed by a continuity equation has been studied in many forms in existing literature. In our setting, Theorem \ref{thm:ce} establishes a link between the practical dynamics of SGD and the continuity equation (Eqn. \ref{eq:ce}), which provides a more tractable theoretical framework. Building on this connection, we study methods that realize parameterized solutions to Eqn. \ref{eq:ce}: $\partial_tp_t + \nabla \cdot (p_tv_t^\Theta) = 0$, thereby providing a common lens of interpretation. \citet[Theorem 1]{lipmanFlowMatchingGenerative2023} showed that CFM may be viewed through this lens, hence motivating its use. However, its training objective assumes affine Gaussian paths, which oversimplifies the non-terminal distributions along a GD path. Therefore, we turn to a natural generalization: multi-marginal flow matching (MMFM), which has been shown \citep[Prop. 2]{rohbeck2025modeling} to correspond to training multiple CFMs. To add to our selection, Theorem \ref{thm:mmfm-action-gap} shows a correspondence between MMFM and JKOnet${}^*$ \citep{terpin2024learning} via the \textit{action gap}; this connection, its formulation towards modeling Eqn. \ref{eq:jko-ce}, and its efficient scalar potential parameterization motivates its consideration. Below, we expound on the action gap lens, provide some background, and present a generalization to non-affine regression targets (App. \ref{app:extended-proxy-learning}).

\subsubsection{The action gap} 

Frameworks which match gradients have a learnable function $\Psi_\theta(x,t)$ which is trained to match a regression target. Two representatives of this approach are the works on Action Matching \citep{neklyudov2023actionmatching, neklyudov2024wassersteinlagrangian} and the JKOnet family \citep{bunne2022proximal, terpin2024learning}. We found the theoretical framework of the \textit{action gap} from Action Matching to be most suitable as a reference; we recall it here.

\paragraph{Action matching.} The action matching setup presumes an initial distribution $q_0$, a velocity field $v:[0, 1] \times \Omega \to \R^p$, and a continuity equation which describes the dynamics: $\frac{d}{dt}q_t = -\nabla\cdot(q_tv_t)$. \citet[Theorem 2.1]{neklyudov2023actionmatching} showed that, under mild conditions on $q_t$, a unique function $s_t^*(x)$ termed the \textit{action} may be defined such that $v_t(x) = \nabla s_t^*(x)$ and the continuity equation $\frac{d}{dt}q_t = -\nabla\cdot(q_t\nabla s_t^*)$ is satisfied. One can readily see the connection with Eqn. \ref{eq:ce}: $s_t^* \equiv -\Ls$ up to a constant. Therefore, the \textit{action gap} is 
    $AG(s, s^*) = \frac{1}{2}\int_0^1 \E_{q_t(x)} ||\nabla s_t(x) - \nabla s_t^*(x)||^2 \, dt = \frac{1}{2}\int_0^1 \E_{q_t(x)} ||\nabla s_t(x) + \nabla \Ls (x)||^2 \, dt.$
This optimization is clearly intractable because of the required access to $\nabla \Ls$, therefore the authors computed a more tractable variational objective for optimization \citep[Theorem 2.2.]{neklyudov2023actionmatching}. To close this exposition, we recall a bound on the 2-Wasserstein distance which will be a recurring theme in this section.

\mybox[orange!20!white]{
\begin{proposition}[name=Prop. A.1 of \citet{neklyudov2023actionmatching}]
\label{prop:w2-bound-action-gap}
    Suppose the curl-free vector field $\nabla s_t$ is continuously differentiable in $(t, x)$, and uniformly Lipschitz in $x$ throughout $[0, 1] \times \R^p$ with Lipschitz constant $K$. Let $\hat q_t$ denote the density path induced by $\nabla s_t$. Then,
    \begin{equation}
        \label{eq:w2-bound-action-gap}
        W_2^2(\hat q_\tau, q_\tau) \leq e^{\frac{1+2K}{\tau}} \int_0^\tau \E_{q_t(x)}||\nabla s_t(x) + \nabla \Ls (x)||^2 \, dt
    \end{equation}
\end{proposition}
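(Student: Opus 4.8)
The plan is to prove the bound by a synchronous (shared-initial-condition) coupling of the two characteristic flows, followed by a Grönwall estimate; as written the integrand on the right should be read as $\E_{q_t(x)}\|\nabla s_t(x) - \nabla\Ls(x)\|^2$, matching the action gap \eqref{eq:conventional-action-gap}. First I would pass to the Lagrangian picture. Let $\Phi_t$ be the flow map of $\dot x = \nabla s_t(x)$; since $\nabla s_t$ is $C^1$ and uniformly Lipschitz on $[0,1]\times\R^p$, $\Phi_t$ is globally well-defined and $\hat q_t = (\Phi_t)_\# q_0$. Let $\Psi_t$ be the flow associated with the true velocity $v_t = \nabla s_t^* = \nabla\Ls$ that generates $q_t$, so that $q_t = (\Psi_t)_\# q_0$ (this is where the standing regularity of $q_t$ from the action-matching setup enters). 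Draw $X_0 = Y_0 \sim q_0$ and set $X_t = \Phi_t(X_0)$, $Y_t = \Psi_t(Y_0)$. Then $(X_\tau, Y_\tau)$ is a coupling of $\hat q_\tau$ and $q_\tau$, hence $W_2^2(\hat q_\tau, q_\tau) \le \E\|X_\tau - Y_\tau\|^2$, and it suffices to bound the latter.

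Next I would differentiate $g(t) := \E\|X_t - Y_t\|^2$ along the flows, using $\tfrac{d}{dt}\|X_t - Y_t\|^2 = 2\inn{X_t - Y_t,\; \nabla s_t(X_t) - \nabla\Ls(Y_t)}$. Splitting $\nabla s_t(X_t) - \nabla\Ls(Y_t) = \bigl(\nabla s_t(X_t) - \nabla s_t(Y_t)\bigr) + \bigl(\nabla s_t(Y_t) - \nabla\Ls(Y_t)\bigr)$, the first term contributes at most $2K\|X_t - Y_t\|^2$ by Cauchy–Schwarz and the Lipschitz hypothesis, while the second contributes at most $\|X_t - Y_t\|^2 + \|\nabla s_t(Y_t) - \nabla\Ls(Y_t)\|^2$ by Cauchy–Schwarz and $2ab \le a^2 + b^2$. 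Taking expectations and using $Y_t \sim q_t$ gives the differential inequality
\[
g'(t) \;\le\; (1 + 2K)\,g(t) \;+\; \E_{q_t(x)}\|\nabla s_t(x) - \nabla\Ls(x)\|^2 .
\]
Finally, Grönwall's inequality with $g(0) = 0$ (since $X_0 = Y_0$) yields
\[
g(\tau) \;\le\; \int_0^\tau e^{(1+2K)(\tau - t)}\,\E_{q_t(x)}\|\nabla s_t(x) - \nabla\Ls(x)\|^2\,dt \;\le\; e^{(1+2K)\tau}\!\int_0^\tau \E_{q_t(x)}\|\nabla s_t(x) - \nabla\Ls(x)\|^2\,dt,
\]
where the last inequality uses $1 + 2K \ge 0$ so that $e^{(1+2K)(\tau - t)} \le e^{(1+2K)\tau}$ on $[0,\tau]$; combined with $W_2^2(\hat q_\tau, q_\tau) \le g(\tau)$ this is the claim.

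The main obstacle is the Lagrangian representation $q_t = (\Psi_t)_\# q_0$: the target velocity $\nabla\Ls$ need not be Lipschitz, so $\Psi_t$ may fail to be classically well-posed. To handle this one either imports the regularity assumptions on $q_t$ from the action-matching framework, or invokes the superposition principle to obtain a probabilistic representation of $q_t$ along which the same synchronous coupling and Grönwall argument go through with $Y_t$ a (not necessarily pathwise-unique) solution. A secondary, routine point is justifying the interchange of $\tfrac{d}{dt}$ and $\E$, which follows from the at-most-linear growth of $\nabla s_t$ implied by the uniform Lipschitz bound together with standard a priori second-moment bounds on $X_t$ and $Y_t$.
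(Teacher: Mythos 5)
Your argument is correct and follows essentially the same route the paper itself uses for its adapted version, Prop.~\ref{prop:w2-bound-am} (and the cited original in \citet{neklyudov2023actionmatching}): couple the two paths along trajectories sharing the initial law, differentiate the expected squared distance, split into a Lipschitz term plus the gap term via Cauchy--Schwarz, and apply Gr\"onwall with the coupling vanishing at $t=0$. Your reading of the right-hand side as $\E_{q_t(x)}\|\nabla s_t(x)-\nabla\Ls(x)\|^2$ is indeed the intended one, consistent with the action gap in Eqn.~\ref{eq:conventional-action-gap}.
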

}

\subsubsection{Approximating drift in practice}
\label{sec:approx-ce-in-practice}

Although we made use of the action gap as our theoretical framework, we opt for the simpler objectives of JKOnet${}^*$ and MMFM in practice. In this section, we present some background and show their objectives may be cast as minimizing the action gap in the discretization limit. 

\paragraph{JKOnet.} The JKOnet family considers the problem of modeling the Fokker-Planck equation 
\begin{equation}
\label{eq:jko-ce}
\partial_tp_t(x) = \nabla \cdot (p_t(x) \nabla V(x)) + \beta\Delta p_t(x),
\end{equation}
given some potential $V$. The seminal work \citet{jko1998variationalformulation} (namesake of the JKO scheme) related such equations to a variational objective in Wasserstein space, namely 
$\mu_{t + 1} = \argmin_{\mu \in \gP(\R^p)} J(\mu) +  W_2^2(\mu, \mu_t) / 2\tau$ where $J(\mu) := \int_{\R^p} V(x) p_t(x) \, dx + \beta \int_{\R^p} p_t(x) \log (p_t(x)) \, dx$,
and step size $\tau > 0$.
Focusing on the most recent presentation, termed JKOnet${}^*$ \citep{terpin2024learning}, consider the Euclidean analog of the scheme and its first-order optimality condition: $\nabla J(x_{t + 1}) + (x_{t + 1} - x_t) /\tau = 0$, where $x_t \sim \mu_t$. If we let $\beta = 0$, then by \citet[Prop. 3.1]{terpin2024learning}, we have the minimization objective
\begin{equation}
    \label{eq:jkonet-pot-loss}
    \int_{\R^p \times \R^p} \Bigg\vert\Bigg\vert \nabla V(x_{t+1}) + \frac{1}{\tau}(x_{t + 1}- x_t)\Bigg\vert\Bigg\vert ^ 2 \, d\pi_t(x_t, x_{t + 1}) = 0 ,
\end{equation}
where $\pi_t$ is the optimal coupling between $\mu_t$ and $\mu_{t + 1}$. It can be readily seen that when $\beta = 0$, Eqn. \ref{eq:jko-ce} matches the form of Eqn. \ref{eq:ce}, and the requisite setup for action matching. The issue with applying the bound Eqn. \ref{eq:w2-bound-action-gap} stems from the time-discretization. To resolve this, we show in Theorem \ref{thm:mmfm-action-gap} (replacing $u_t$ with $(x_{t+1}-x_t)/\tau$) that the action gradient $\nabla s_t^*$ may be approached in the limit. Moreover, the objective Eqn. \ref{eq:jkonet-pot-loss} is an intuitive description of matching the best linear approximation of the gradient $\nabla \Ls$. Motivated by its simplicity, we opt towards JKOnet${}^*$ as the representative approach. 

\paragraph{Multi-marginal flow matching.}
The MMFM objective is similar to that of CFM in Eqn. \ref{eq:ce}. The difference lies in the definition of the regression target. Suppose we wish to generate marginal densities $p_{t_0}^*, p_{t_1}^*, \ldots, p_{t_K}^*$. Instead of sampling $x \sim p_1$, we sample $z = (x_0, \ldots, x_K)$ independently from each of the marginal densities. To align with the CFM objective, the reference path $p_t(x|z)$ must be a piecewise affine Gaussian path, and its mean a linear interpolation between the $K+1$ samples. Formally, define the interpolant
\begin{equation}
    \label{eq:mmfm-mean}
    \mu_t(z) = \sum_{ k = 0}^{K - 1} \left( x_k + \frac{ t- t_k}{t_{ k + 1} - t_k} (x_{k + 1} - x_k) \right) \cdot \mathbf{1}_{[t_k, t_{ k + 1})}(t)
\end{equation}
and the regression target ought to be
\begin{equation}
    \label{eq:mmfm-regression-target}
    u_t(x\mid z) = \sum_{ k = 0}^{K - 1} \frac{x_{k + 1} - x_k}{t_{ k + 1} - t_k} \cdot \mathbf{1}_{[t_k, t_{ k + 1})}(t).
\end{equation}
With the usual marginalization of $u_t$, i.e. $u_t(x)p_t(x) = \E_{q(z)}[u_t(x\mid z) p_t(x \mid z)]$, we can argue by checking the continuity equation \citet[Theorem 3.1]{tongImprovingGeneralizingFlowbased2024} that $p_t$ is generated by $u_t$. 

Additionally, it is natural to think that if the timepoints $(t_0, \ldots, t_K)$ were dense enough, then its limit is the true probability path $t \mapsto p_t^*$. \citet[Prop. 2]{rohbeck2025modeling} proved that the MMFM objective is equivalent to solving $K$ CFM objectives. By analogy, we consider a MMFM setup equivalent to $K$ OT-CFM objectives, and we show that the reference path $p_t$ approaches $p_t^*$ in the sense of the action gap (cf. Prop. \ref{prop:w2-bound-action-gap}).

\mybox[orange!20!white]{
\begin{theorem}
    \label{thm:mmfm-action-gap}
    Suppose the true marginals evolve according to $\frac{d}{dt}p_t^* = -\nabla \cdot(p_t^* \nabla s_t^*)$ and $t \mapsto p_t^*$ is an absolutely continuous curve. Define $q(z)$ such that marginalizing $q$ with respect to all variables except $x_k, x_{k + 1}$ yields the coupling $p_{t_k} \otimes (T_k^{k +1})_\# p_{t_k}$, where $T_k^{k + 1}$ is the transport map from $p_{t_k}$ to $p_{t_{k + 1}}$. Then,
    \begin{equation}
        \label{eq:mmfm-action-gap}
        \lim_{|t_k - t_{k + 1}| \to 0} \int_0^1 \E_{p_t(x)} ||u_t(x) - \nabla s_t^*(x)||_2^2 \, dt = 0.
    \end{equation}

    Replacing $u_t$ with $\frac{x_{t+1}-x_t}{\tau}$, this shows that $\nabla V$ (Eqn. \ref{eq:jkonet-pot-loss}) regresses to the reference action in the limit.
\end{theorem}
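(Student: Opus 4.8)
The plan is to exploit the fact that, when $q(z)$ is glued from the deterministic optimal-transport couplings $(\mathrm{id}\times T_k^{k+1})_\# p_{t_k}^*$ between consecutive marginals, the multi-marginal construction decouples into $K$ independent OT-CFM problems, one per interval $[t_k,t_{k+1}]$ (by analogy with \citet[Proposition 2]{rohbeck2025modeling}, who treat the CFM case, as already noted in the text). On such an interval the marginalized pair $(p_t,u_t)$ is precisely the displacement (McCann) interpolation between $p_{t_k}^*$ and $p_{t_{k+1}}^*$: $p_t$ is the constant-speed $W_2$-geodesic joining them, $u_t=\nabla\phi_t$ is its curl-free velocity field, and $(p_t,u_t)$ solves $\partial_t p_t+\nabla\cdot(p_t u_t)=0$ on $[t_k,t_{k+1}]$ with these endpoints; in particular, by Benamou--Brenier, $\int_{t_k}^{t_{k+1}}\E_{p_t}\|u_t\|^2\,dt = W_2^2(p_{t_k}^*,p_{t_{k+1}}^*)/(t_{k+1}-t_k)$. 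Write $v_t^*:=\nabla s_t^*$ for the true velocity and $A:=\int_0^1\E_{p_t^*}\|v_t^*\|^2\,dt$, which is finite since $(p_t^*)$ is absolutely continuous in $W_2$; this is also where one needs the $p_{t_k}^*$ absolutely continuous, so that the Brenier maps $T_k^{k+1}$ exist.

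Next I would expand the square. Setting $I_1=\int_0^1\E_{p_t}\|u_t\|^2\,dt$, $I_2=\int_0^1\E_{p_t}\langle u_t,v_t^*\rangle\,dt$, and $I_3=\int_0^1\E_{p_t}\|v_t^*\|^2\,dt$, the integral in \eqref{eq:mmfm-action-gap} equals $I_1-2I_2+I_3$, and since this is nonnegative it suffices to show $\limsup(I_1-2I_2+I_3)\le 0$ as the mesh $\max_k|t_{k+1}-t_k|\to 0$. For $I_1$, summing the Benamou--Brenier identity gives $I_1=\sum_k W_2^2(p_{t_k}^*,p_{t_{k+1}}^*)/(t_{k+1}-t_k)$; bounding each $W_2(p_{t_k}^*,p_{t_{k+1}}^*)$ by the true-curve length $\int_{t_k}^{t_{k+1}}|p^*|'(r)\,dr$ and applying Cauchy--Schwarz yields $I_1\le\int_0^1|p^*|'(r)^2\,dr\le A$ for \emph{every} partition. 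For $I_2$, I would integrate the continuity equation on $[t_k,t_{k+1}]$ against the test function $s_t^*$, obtaining $\int_{t_k}^{t_{k+1}}\E_{p_t}\langle u_t,\nabla s_t^*\rangle\,dt=\E_{p_{t_{k+1}}^*}[s_{t_{k+1}}^*]-\E_{p_{t_k}^*}[s_{t_k}^*]-\int_{t_k}^{t_{k+1}}\E_{p_t}[\partial_t s_t^*]\,dt$ (using $p_{t_k}=p_{t_k}^*$ and $p_{t_{k+1}}=p_{t_{k+1}}^*$ at the knots); these telescope, and comparison with the true-flow energy identity $\tfrac{d}{dt}\E_{p_t^*}[s_t^*]=\E_{p_t^*}[\partial_t s_t^*]+\E_{p_t^*}\|\nabla s_t^*\|^2$ gives $I_2=A+\int_0^1\E_{p_t^*}[\partial_t s_t^*]\,dt-\int_0^1\E_{p_t}[\partial_t s_t^*]\,dt$. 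Finally $I_3=\int_0^1\E_{p_t}\|\nabla s_t^*\|^2\,dt$. Once we know $p_t\to p_t^*$, the correction term in $I_2$ vanishes and $I_3\to\int_0^1\E_{p_t^*}\|\nabla s_t^*\|^2\,dt=A$, so $I_2\to A$, $I_3\to A$, and hence $\limsup(I_1-2I_2+I_3)\le A-2A+A=0$.

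It remains to establish the uniform convergence $p_t\to p_t^*$ in $W_2$. For $t\in[t_k,t_{k+1}]$ the geodesic satisfies $W_2(p_t,p_{t_k}^*)\le W_2(p_{t_k}^*,p_{t_{k+1}}^*)$, so $W_2(p_t,p_t^*)\le W_2(p_{t_k}^*,p_{t_{k+1}}^*)+W_2(p_{t_k}^*,p_t^*)$; since an absolutely continuous curve on $[0,1]$ is uniformly continuous for $W_2$, both terms tend to $0$ uniformly in $t$ as the mesh shrinks. Together with continuity and boundedness of $x\mapsto\|\nabla s_t^*(x)\|^2$ and $x\mapsto\partial_t s_t^*(x)$ on the relevant compact set, weak convergence of $p_t$ to $p_t^*$ then upgrades to convergence of the time-integrated expectations appearing in $I_2$ and $I_3$, which closes the estimates.

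The main obstacle is the analytic bookkeeping around a genuinely time-dependent $s_t^*$: one needs enough regularity in $(t,x)$ to use $s_t^*$ as a test function against the continuity equation, to differentiate $t\mapsto\E_{p_t^*}[s_t^*]$, and to pass to the limit in the correction term. In the setting actually used here this is immediate, because $s_t^*\equiv\gL$ is time-independent and $C^1$ on the compact set $\Omega$ of Theorem \ref{thm:ce}: then $\partial_t s_t^*\equiv 0$, so $I_2=\E_{p_1^*}[\gL]-\E_{p_0^*}[\gL]=A$ with no limit required, the energy identity reduces to $\tfrac{d}{dt}\E_{p_t^*}[\gL]=\E_{p_t^*}\|\nabla\gL\|^2$, and $I_3\to A$ by dominated convergence using boundedness of $\|\nabla\gL\|$ on $\Omega$. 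I would therefore prove the general statement as above and note this specialization. A secondary point to verify carefully is that the OT-CFM marginal velocity on each interval really coincides with the geodesic velocity field (no mass ``crosses''), which is McCann's interpolation theorem and the other place where absolute continuity of the $p_{t_k}^*$ is used.
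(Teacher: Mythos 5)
Your proposal is correct in substance, but it follows a genuinely different route from the paper. The paper's proof never expands the square: it approximates $\nabla s_t^*$ by the difference quotients of optimal transport maps between nearby true marginals (via Prop.\ 8.4.6 of Ambrosio--Gigli--Savar\'e), writes the MMFM marginal velocity $u_t$ through the conditional decomposition and bounds $\E_{p_t}\|u_t-d_t\|^2$ by Jensen, proves that the piecewise reference path is absolutely continuous in $W_2$ and converges narrowly to $p_t^*$ as the mesh shrinks, upgrades narrow convergence to $L^2(p_t)$-convergence of the transport maps via stability of optimal transport (Villani Cor.\ 5.23 plus a subsequence/dominated-convergence argument), and closes with a triangle inequality between $u_t$, the discrete velocity $d_t$, and $\nabla s_t^*$. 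You instead expand $\int\E_{p_t}\|u_t-\nabla s_t^*\|^2$ into $I_1-2I_2+I_3$, bound the path energy $I_1$ by the true action through Jensen/Benamou--Brenier and Cauchy--Schwarz on the metric derivative, compute the cross term $I_2$ by testing the continuity equation with $s_t^*$ and telescoping over the knots, and pass to the limit in $I_2,I_3$ using uniform $W_2$-convergence $p_t\to p_t^*$; nonnegativity of the integrand then turns a $\limsup\le 0$ into the limit. What each buys: your argument avoids the two-scale structure (partition mesh versus the auxiliary step $h$), the AGS difference-quotient characterization, and the OT-map stability lemma entirely, and it makes the mechanism transparent (the piecewise path can never have less energy than the true flow, and the cross term telescopes exactly); the price is more regularity demanded of $s_t^*$ itself --- enough smoothness in $(t,x)$ to use it as a test function and to differentiate $t\mapsto\E_{p_t^*}[s_t^*]$ --- which, as you note, is harmless in the paper's intended setting $s_t^*\equiv\gL$, $C^1$ on compact $\Omega$, whereas the paper's proof shifts the regularity burden onto the transport maps and their limits. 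Both proofs share the same idealizations: the knot marginals are identified with the true ones ($p_{t_k}=p_{t_k}^*$, i.e.\ the Gaussian smoothing is ignored at the knots, and your $I_1$ bound survives the smoothing anyway by Jensen), and the coupling written as $p_{t_k}\otimes(T_k^{k+1})_\#p_{t_k}$ is read as the deterministic optimal coupling $(\mathrm{id}\times T_k^{k+1})_\#p_{t_k}$, exactly as in the paper's own proof. One small imprecision worth fixing: finiteness of $A=\int_0^1\E_{p_t^*}\|\nabla s_t^*\|^2dt$ does not follow from absolute continuity of the curve alone (that only controls $\int_0^1|p^*|'(t)^2dt$ when the curve is in $AC^2$ and $\nabla s_t^*$ is the tangent field); either assume it, or invoke the boundedness of $\|\nabla\gL\|$ on $\Omega$ in the specialization you already describe.
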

}

\section{Methods}
\label{sec:methods}

\subsection{Architectural modules}
\label{sec:methods-arch}
We describe the components of our approach below and leave more details to Appendix \ref{app:arch}, \ref{app:experimental-details}. Throughout, we use the \fsl- prefix to denote our methods, e.g. \fsl-CFM to denote conditional flow matching. Our framework is designed to be modular, with different components that can be instantiated in various ways. In this work, we prioritize simplicity in order to highlight the generative framework and the proposed reward fine-tuning mechanism. Figure \ref{fig:pipeline} provides an example of how these components connect.

\paragraph{Weight encoder.} Due to the often intractable size of weight space, it is sometimes necessary for modeling to happen in latent space (see App. \ref{app:practice-and-scaling} for scaling remarks). We justify this design by appealing to work on the Lottery Ticket Hypothesis \citep{frankle2018the, zhang2021validatinglottery, liu2024surveylotterytickethypothesis} as well as the body of work on pruning \citep{cheng2024asurveyonpruning}, which suggests that, like natural data, neural networks live on a low-dimensional manifold within its ambient space. There are a variety of encoders to choose from, such as the variational autoencoder (VAE) \citep{kingma2022autoencodingvariationalbayes}, and specialized encoders for neural network parameters \citep{kofinasGraphNeuralNetworks2024, putterman2024learningloras, schuerholt2024sane}. As we are mostly focused on the generative aspect, we use the simple VAE following \citet{soroDiffusionbasedNeuralNetwork2024}.

\paragraph{Generative meta-model.} The backbone of our meta-learning framework is a CFM following \citet{tongImprovingGeneralizingFlowbased2024}, or MMFM that matches piecewise-linear interpolants; see App. \ref{app:metric-learning} for a detailed discussion of interpolants. We also experiment with using the JKOnet${}^*$ method in a few tests; see details in App. \ref{app:arch}. Preliminary experiments were also done with learned interpolants such as Metric Flow Matching, but we found them to be unstable, likely due to the sparsity of data and the large ambient space favoring simpler interpolants. Exploiting the flexibility of FM to use a non-Gaussian prior, we use the Kaiming uniform or normal initializations \citep{he2015delvingdeep} as the source $p_0$; see App. \ref{app:futher-exps} for an ablation.

\paragraph{Reward fine-tuning.} FM models lend themselves to the recently proposed reward fine-tuning method, based on the adjoint ODE \citep{domingo-enrichAdjointMatchingFinetuning2024}, which casts stochastic optimal control as a regression problem. This allows us to tune pre-trained flow meta-models for downstream applications, exemplified in this work by detecting harmful covariate shifts, and improved generative performance (see App. \ref{app:reft-results}). Specifically, this method modifies the base generative distribution $\hat p_1^{\text{base}}$ to generate the reward-tilted distribution $p^*_1(x) \propto p_1^{\text{base}}(x) \exp(r(x))$ via the \textit{Adjoint Matching} (AM) algorithm. Naturally, in our setting, we suppose $p_1^{\text{base}}$ is obtained from meta-training and governs classifiers that predict on $\train_1$, but we wish to modify the meta-model to generate base models that predict on $\train_2$. Therefore, we set the reward $r(X_1) := -\Ls_2(X_1)$ where $\Ls_2$ is a loss on $\train_2$ such as cross-entropy and proceed with AM. See App. \ref{app:reft} for further details.

\subsection{Detecting harmful covariate shifts}
\label{sec:methods-cdc}

\paragraph{Training CDCs.} Continuing the exposition in Section \ref{sec:prelims}, we specify the training regiment of CDCs. Let $g(\cdot)$ represent one CDC and $f(\cdot)$ the base classifier; further, let $\tP = \{(x_i, y_i)\}_{i = 1}^n$ be samples from the generalization set $\gR$ and $\tQ = \{\tilde x_i\}_{i =1}^m$ from the unknown distribution. Our objective is for $g$ to maintain performance on $\tP$, but to disagree with $f$ on $\tQ$. Naturally, we use the cross-entropy loss $\ell_{ce}$ on $\tP$, but on $\tQ$, \citet{ginsberg2023harmfulcov} introduces the \textit{disagreement-cross-entropy}: \(
\ell_{dce}(\hat y, f(x_i)) = \frac{1}{1-N} \sum_{c = 1}^N \mathbf{1}_{f(x_i) \neq c} \log (\hat y_c),
\)
where $N$ denotes the total number of classes. We combine these objectives by minimizing the CDC loss:
\begin{multline}
    \label{eq:cdc-loss}
    \ell_{cdc}(\tP, \tQ) := \frac{1}{|\tP \cup \tQ|} \biggl( \sum_{(x_i, y_i) \in \tP} \ell_{ce}(g(x_i), y_i) + \\
    \hspace{1em}\lambda \sum_{\tilde x_i \in \tQ} \ell_{dce}(g(\tilde x_i), f(\tilde x_i))
    \biggr)
\end{multline}
To test for shift, Detectron compares $g$ trained with $\tQ$ sampled from the unknown distribution ($g_{\tQ}$) against $\tQ = \tP^*$ ($g_{\tP}$), i.e. samples from the generalization set. In particular, the disagreement rate or the class entropy for each case is obtained and hypothesis tested. In both cases, the disagreement and entropy are higher if $\tQ$ represents a significant shift.

\paragraph{Motivation.} The problem of detecting covariate shift is not just about the data---a lot of modern neural networks are robust to such changes. The essence of the problem is whether or not \textit{the classifier weights} required to predict on $\tQ$ differs from the current classifier, motivating a method that is sensitive to changes in the weights required to predict on a new set. Building on the finding that the support of CNN3 classifiers is narrow (see Sec. \ref{sec:reft}) and the fact that the reward-tilted distribution (obtained from reward fine-tuning) has the same support, if the ideal classifier required to predict on a new dataset lies far outside of the original support, then we would expect a noticeable performance difference after reward fine-tuning than if it were close to the original support (see corruption experiments in App. \ref{app:reft-results}).

\paragraph{Meta-detectron.} Our approach, termed \textit{meta-detectron}, builds on reward fine-tuning by adjoint matching. We start by meta-training a \fsl-CFM meta-model to learn classifier distributions on each of the datasets (this is identical to the unconditional generation setup). Next, we reward fine-tune, maintaining the procedure of sampling from the meta-model at each iteration to compute the reward, but now the reward function is $r(X_1) = -\ell_{cdc}(\tP, \tQ; X_1)$, where $X_1$ serves the role of $g$, and the original meta-model generates the base classifier $f$ in Eqn. \ref{eq:cdc-loss}. As the method requires training $g_{\tP}$ and $g_{\tQ}$, we likewise fine-tune two different meta-models depending on the disagreement set, and compare the disagreement rate and entropy of the generated $g_{\tP}$ and $g_{\tQ}$. Returning to our motivation, it is more likely for the support to lie closer to classifiers that disagree on an out-of-distribution set $\tQ$ than those disagreeing on $\tP^*$. That being said, we expect the disagreement rate to be more conservative overall due to the tightness of the support.

\section{Experiments}
\label{sec:experiments}
\begin{table}
\caption{Best validation accuracy of unconditional \fsl{} CNN3 generation for various datasets. \textit{Original} denotes base models trained conventionally by SGD and \textit{p-diff} those generated with p-diff \citep{wangNeuralNetworkDiffusion2024}. MMFM($k$) and JKO($k$) indicates the number of marginal distributions in addition to $p_0$ and $p_1$.}
\label{tab:uncond}
\tablestyle{2pt}{1.0}
\begin{center}
\begin{adjustbox}{max width=0.4\textwidth}
\begin{tabular}{l @{\hspace{20pt}}cccc}
\toprule
&  {CIFAR100} & {CIFAR10} & {MNIST} & {STL10} \\
\midrule
Original & 25.62 & 63.38 & 98.93 & 53.88 \\
p-diff & 25.99 & 63.37 & 98.93 & 53.86 \\
\fsl-CFM w/ VAE & 26.01 & 64.32 & 98.91 & 53.50 \\
\fsl-CFM        & 25.31 &   62.52 &   98.52 &   53.49 \\
\fsl-MMFM(2)     &  21.16 &   63.35 &   98.53 &   53.20  \\
\fsl-MMFM(3)     &  24.53 &   63.34 &   98.62 &   53.53 \\
\fsl-MMFM(4)     &  22.89 &   61.33 &   98.39 &   53.08 \\
\fsl-JKO(3)     & 24.84 &   62.60 &   98.59 &   53.44 \\
\fsl-JKO(4)     &  24.97 &   63.35 &   98.57 &   53.21 \\
\bottomrule
\end{tabular}
\end{adjustbox}
\vspace{-1em}
\end{center}
\end{table}

\begin{table}
\vspace{-1em}
\caption{Mean validation accuracy of top-5 \fsl{} model retrievals. A single meta-model is used for all base datasets, with a conditioning signal obtained from image samples used to distinguish between each set.}
\label{tab:model-retrieval}
\tablestyle{2.0pt}{1.1}
\begin{center}
\begin{adjustbox}{max width=0.4\textwidth}
\begin{tabular}{lcccc}
\toprule
 &  \multicolumn{1}{c}{{CIFAR10}} & \multicolumn{1}{c}{{STL10}} & \multicolumn{1}{c}{{MNIST}} & \multicolumn{1}{c}{{FMNIST}}\\
\midrule
Original & 63.38 & 53.88 & 98.93 & 89.77 \\
\fsl-CFM & 62.89 & 53.47 & 98.69 & 90.24 \\
\fsl-CFM w/ VAE & 62.79 & 53.41 & 98.54 & 90.59 \\
\fsl-MMFM & 53.45 & 49.62 & 92.86 & 76.55 \\
\fsl-MMFM w/ VAE & 63.87 & 52.86 & 98.36 & 89.73 \\
\fsl-JKO\footnotemark[1] & 62.04 & 51.16 & 97.44 & 87.75 \\
\bottomrule
\end{tabular}
\vspace{-2em}
\end{adjustbox}
\end{center}
\end{table}
\footnotetext[1]{JKOnet expects the trajectory to evolve via gradient flow. This is not guaranteed in the latent space, hence we only show the un-encoded variant.}

First, we confirm various properties that are to be expected of weight generation models. Next, we explore reward fine-tuning courtesy of adjoint matching, finishing with an application to detect harmful covariate shifts. Further experiments and details may be found in Apps. \ref{app:futher-exps} and \ref{app:experimental-details} respectively.

\subsection{Generative modeling desiderata}

\paragraph{Unconditional generation.} We first evaluate the modeling capacity of the flow meta-model. The target distribution $p_1$ is defined by training base models on CIFAR10, CIFAR100, STL10, and MNIST, and saving weight checkpoints across 100 epochs. For large models, we may choose to generate only a subset of the weights. For large models, we generate only a subset of weights: batch norm parameters for ResNet-18 \citep{He2015DeepRL}, ViT-base \citep{dosovitskiy2021animage}, and ConvNext-tiny \citep{liu2022convnet2020s}, and the full medium-CNN (CNN3) \citep{schurholtModelZoosDataset2022}. A separate meta-model is trained per dataset and validated by reconstructing its base model for test classification. Focusing on CNN3, Table~\ref{tab:uncond} shows performance matching conventionally trained models and p-diff \citep{wangNeuralNetworkDiffusion2024} (extra results in Table~\ref{tab:uncond-app}). Figure~\ref{fig:traj-losses} further shows that methods using trajectory information (MMFM and JKO) yield faster validation loss decrease over inference steps, consistent with gradient descent converging rapidly toward the final loss. For later experiments we therefore restrict to MMFM(3) and JKO(4) (see App.~\ref{app:experimental-details} for computational details).

\begin{figure}
    \centering
    \includegraphics[width=0.49\linewidth]{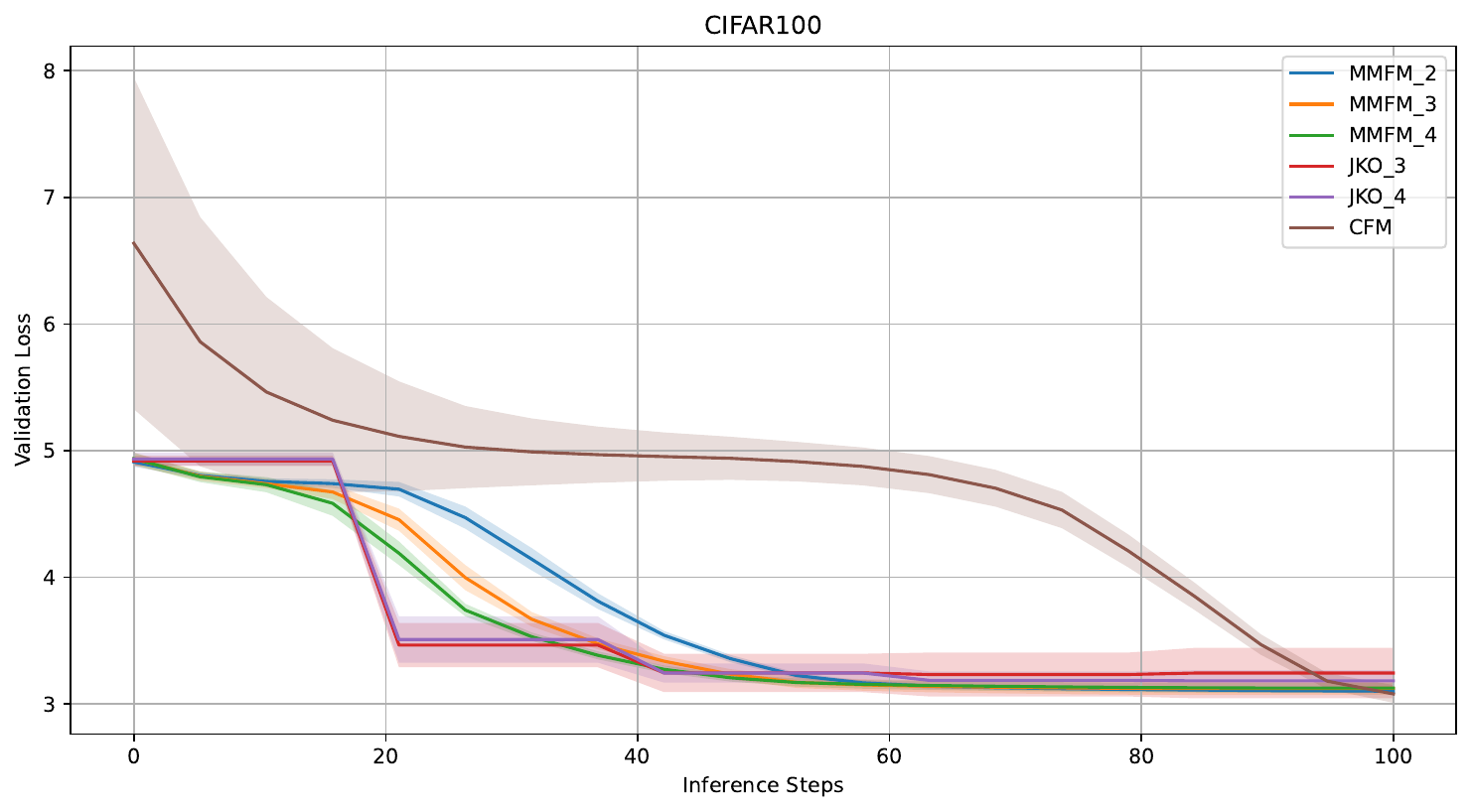}
    \includegraphics[width=0.49\linewidth]{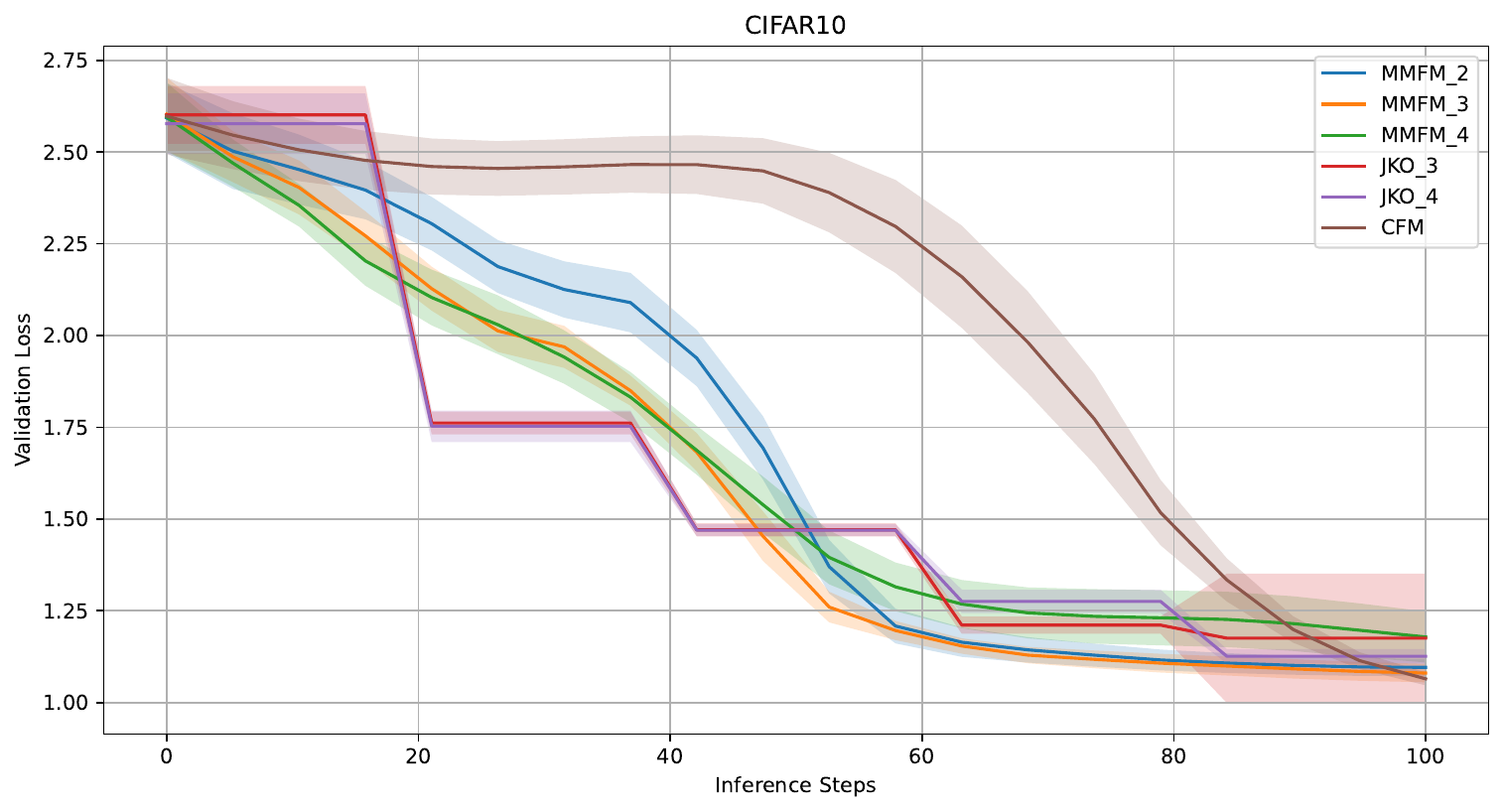}
    \includegraphics[width=0.49\linewidth]{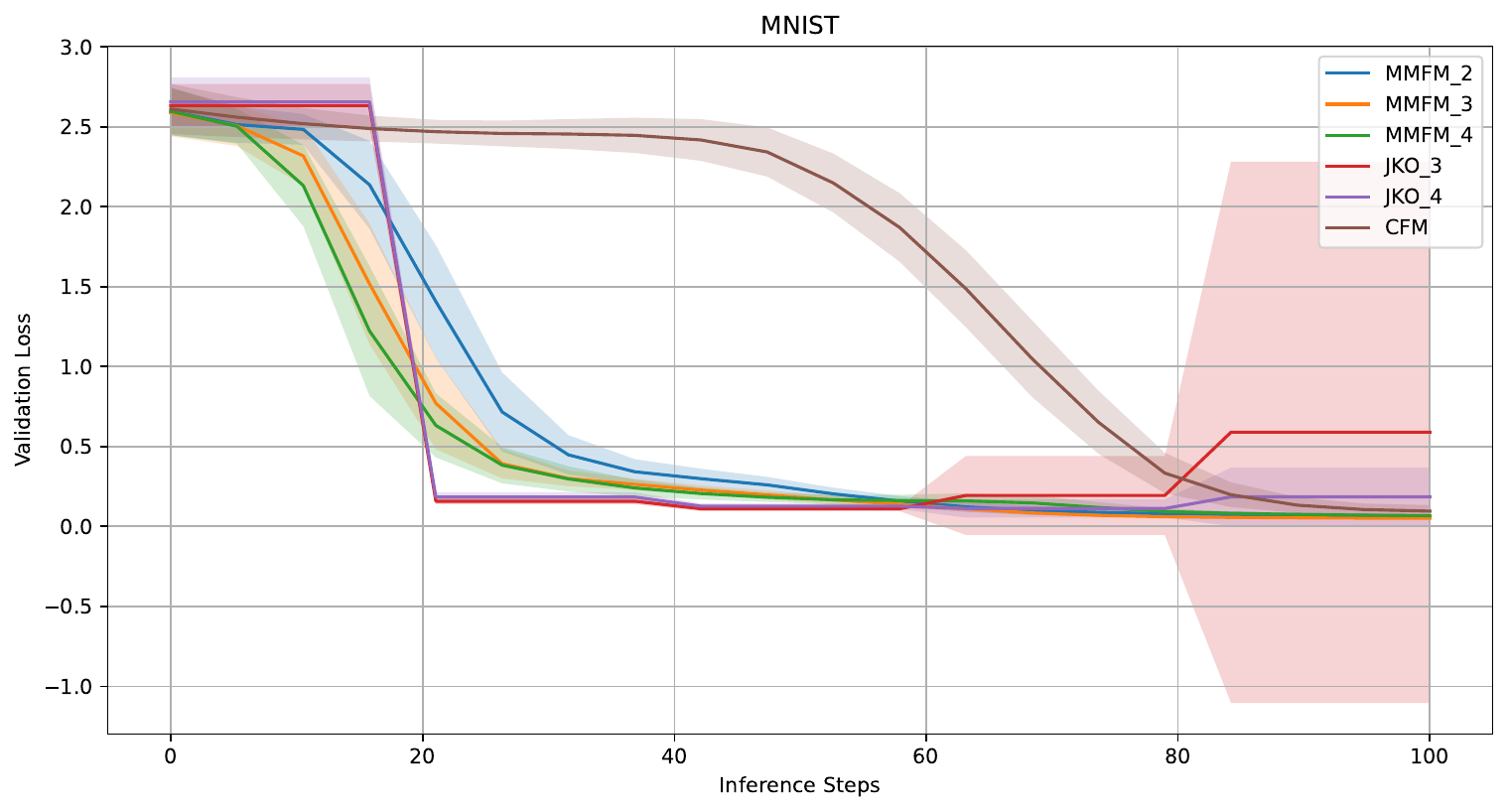}
    \includegraphics[width=0.49\linewidth]{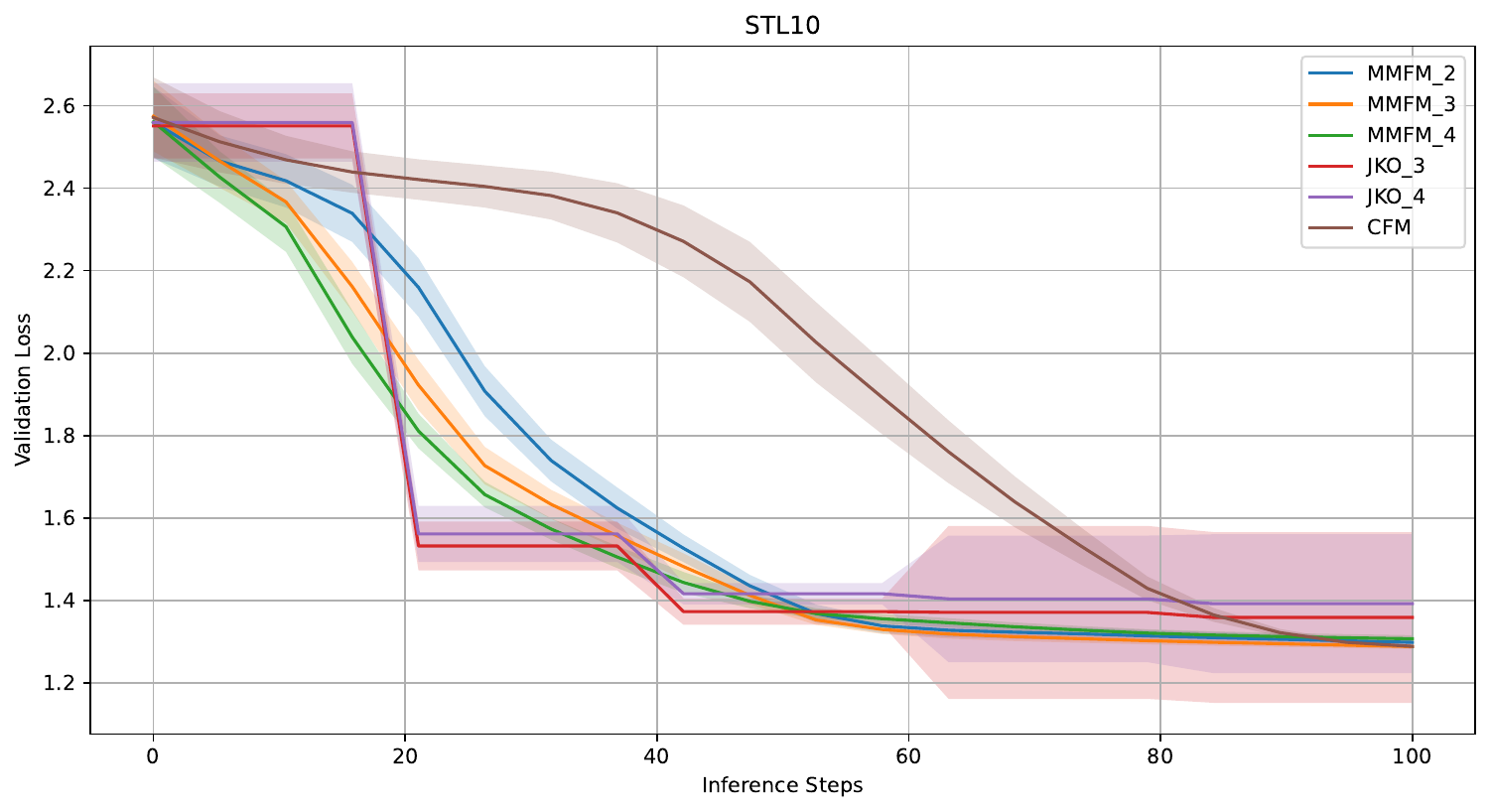}
    \vspace{-1.4em}
    \caption{Base model validation loss over the course of inference for various \fsl{} methods. The plots were computed on 20 out of 100 intermediate timepoints for MMFM and CFM, but restricted by design to $k$ timepoints for JKO($k$). MMFM\_k refers to MMFM with $k$ intermediate marginal distributions (distributions in addition to $p_0$ and $p_1$) and likewise for JKO.}
    \label{fig:traj-losses}
\end{figure}

\begin{figure}
    \vspace{-1em}
    \centering
    \includegraphics[width=0.49\linewidth]{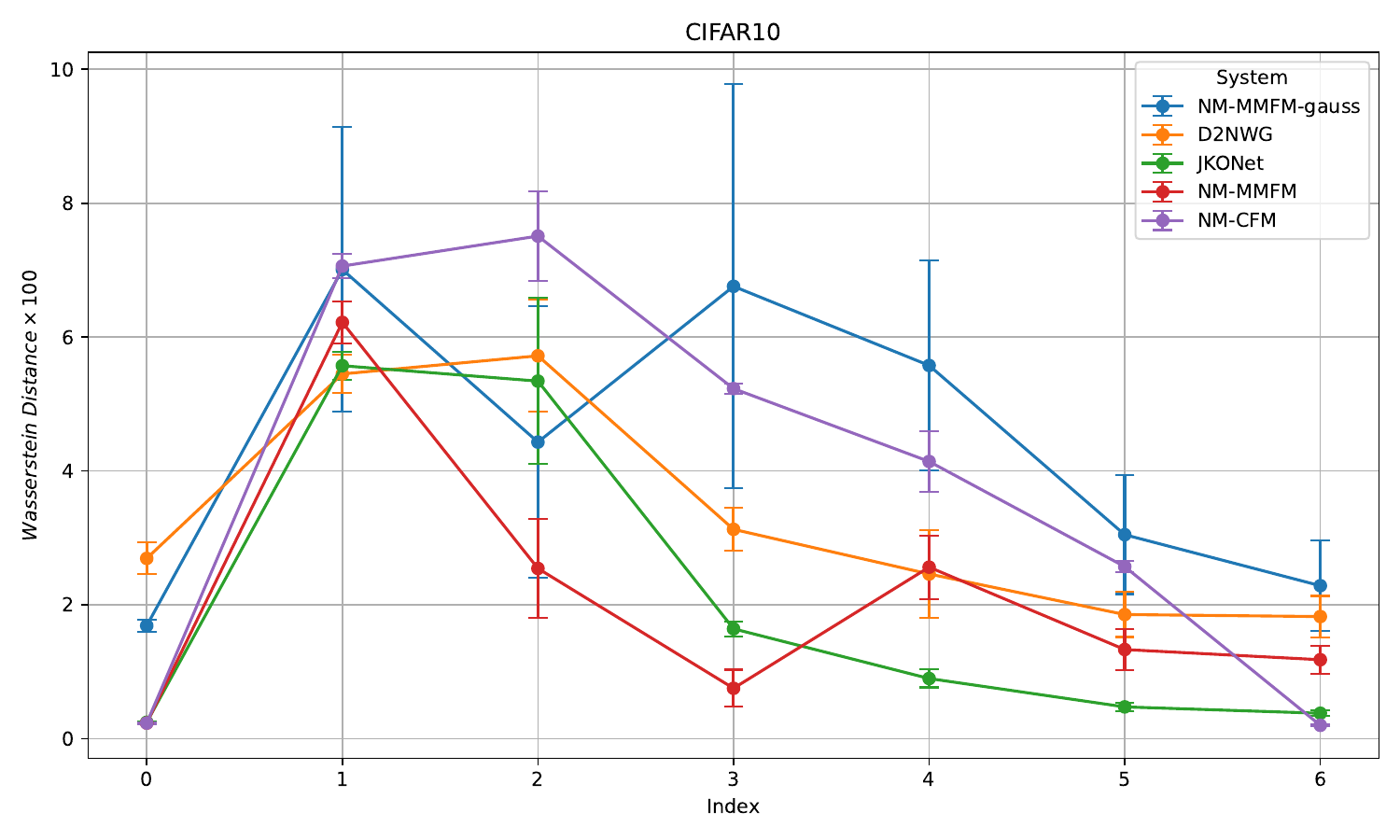}
    \includegraphics[width=0.49\linewidth]{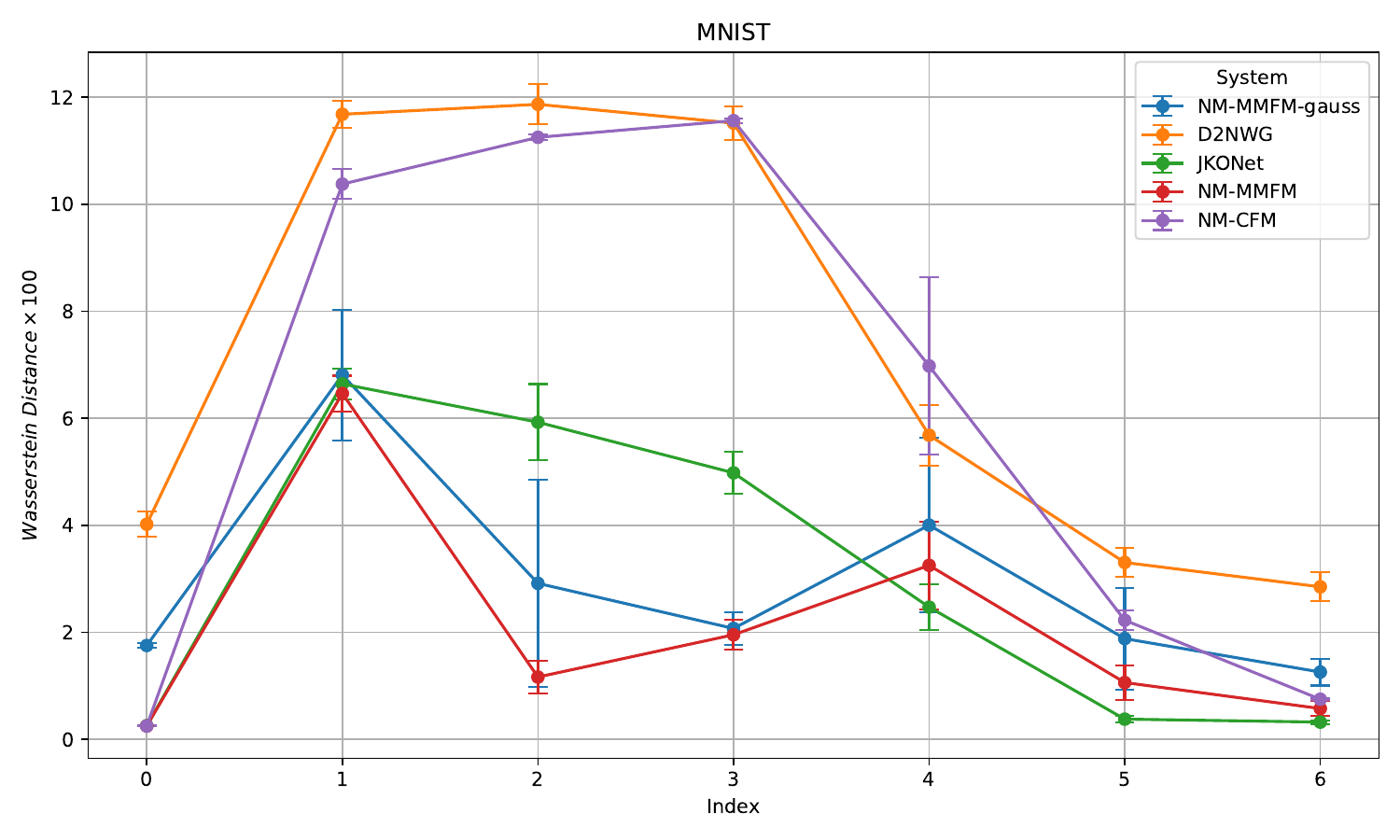}
    \vspace{-1.4em}
    \caption{Mean $W_1$-distance ($\times 100$) between reference and generated intermediate marginals over 5 seeds of unconditional generation. The horizontal axis corresponds to increasing indices of the intermediate marginals, i.e. $k$ in $p_{t_k}$ where $t_0 = 0, \, t_6 = 1$. The plots also show the effect of using a Gaussian prior with MMFM (denoted \fsl-MMFM-gauss), excluding \fsl-CFM-gauss due to its large $W_1$ deviation.}
    \label{fig:w1-zoomed-tests}
    \vspace{-1em}
\end{figure}

\paragraph{Trajectory modeling.}
In this experiment, we evaluate the ability of different approaches to model the weight trajectory. As decided above, we used \fsl-MMFM(3) and \fsl-JKO(4) as representatives for this method. Moreover, in the interest of fairness, we divide the trajectory into 5 buckets, and so the MMFM and JKO methods would need to interpolate between training distributions. Our baseline is D2NWG \citep{soroDiffusionbasedNeuralNetwork2024} where the VAE is trained on full trajectory weights at each batch iteration. See Figure \ref{fig:w1-zoomed-tests} for results and App. \ref{app:futher-exps} for more results, including an investigation of parameter symmetries. Interestingly, we find that the Wasserstein-1 ($W_1$) distance of the generated trajectory to be consistently lower in D2NWG vs. \fsl-CFM, but both methods lag behind MMFM and JKO which explicitly models the weight trajectory. The over-performance of MMFM and JKO is to be expected, and we suspect the D2NWG performance is due to latent space training on the full trajectory data; that is, even if the interpolated weights do not follow the expected trajectory, it still lands on the data manifold. See Figure \ref{fig:vae-latent-space} for a mock illustration.

\paragraph{Model retrieval.} We perform model retrieval to test whether the meta-model can distinguish weights of the base model given conditioning samples from the dataset the base model was trained on. The base model is the same CNN3 model and we obtain weight checkpoints trained on MNIST, Fashion-MNIST (FMNIST), CIFAR10, and STL10 across 50 epochs of conventional training. Unlike in the previous test, we will train just \textit{one} meta-model conditioned on context samples from their respective training sets passed through a CLIP \citep{Radford2021LearningTV} encoder (see Figure \ref{fig:f2sl}). At validation, we pass in a random support sample and generate the full CNN3. Table~\ref{tab:model-retrieval} shows that mean top-5 validation accuracies largely match base models, confirming capacity for conditional generation. We note that \fsl-MMFM underperforms \fsl-CFM, reflecting the higher complexity of its target vector field. Larger meta-models improve accuracy (App.~\ref{app:experimental-details}), but we retain these results to highlight the tradeoff.

\paragraph{Downstream initialization.}
Next, we repeat the model retrieval test, but instead we obtain weights across 10 epochs of conventional training. We opted for the encoded runs if possible for efficiency. The generated weights are used as initialization before fine-tuning another 20 epochs. As shown in Table \ref{tab:ft-retrieval}, our initialization enjoys faster convergence, even for corrupted datasets, highlighting generalization capability.

\subsection{Reward fine-tuning}
\label{sec:reft}

In this section, we investigate the use of reward fine-tuning for shifting a distribution of classifier weights. Detail of our modifications, and its extension to multiple marginals, can be seen in App. \ref{app:reft}.

\paragraph{Support of classifier weights.} Notably, the method of reward fine-tuning \textit{cannot} be applied for arbitrary meta-model fine-tuning since $\text{supp } p_1^{\text{ft}} = \text{supp } p_1^{\text{base}}$. For some loss $\gL$, a soft (due to discretization and random sampling) loss lower bound for weights $p_1^{\text{ft}}$ is $\arginf_\alpha \{\alpha > 0: \text{supp } p_1^{\text{base}} \cap \{x : \gL(x) \leq \alpha\} \neq \emptyset\}$. 
We stress that this property of the support is a function of both the downstream data \textit{and} the model architecture. Indeed, due to the small size of the CNN3, the parameters that predict on different datasets e.g. CIFAR10 and STL10, will differ considerably, but this may not be the case for larger neural networks which possess a larger generalization set. We hypothesize: \textit{the support set of CNN3 weights trained for different datasets are narrow and mostly disjoint, thus, small changes in the training data will noticeably affect the support w.r.t. validation accuracy.} Note how this ties back to the motivation of Meta-Detectron (Sec. \ref{sec:methods-cdc}).

\paragraph{Results.}
We found experimental evidence to support this hypothesis, but also to suggest that reward fine-tuning goes a long way towards improving validation accuracy on out-of-distribution data obtained by increasing image corruption. We defer results and discussion to App. \ref{app:reft-results}, specifically Tables \ref{tab:corrupt-ft} and \ref{tab:corrupt-ft-extra}. Given this finding, we use it to approach the problem of harmful covariate shifts.

\begin{table}[h]
\caption{TPR@5 and AUROC for detection of harmful covariate shift on CIFAR10.1 and Camelyon17. We test on both the disagreement rate (DAR) and the entropy, setting $\lambda = \kappa/(|\tQ| + 1)$. See App. \ref{app:meta-detectron-training} for details on choosing $\kappa$ and extra results; the runs here vary $\kappa$ between $|\tQ|$. The best results are \textbf{bolded}.}

\label{tab:meta-detectron}
\tablestyle{3pt}{1.0}
\begin{center}
\begin{adjustbox}{max width=\columnwidth}
\begin{tabular}{l @{\hspace{10pt}}ccc ccc}
\toprule
\textbf{TPR@5} & \multicolumn{3}{c}{CIFAR10} & \multicolumn{3}{c}{Camelyon}\\
\cmidrule(lr){2-4} \cmidrule(lr){5-7}
 $|\tQ|$ & 10 & 20 & 50 & 10 & 20 & 50 \\
\midrule
Det. (DAR) & 0 & 0 & \pmval{.10}{.10} & \pmval{.10}{.10} & \pmval{.20}{.13} & \pmval{.50}{.17} \\

Meta-det. (DAR) & \pmval{.53}{.13} & \pmval{.47}{.13} & \pmval{.53}{.13} & \pmval{.73}{.12} & \textbf{\pmval{.40}{.13}} & \textbf{\pmval{.68}{.10}} \\

Det. (Ent) & \pmval{.60}{.16} & \pmval{.10}{.10} & \pmval{.10}{.10} & 0 & 0 & 0 \\
Meta-det. (Ent) & \pmval{.47}{.13} & \textbf{\pmval{.93}{.07}} & \textbf{1.00} & \textbf{1.00}  & 0 & \pmval{.24}{.09} \\

\toprule

\textbf{AUROC} & \multicolumn{3}{c}{CIFAR10} & \multicolumn{3}{c}{Camelyon}\\
\cmidrule(lr){2-4} \cmidrule(lr){5-7}
$|\tQ|$ & 10 & 20 & 50 & 10 & 20 & 50 \\
\midrule
Det. (DAR) & 0.480 & 0.495 & 0.665 & 0.665 & 0.750 & 0.875 \\
Meta-det. (DAR) & \textbf{0.876} & 0.838 & 0.900 & 0.867 & 0.760 &  \textbf{0.930} \\

Det. (Ent) & 0.775 & 0.740 & 0.785 & 0.490 & 0.445 & 0.660\\
Meta-det. (Ent) & 0.809 &\textbf{0.987} & \textbf{1.000} & \textbf{1.000}  & \textbf{0.836} & 0.755 \\

\bottomrule
\end{tabular}
\end{adjustbox}
\vspace{-1em}
\end{center}
\end{table}

\subsection{Detecting harmful covariate shifts}

We evaluate Meta-detectron (Sec. \ref{sec:methods-cdc}) on CNN3 with experiments following \citet{ginsberg2023harmfulcov} on CIFAR10.1 \citep{recht2019imagenet}, where shift comes from the dataset pipeline, and Camelyon17 \citep{veeling2018rotation}, which consists of histopathological slides from multiple hospitals. Table \ref{tab:meta-detectron} shows the \textit{True Positive Rate at 5\% Significance Level (TPR@5)} and \textit{AUROC} aggregated over 10 randomly chosen seeds for sampling $\tP^*$ and $\tQ$ of varying sample sizes. In addition, we ablated over the weight $\lambda$; see App. \ref{app:meta-detectron-training} for details and further results. Compared to the original tests \citep[Table 1]{ginsberg2023harmfulcov} on ResNet-18, we observe that covariate shift is highly architecture dependent. This is expected as CNN3 underfits CIFAR10 ($\sim 63\%$ validation accuracy). Our approach accounts for this as the base classifiers are generated directly by the fine-tuned meta-models. We also observe--though not shown--lower disagreement rates overall, which pays off in the TPR@5 as the $\tP^*$ disagreement rates are close to zero in all cases, and confirms the conservative nature of our method. Importantly, we also observe in Table \ref{tab:meta-detectron-val-acc} that the validation accuracy on $\tP$ is mostly unchanged. Regarding meta-training behavior, Figure \ref{fig:auroc-and-loss} shows that the AUROC increases sharply early in the reward fine-tuning phase, requiring only about 50 batch iterations to reach its peak. This coincides with a marked decrease in $\ell_{cdc}$. However, we also note some instability in the AUROC throughout training, particularly in the Camelyon experiments, where fluctuations are more pronounced.

\begin{figure}
    \centering
    \includegraphics[width=0.49\linewidth]{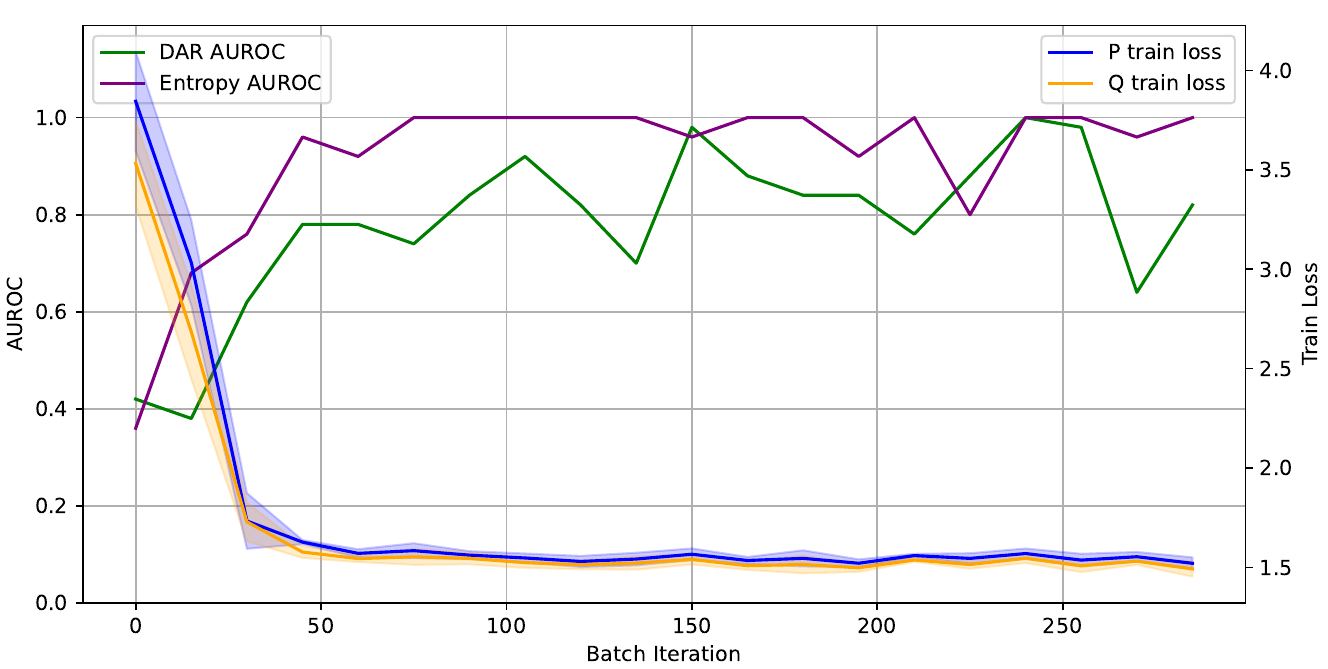}
    \includegraphics[width=0.49\linewidth]{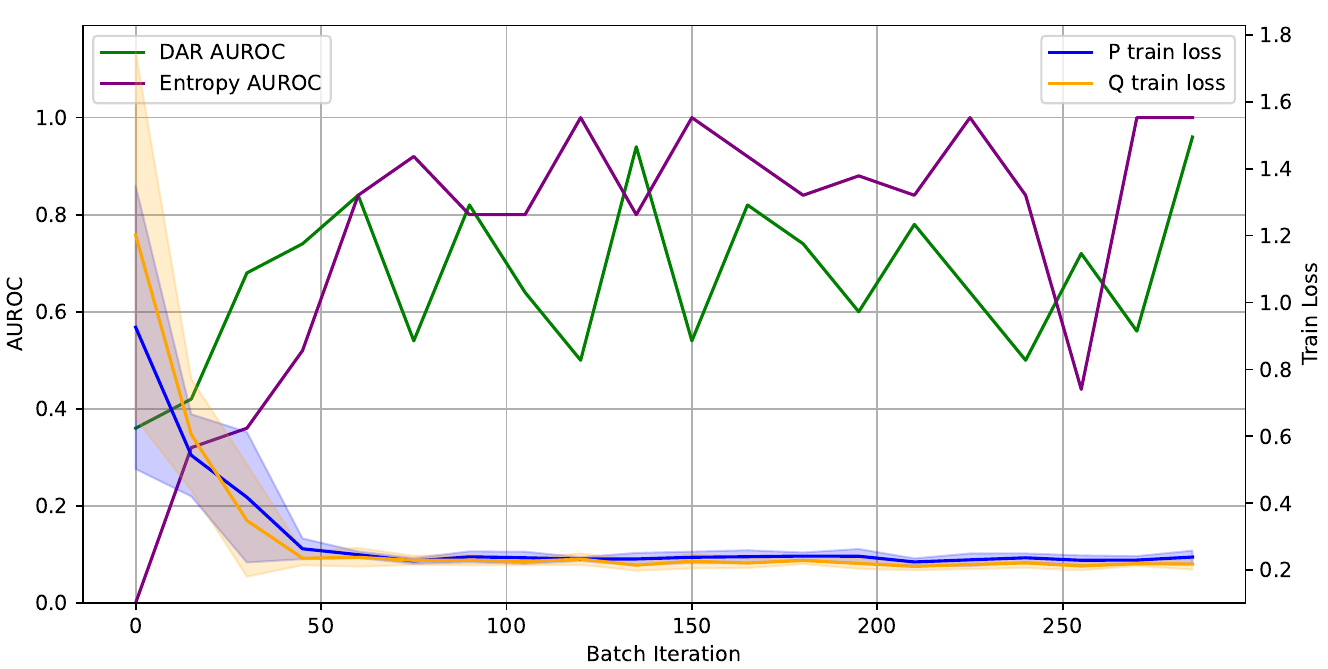}
    \vspace{-1em}
    \caption{Plots illustrating how AUROC and $\ell_{cdc}$ evolves over meta-detectron training iterations for CIFAR10 and Camelyon17 when $|\tQ|=20$. See App. \ref{app:meta-detectron-training} for more figures.}
    \label{fig:auroc-and-loss}
\end{figure}

\section{Conclusion}
\label{sec:discussions}
In this work, we have provided a preliminary investigation of the latest dynamical generative models for weight generation with applications to covariate shift detection. Due to the large size of modern neural network architectures, limited resources constrain our study to architectures with $<10^6$ parameters. To address this concern, as well as training dataset diversity and the lack of experiments incorporating stochastic weight evolution, future research directions include exploration of equivariant architectures to reduce dimensionality of weight space, and incorporating Schr\"odinger bridge matching. Moreover, the methods here open up a plethora of other applications. For instance, we may experiment with  \textbf{1)} more traditional meta-learning tests such as zero- and few-shot learning; \textbf{2)} model merging by superposition of the inference ODE/SDE \citep{skreta2025superpositiondiff}; or \textbf{3)} network constrained problems such as generating binary neural networks.

\subsubsection*{Acknowledgments}
The authors would like to thank Lazar Atanackovic, Kirill Neklyudov, and Kacper Kapusniak for valuable
discussions on methodology and implementation. We are also grateful to Soro Bedionita for assistance
in resolving issues related to the D2NWG implementation. Special thanks go to Edwin Chacko, Vedant
Swamy, and Ashwin Santhosh—developers in the UTMIST project group where this work was originally
conceived—for their early collaboration.

{
\bibliography{./main}
\bibliographystyle{unsrtnat}
}

%%%%%%%%%%%%%%%%%%%%%%%%%%%%%%%%%%%%%%%%%%%%%%%%%%%%%%%%%%%%
% \section*{Checklist}
% \input{sections/checklist}

\clearpage
\appendix
\thispagestyle{empty}

% Supplementary material: To improve readability, you must use a single-column format for the supplementary material.
\onecolumn
\aistatstitle{Flows and Diffusions on the Neural Manifold: \\
Supplementary Materials}

\section*{Appendix}
This appendix consist of details left out in the main text. First, we complete the proofs of the results in the text. Next, we perform a more comprehensive review of the related literature. Following, we fill in a few technical results that were promised in the main text, before closing with architecture and training settings.

\section{Further Theory}
\label{app:proofs}

\subsection{Proof of Theorem \ref{thm:ce}}
\mybox{
\textbf{Theorem \ref{thm:ce}}:

Let $\theta_0 \sim p_0$ be the initialized network parameters residing on $\Omega \subset \R^p$ a compact set. Suppose that $\gL : \Omega \to \R$ is $C^1$ and the gradient descent curve $(\theta_t)_{t \geq 0}$ reside in $\Omega$. Define $p_t = \mathrm{Law}(\theta_t)$ and further assume $p_t > 0$ a.e., then it satisfies the continuity equation Eqn. \ref{eq:ce}.
}
\paragraph{Remark.} The idea is to view GD as an iterated minimization scheme on $\gP(\Omega)$ with functional $\gF(p_t) = \int_\Omega\gL(x) \, d p_t(x)$. Alternatively, this may be viewed as the necessary first-order optimality conditions of a JKO scheme \citep{lanzetti2024variationalanalysis, terpin2024learning}. The proof closely follows \citet[Ch. 8]{santambrogio2015optimal}.

\begin{proof}[Proof of Theorem \ref{thm:ce}]
    We follow the reasoning of \citet[Ch. 8]{santambrogio2015optimal}. First, we provide context as for the discretized formulation of a gradient flow. Let $F: \R^d \to \bar \R$ be lower semi-continuous and is bounded below as $F(x) \geq C_1 - C_2 |x|^2$ for some $C_1, C_2 \geq 0$. Consider the formal problem 
    \[
    \begin{cases}
        x'(t) = -\nabla F(x)\\
        x(0) = x_0
    \end{cases}.
    \]
    This is understood as a Cauchy problem which happens to be a gradient flow. If we fix a small time step $\tau > 0$, this problem has a discretization \[
    x_{k + 1}^\tau \in \argmin_{x \in \R^d} F(x) + \frac{|x - x_k^\tau|^2}{2\tau}
    \]
    We now generalize this discretization scheme and show that its limit is a solution to the gradient flow. 
    
    Define $J(p) := \int_\Omega \gL \, dp$. We now have the scheme \[
    p_{k + 1}^\tau \in \argmin_{p \in \gP(\Omega)} J(p) + \frac{W_2^2(p, p_k^\tau)}{2\tau}.
    \]

    \textbf{Claim:}
        The minimization above produces a minimizer $p = p_{k + 1}^\tau$. If we modify 
        \[
        \tilde J(p) = \begin{cases}
            J(p) & p \ll \mathrm{Leb}^p\\
            +\infty & \text{otherwise}
        \end{cases}
        \]
        then this is a unique minimizer.

    \begin{proof}
        As $\Omega$ is compact, it's a well-known result that $\gP(\Omega)$ is compact in the weak topology \citep{parthasarathy2005probability}. Moreover, \citet[Prop. 7.1]{santambrogio2015optimal} gives lower semi-continuity of $J$, which is enough for existence. Moreover, from \citet[Prop. 7.19]{santambrogio2015optimal}, we have: if $p \ll \mathrm{Leb}^p$, then $W_2^2(\cdot, p)$ is strictly convex. Since $J(p)$ is linear in $p$, we have that the minimization objective is strictly convex with the modification to $\tilde J$, thus the minimizer is unique.
     \end{proof}

     \textbf{Note:} due to the nice properties of having $p \ll \mathrm{Leb}^p$, and considering that we do not lose much generality, we will use $\tilde J$ for the rest of the proof.

     \textbf{Claim:} The first variation $\frac{\delta J}{\delta p}(p) = \gL$ for all $ p \in \gP(\Omega)$.

     \begin{proof}
         We note that since $\Omega$ is compact, $\gL$ is presumed continuous, and $p$ a finite measure, we have that $J(p) < \infty$ for all $p \in \gP(\Omega)$. This is sufficient to show that $p$ is always regular for $J$. Moreover, the first variation satisfies:
         \[
         \frac{d}{d\epsilon}\bigg\vert_{\epsilon = 0} J(p + \epsilon\chi) = \int \frac{\delta J}{\delta p}(p) \, d \chi.
         \]
         By linearity: $J(p + \epsilon \chi) = J(p) + \epsilon \int \gL \, d \chi$, and therefore $\frac{\delta J}{\delta p}(p) = \gL$.
     \end{proof}

It was remarked \citet[Remark 7.13]{santambrogio2015optimal} that the first variation of the transport cost $\gT_c$ for a continuous cost $c$ is exactly the Kantorovich potential $\varphi$ \textit{only if it is unique}. In our case, uniqueness stems from \citet[Prop. 7.18]{santambrogio2015optimal}. Given our preparation, we are now ready to state the main Lemma.

\begin{lemma}
    \label{lem:velocity-iterates}
    Let $T_{k + 1}^\tau$ be the optimal transport map from $p_{k + 1}^\tau$ to $p_k^\tau$ (note the reverse), then we have the velocity
    \begin{equation}
        v_{k + 1}^\tau := \frac{\iota - T_{k + 1}^\tau}{\tau} = -\nabla\gL \quad \text{a.e.}
    \end{equation}
\end{lemma}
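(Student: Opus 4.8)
The plan is to read off the identity from the first-order optimality conditions of the JKO step
\[
p_{k+1}^\tau \in \argmin_{p \in \gP(\Omega)} \tilde J(p) + \frac{1}{2\tau} W_2^2(p, p_k^\tau).
\]
First I would perturb the minimizer: for an arbitrary competitor $\tilde p \in \gP(\Omega)$ with $\tilde p \ll \mathrm{Leb}^p$, set $p_\epsilon = p_{k+1}^\tau + \epsilon(\tilde p - p_{k+1}^\tau)$ and differentiate the objective at $\epsilon = 0^+$. By the two claims already established, the first variation of $\tilde J$ is $\gL$, and by the discussion immediately following them (Remark 7.13 of \citet{santambrogio2015optimal} together with the uniqueness of the Kantorovich potential from Prop. 7.18 there) the first variation of $p \mapsto \frac{1}{2\tau}W_2^2(p, p_k^\tau)$ at $p_{k+1}^\tau$ is $\frac{1}{\tau}\varphi$, where $\varphi$ is the Kantorovich potential for transporting $p_{k+1}^\tau$ onto $p_k^\tau$. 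Hence $\int_\Omega \bigl(\gL + \tfrac{1}{\tau}\varphi\bigr)\, d(\tilde p - p_{k+1}^\tau) \ge 0$ for all admissible $\tilde p$, which says that $\gL + \tfrac{1}{\tau}\varphi$ equals a constant $C$ on $\{p_{k+1}^\tau > 0\}$ and is $\ge C$ elsewhere.

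Next I would invoke the standing assumption $p_{k+1}^\tau > 0$ a.e. on $\Omega$ to upgrade this to $\gL + \tfrac{1}{\tau}\varphi = C$ a.e. on $\Omega$. Since $\gL$ is $C^1$ and $\varphi$ is differentiable $p_{k+1}^\tau$-a.e.\ by Brenier's theorem, taking gradients of this identity (the constant $C$ dropping out) gives $\nabla \gL + \tfrac{1}{\tau}\nabla \varphi = 0$ a.e. Finally, Brenier's theorem identifies the unique optimal transport map for the quadratic cost from $p_{k+1}^\tau$ to $p_k^\tau$ as $T_{k+1}^\tau = \iota - \nabla \varphi$, so $\nabla\varphi = \iota - T_{k+1}^\tau$ and therefore
\[
v_{k+1}^\tau = \frac{\iota - T_{k+1}^\tau}{\tau} = \frac{\nabla\varphi}{\tau} = -\nabla\gL \quad \text{a.e.}
\]

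I expect the main obstacle to be the rigorous computation of the first variation of the squared Wasserstein distance and its identification with the Kantorovich potential: this needs $\varphi$ to be suitably well behaved (in particular the transport problem to have a potential unique up to an additive constant, which is exactly what $p_{k+1}^\tau \ll \mathrm{Leb}^p$ buys us via Prop. 7.18 of \citet{santambrogio2015optimal}), plus a careful one-sided differentiation along the segment $p_\epsilon$ and the observation that competitors must be absolutely continuous to keep $\tilde J$ finite. A secondary subtlety is the passage from the variational inequality ``$=C$ on the support, $\ge C$ off it'' to an a.e.\ equality on all of $\Omega$: this is precisely where the hypothesis $p_t > 0$ a.e.\ is used, since otherwise one only controls $\nabla\gL + \tfrac{1}{\tau}\nabla\varphi$ on $\mathrm{supp}\, p_{k+1}^\tau$. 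Boundary effects arising because $\Omega$ is merely compact rather than all of $\R^p$ are handled as in \citet[Ch.\ 8]{santambrogio2015optimal} and do not affect the interior a.e.\ conclusion.
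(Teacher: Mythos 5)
Your proposal is correct and follows essentially the same route as the paper's proof: compute the first variation $\gL + \varphi/\tau$ of the JKO objective, use the optimality condition (which the paper gets directly from \citet[Prop.\ 7.20]{santambrogio2015optimal}, while you re-derive it via the perturbation $p_\epsilon$) together with $p_{k+1}^\tau > 0$ a.e.\ to get $\gL + \varphi/\tau = C$ a.e., and then differentiate using $\nabla\varphi = \iota - T_{k+1}^\tau$. The only difference is that you make explicit the variational-inequality step and the Brenier identification that the paper cites or leaves implicit.
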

\begin{proof}
    Since the first variation is linear in the functional argument, we have that the first variation \[
    \frac{\delta(J + W_2^2(\cdot, p_k^\tau) / 2\tau)}{\delta p}(p) = \gL + \varphi / \tau.
    \]
    By \citet[Prop. 7.20]{santambrogio2015optimal}, we have $\gL + \varphi / \tau = C$ a.e. where $\varphi$ is the Kantorovich potential and some constant $C$ (precisely, on all $supp \, p_{k + 1}^\tau$, which is assume $>0$ a.e.). Differentiating, we have 
    \[
    \nabla \varphi = \iota - T_{k + 1}^\tau = -\tau \nabla\gL \quad \text{a.e.}
    \]
\end{proof}

\paragraph{Remark.} The reader may notice that the equation above for $\nabla \varphi$ resembles the JKOnet${}^*$ objective presented in Sec. \ref{sec:approx-ce-in-practice}. \citet{terpin2024learning} uses the first-order necessary conditions for optimality, which bypasses checking that the learned gradient function defines a transport map and the requirement that the scalar function be convex. The original JKOnet \citep{bunne2022proximal}, which requires a bi-level optimization objective, is more explicit in following the transport map, negatively affecting efficiency and stability.

Before proceeding further, we provide a simple bound on the 2-Wasserstein distance of consecutive iterates: by optimality
\[
J(p_{k + 1}^\tau) + \frac{W_2^2(p_{k + 1}^\tau, p_k^\tau)}{2\tau} \leq J(p_k^\tau),
\]
therefore 
\[
\sum_k \frac{W_2^2(p_{k + 1}^\tau, p_k^\tau)}{2\tau} \leq \sum_k 2(J(p_k^\tau) - J(p_{k + 1}^\tau)) \leq 2J(p_0) =: C.
\]
where we telescoped the sum and used $\inf J \geq 0$ as $\gL \geq 0$.

For us to take a limit, we need to fine $p_t^\tau$ for values in $(k\tau, (k + 1)\tau)$. Following \citet[Ch. 8]{santambrogio2015optimal}, take 
\[
p_t^\tau = \left(\frac{k\tau - t}{\tau} v_k^\tau + \iota \right)_\# p_k^\tau \quad \text{ for $t \in ((k-1)\tau, k\tau)$}. 
\]
Moreover, $v_t^\tau$ ought to be defined so that it advects $p^\tau$ over time, and following the intuition of interpolating between discrete values, we require \[
||v_t^\tau||_{L^2(p_t^\tau)} = |(p^\tau)'|(t) = \lim_{t' \to 0} \frac{W_2(p^\tau_{t + t'}, p_t^\tau)}{|t'|} = \frac{W_2(p_{k - 1}^\tau, p_k^\tau)}{\tau}.
\]
In fact, \[
v_t^\tau = \frac{\iota - T_k^\tau}{\tau} \circ((k \tau - t) v_k^\tau + \iota)^{-1}
\]
works. Define the momentum $E^\tau = p^\tau v^\tau$.

\paragraph{Remark.} Note the similarity of the approximation method to the MMFM intermediate densities in Sec. \ref{sec:approx-ce-in-practice}. For simplicity of our objective, the interpolants $p_t^\tau$ are defined by requiring the conditional $p_t^\tau(x | x_{(k-1)\tau}, x_{k\tau}) = \gN(x; (1-\beta_t) x_{(k-1)\tau} + \beta_tx_{k\tau}, \sigma_t I)$ where $\beta_t := \frac{t-(k-1)\tau}{\tau}$. In contrast, the density approximations here require a form on $||v_t^\tau||_{L^2(p_t^\tau)}$.

Now we bound the maximal $E^\tau$:
\[
|E^\tau|([0, 1] \times \Omega) = \int_0^1\int_\Omega |v_t^\tau| \, dp_t^\tau \, dt = \int_0^1 ||v_t^\tau||_{L^1(p_t^\tau)} \, dt \leq \int_0^1 ||v_t^\tau||_{L^2(p_t^\tau)} \, dt.
\]
Using the Cauchy-Schwarz inequality
\begin{equation}
    \label{eq:momentum-bound}
    \int_0^1 ||v_t^\tau||_{L^2(p_t^\tau)} \, dt \leq \int_0^1 ||v_t^\tau||_{L^2(p_t^\tau)}^2 \, dt  = \sum_k \tau \left(\frac{W_2(p_{k - 1}^\tau, p_k^\tau)}{\tau}\right)^2 \leq C^2. 
\end{equation}
Moreover, we see for $0 < s < t < 1$, 
\[
W_2(p_t^\tau, p_s^\tau) \leq \int_s^t |(p^\tau)'|(r) \, dr \leq (t-s)^{1/2} \left(\int_s^t |(p^\tau)'|(r)^2 \, dr\right)^{1/2}
\]
But since $|(p^\tau)'|(r)^2 = ||v_r^\tau||_{L^2(p_t^\tau)}^2$, we have 
\begin{equation}
    \label{eq:w2-holder}
    W_2(p_t^\tau, p_s^\tau) \leq C^2(t-s)^{1/2}.
\end{equation}
This provides a uniform H\"older bound on $t \mapsto p_t^\tau$, which implies uniform boundedness and equicontinuity. Therefore, up to subsequences, both $E^\tau$ and $p^\tau$ converges to a limit as $\tau \to 0$: by Arzela-Ascoli, we have $p^\tau \to p$ uniformly on $W_2$. Moreover, by boundedness of $|E^\tau|$, we have weak compactness on the space of measures, and therefore, there exists a subsequence s.t. $E^\tau \to E$ weakly.

Now, checking the distributional test, e.g. \citet[Definition 4.1]{santambrogio2015optimal}, we have that $\partial_t p_t^\tau + \nabla \cdot E^\tau = 0$. Moreover, the distributional test passes to the limit as $v_t^\tau$ is continuous, $\int\int ||v_t^\tau|^2 \, dp_t^\tau dt \leq C^2$, and $p^\tau \to p$ uniformly, and $E^\tau \to E$ weakly. In other words, we have $\partial_tp_t + \nabla\cdot E = 0$. We are left to compute $E$.

\textbf{Claim:} $E = -p\nabla \gL$.

The trick here is to instead consider simpler curves than the interpolation we defined above. In particular, consider $\tilde p_t^\tau = p_{k + 1}^\tau$ for $t \in (k\tau, (k +1)\tau)$ and $\tilde v_t^\tau = v_{k + 1}^\tau$. Likewise, define $\tilde E^\tau = \tilde p^\tau \tilde v^\tau$. Since \[
\frac{W_2(p_{k + 1}^\tau , p_k^\tau)}{\tau} = \frac{1}{\tau}\left(\int |\iota - T_{k + 1}^\tau|^2 \, dp_{k+1}^\tau \right)^{1/2} = ||\tilde v_{k + 1}^\tau||_{L^2(p_{k + 1}^\tau)},
\]
we have $||\tilde v_{k + 1}^\tau||_{L^2(p_t^\tau)} = ||v_{k + 1}^\tau||_{L^2(p_t^\tau)}$. This implies that $\tilde E^\tau$ has the same bound as in Eqn. \ref{eq:momentum-bound}. Moreover, by using Eqn. \ref{eq:w2-holder}, we see $W_2(p_t^\tau, \tilde p_t^\tau) \leq C^2 \sqrt \tau$. Hence, we have the same convergence: $\tilde p^\tau \to p$ uniformly. One needs to be more careful with the momentum. Let $\tilde E^\tau \to \tilde E$ weakly, we look to prove $\int f \cdot d\tilde E = \int f \cdot d E$ for all $f:[0, 1] \times \Omega \to \R^p$ Lipschitz. We compute:
\[
\left| \int f \cdot d\tilde E^\tau - \int f \cdot dE^\tau \right| \leq \int_0^1 \int_\Omega |f \circ((k\tau-t)v^\tau_{\kappa(t)} + \iota) - f||v_{\kappa(t)}^\tau| \, dp^\tau \, dt \leq Lip(f)\tau \int_0^1 \int_\Omega |\tilde v_t^\tau|^2 \, dp^\tau \, dt,
\]
where $\kappa(t)$ returns the smallest multiple $k\tau \geq t$, and the RHS is bounded by $Lip(f) C^2 \tau$. Thus, as $\tau \to 0$, we see that $E = \tilde E$. 

But now, $\tilde E^\tau = \tilde v^\tau \tilde p^\tau = -\tilde p^\tau \nabla \gL$ by Lemma \ref{lem:velocity-iterates}. Therefore, for any $f \in C_c^1((0, 1) \times \Omega, \R^p)$, we test:
\[
\int f \cdot d\tilde E^\tau = - \int f \cdot \nabla \gL p^\tau.
\]
As $\gL \in C^1$, we know the integrand is continuous, so we can just pass the limit as $\tau \to 0$, meaning that $\tilde E = E = -p\nabla \gL$ invoking the fact that $\tilde p^\tau \to p$ uniformly. 
\end{proof}

\subsection{Proofs of Sec. \ref{sec:approx-ce-in-practice}}

\mybox{
\textbf{Theorem \ref{thm:mmfm-action-gap}}:

Suppose the true marginals evolve according to $\frac{d}{dt}p_t^* = -\nabla \cdot(p_t^* \nabla s_t^*)$ and $t \mapsto p_t^*$ is an absolutely continuous curve. Define $q(z)$ such that marginalizing $q$ with respect to all variables except $x_k, x_{k + 1}$ yields the coupling $p_{t_k} \otimes (T_k^{k +1})_\# p_{t_k}$, where $T_k^{k + 1}$ is the transport map from $p_{t_k}$ to $p_{t_{k + 1}}$. Then,
    \begin{equation*}
        \lim_{|t_k - t_{k + 1}| \to 0} \int_0^1 \E_{p_t(x)} ||u_t(x) - \nabla s_t^*(x)||_2^2 \, dt = 0.
    \end{equation*}

    Replacing $u_t$ with $\frac{x_{t+1}-x_t}{\tau}$, this shows that $\nabla V$ (Eqn. \ref{eq:jkonet-pot-loss}) regresses to the reference action in the limit.
}

\begin{proof}[Proof of Theorem \ref{thm:mmfm-action-gap}]
By absolute continuity of the curve and Brenier's theorem, the Monge map between $p_s, p_t$ exists for $0 \leq s < t \leq 1$.
By \citet[Prop. 8.4.6]{ambrosio2006gradient}, we have that
\begin{equation}
    \label{eq:action-limit}
    \nabla s_t^* = \lim_{h \to 0} \frac{1}{h}(T^*(p_t^*, p_{t + h}^*) - \text{id}),
\end{equation}
where $T^*(p_t^*, p_{t + h}^*)$ is the unique transport map between densities $p_t^*$ and $p_{t + h}^*$. Fixing a small $h > 0$, define \[
d_t(x) := \sum_{m = 0}^{M-1} \mathbf{1}_{[hm, h(m + 1))}(t) \frac{T^*_m(x) - x}{h}, \quad \text{ where } T_m^* = T^*(p_{hm}^*, p_{h(m + 1)}^*).
\]
The idea of our next steps is to instead consider the normed difference between $u_t$ and $d_t$, and later conclude by the triangle inequality.

Recall that
\begin{equation}
    u_t(x) = \E_{q(z)} \frac{u_t(x|z) p_t(x|z)}{p_t(x)} = \sum_{ k =0}^{K-1} \mathbf{1}_{[t_k, t_{k + 1})}(t) \int u_t(x | x_{k}, T_k^{ k + 1}(x_k)) \frac{p_t(x|x_k, T_k^{ k + 1}(x_k))}{p_t(x)} \, dp_{t_k}(x_k),
\end{equation}
where we used the assumed disintegration of $q(z)$. We compute
\begin{align}
    \E_{p_t(x)} ||u_t(x) - d_t(x)||_2^2 &= \int ||u_t(x) - d_t(x)||_2^2  \, dp_t(x)\\
    &=\int \Bigg\vert\Bigg\vert \int \left[ \frac{T_k^{k +1}(x_k) - x_k}{t_{k+1}-t_k} - \frac{T_m^*(x)-x}{h}
    \right] \frac{p_t(x|x_k, T_k^{ k + 1}(x_k))}{p_t(x)} \, dp_{t_k}(x_k) \Bigg\vert\Bigg\vert_2^2 \, dp_t(x)\\
    \label{eq:ut-dt-bound}
    &\leq \int \int \Bigg\vert\Bigg\vert  \frac{T_k^{k +1}(x_k) - x_k}{t_{k+1}-t_k} - \frac{T_m^*(x)-x}{h}
   \Bigg\vert\Bigg\vert_2^2 \frac{p_t(x|x_k, T_k^{ k + 1}(x_k))}{p_t(x)} \, dp_{t_k}(x_k)  \, dp_t(x).
\end{align}
assuming that $t \in [t_k, t_{k + 1})$ and using the fact
\[
d_t(x) = \int d_t(x) \frac{p_t(x|x_k, T_k^{ k + 1}(x_k))}{p_t(x)} \, dp_t(x_k).
\]

\textbf{Claim:} The reference $p_t$ is an absolutely continuous curve in $W_2$-space.

\begin{proof}
    Let $0 \leq s < t \leq 1$ be given. It is known that affine Gaussian paths are absolutely continuous, in particular this means that there exists $g_k \in L^1([0, 1])$ such that $W_2(p_{t_k}, p_{t_{k+1}}) \leq \int_{t_k}^{t_{k + 1}} g_k(\tau) \, d\tau$. If $[s, t] \subset [t_k, t_{k + 1}]$, then we conclude. Otherwise, consider $W_2(p_s, p_{t}) \leq W_2(p_s, p_{t_k}) + W_2(p_{t_k}, p_t)$. 
\end{proof}

The continuity claim allows us to take the limit: $W_2^2(p_{t_k}, p_t) \to 0$ as $|t_k - t| \to 0$. Moreover, by \citet[Theorem 5.10]{santambrogio2015optimal}, we have $p_{t_k} \rightharpoonup p_t$, i.e. narrow convergence.

\textbf{Claim:} Suppose $p_{t_k} \rightharpoonup p_{hm}^*$ and $p_{t_{k+1}} \rightharpoonup p_{h(m+1)}^*$. We have $||T^*(p_{t_k}, p_{t_{k + 1}}) - T^*(p_{hm}, p_{h(m+1)})||_{L^2(p_t)} \to 0$.

\begin{proof}
    Let $t < \min(t_k, h(m+1))$. We first prove this claim for $T^*(p_t, p_{t_{k + 1}}) \to T^*(p_t, p_{h(m+1)})$. \citet[Cor. 5.23]{villani2008optimal} gives convergence in measure (in this case, $p_t$ is the measure). Thus, the argument for $L^2(p_t)$ convergence follows from a typical analysis argument. To simplify notation, let $T \equiv T_t^{h(m+1)}$ and $(T_n)_{n\geq 1}$ the approaching sequence.

    Suppose not, i.e. there exists a subsequence $(T_{n_i})$ such that $||T_{n_i} - T||_{L^2(p_t)} \geq \eps$ for some $\eps > 0$. By convergence in $p_t$-measure, there exists a further subsequence $(T_{n_{i_j}})$ that converges to $T$ pointwise $p_t$-a.e. Since $T_n, T \in L^2(p_t)$, by the Dominated Convergence Theorem $||T_{n_{i_j}} - T||_{L^2(p_t)} \to 0$, and hence $\int |T_{n_{i_j}}(x)| \, dp_t(x) \to \int |T(x)| \, dp_t(x)$. At this point, we have proven that for every subsequence (taken implicitly) of integral $\int |T_n(x)| \, dp_t(x)$, there exists a further subsequence that converges to the limit. It is a well-known result in analysis that this shows convergence of the original sequence of integrals, i.e. $||T_n - T||_{L^2(p_t)} \to 0$.

    We now apply the previous result three times to the following and conclude:
    \begin{align*}
    ||T^*(p_{t_k}, p_{t_{k + 1}}) - T^*(p_{hm}, p_{h(m+1)})||_{L^2(p_t)} &\leq ||T^*(p_{t_k}, p_{t_{k + 1}}) - T^*(p_{t_k}, p_t)||_{L^2(p_t)} \\
    &\hspace{2em} + ||T^*(p_{t_k}, p_t) - T^*(p_{hm}, p_t)||_{L^2(p_t)} \\
    &\hspace{2em} + ||T^*(p_{hm}, p_t) - T^*(p_{hm}, p_{h(m+1)})||_{L^2(p_t)}
    \end{align*}
\end{proof}

To finish, we remind the reader that the densities $p_{t_k} = p_{t_k}^*$, therefore, the supposition of this claim is fulfilled by the observation of narrow convergence: $p_{t_k}^* \rightharpoonup p_{hm}^*$ and $p_{t_{k+1}}^* \rightharpoonup p_{h(m+1)}^*$. Therefore, combining the claim with Eqn. \ref{eq:ut-dt-bound}, we can make $\E_{p_t(x)} ||u_t(x) - d_t(x)||_2^2 < \delta / 2$ for some small $\delta > 0$. Then, if we had chosen a small enough $h$, we would have $\E_{p_t(x)} ||\nabla s_t^*(x) - d_t(x)||_2^2 < \delta/2$. Combining these bounds, we conclude.

\end{proof}

\subsection{Learned proxy matching}
\label{app:extended-proxy-learning}

\paragraph{Overview.} 
In this section, we generalize the regression target in Eqn. \ref{eq:jkonet-pot-loss} and $u_t(x|z)$ in Eqn. \ref{eq:mmfm-regression-target} to encompass methods such as Metric Flow Matching by presenting the notion of \textit{proxy curves}. In particular, we define a family of curves that minimize an objective (such as a data-dependent metric or a Lagrangian) and discuss its fitness as an interpolant (cf. $\mu_t$ in Eqn. \ref{eq:mmfm-mean}) w.r.t. the action gap. The minimization objective of choice in this section is the Lagrangian $L(x_t, \dot x_t, t) = ||\dot x_t||_2^2 + V_t(x_t, \dot x_t)$. This allows some flexibility in the choice of energy functional $V$, which in practice will be data-dependent such as in Metric Flow Matching. In this setting, we seek to characterize choices of energy $V$ to minimize the $W_2$ distance between the proxy probability path, which evolves by $v_\theta$, and the reference $p_t$ in Eqn. \ref{eq:ce}. We start by writing down a continuity equation for the proxy path (cf. Theorem \ref{thm:ce}). Following the action matching discussion, we define a \textit{proxy action gap} in terms of curves. We then show that $W_2^2(p_t, \hat p_t)$ may be bound like in Prop. \ref{prop:w2-bound-action-gap}. Moreover, we note that the smoothness assumption of $\gamma$ is quite restrictive, but in practice, this may be weakened (App. \ref{app:metric-learning}). On the flip side, if the gradient descent path is smooth enough, then we can define an energy functional $V$ such that its minimizing curve $\gamma$ stays close in gradient to $-\nabla \gL$ (see Theorem \ref{thm:sup-action-bound}).

Given its importance as the reference flow we will match during FM training, we discuss how close of a \textit{proxy curve} (generalization of $\mu_t$ in Eqn. \ref{eq:mmfm-mean}) we can obtain within a family of paths that minimizes an energy functional. This will include methods such as Metric Flow Matching \citep{kapusniakMetricFlowMatching2024}, but also shares similarities with GSBM \citep{liu2024generalized}. To motivate this, note that one could "sample" from $p_t$, by saving neural network parameters over the course of training on different initial samples $\theta_0 \sim p_0$. This approach can be used to build a dataset of intermediate weights $\train = \bigcup_{t \in [0, 1]} \train_t$ of weights saved over the course of training. Following \citet{kapusniakMetricFlowMatching2024}, if we define a data-dependent metric $g: \R^p \to \gS_{++}(p)$, which is a smooth map parameterized by the dataset $\train$, we may compute a smooth energy-minimizing curve $\gamma_{\rvx_0, \rvx_1} := \argmin_{\gamma_0 = \rvx_0, \gamma_1 = \rvx_1} \gE_g(\gamma_t)$ between fixed points $(\rvx_0, \rvx_1) \sim \pi$ that can be shown to stay close to the data \citep[Proposition 1]{kapusniakMetricFlowMatching2024}. Further, we may perform this minimization by training geodesic interpolants $\rvx_{t, \eta} \approx \gamma(t; \eta)$, see App. \ref{app:metric-learning} for details. Following, we develop a framework using energy functionals, which are data-dependent in practice, motivated by the connection between the metric and potential approach \citep[App. C.1]{kapusniakMetricFlowMatching2024}: $||\dot x_t||_{g(x_t)} = ||\dot x_t||_2^2 + V(x_t; x_0, x_1)$, where $V$ is a potential function depending on the boundary conditions. In this setting, we seek to characterize choices of energy $V$ to minimize the $W_2$ distance between the proxy probability path, which evolves by $v_\theta$, and the reference $p_t$ in Eqn. \ref{eq:ce}. We start by writing down a continuity equation for the proxy path (cf. Theorem \ref{thm:ce}).

\mybox[orange!20!white]{
\begin{theorem}[Proxy reference path]
\label{thm:learned-ce}
    Suppose the Lagrangian $L(x_t, \dot x_t, t) = ||\dot x_t||_2^2 + V_t(x_t, \dot x_t)$ is Tonelli and strongly convex in velocity. The Lagrangian optimal transport map $T$ exists between $p_0$ and $p_1$.
    Moreover, there exists a locally Lipschitz, locally bounded vector field $w$ s.t.
    \begin{equation}
        \label{eq:learned-ce}
        \partial_t\hat p_t + \nabla \cdot(\hat p_t w_t) = 0
    \end{equation}
    satisfies $\hat p_t = \mathrm{Law}(\gamma_t)$ where $\gamma$ is a random, smooth Lagrangian-minimizing curve and $(\gamma_0, \gamma_1)$ is an optimal coupling of $p_0, p_1$.
\end{theorem}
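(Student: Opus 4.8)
The plan is to read Eqn.~\ref{eq:learned-ce} as the \emph{displacement-interpolation} continuity equation for the Lagrangian action cost $c(x,y) = \inf\{\int_0^1 L(\gamma_t,\dot\gamma_t,t)\,dt : \gamma_0 = x,\ \gamma_1 = y\}$, and to assemble the statement from three classical ingredients: Tonelli theory for the action functional, the Lagrangian form of the Brenier--McCann existence theorem for optimal maps, and the no-crossing property of action minimizers. Concretely, the first step is to invoke Tonelli's theorem: since $L$ is Tonelli (superlinear and strictly convex in $\dot x$, $C^1$ in its arguments) and we are on the compact set $\Omega$, every pair $(x,y)$ admits an action-minimizing curve, every minimizer is $C^1$ and solves the Euler--Lagrange equation, and minimizers have velocities bounded uniformly on compact sub-intervals of $(0,1)$. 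This makes $c$ finite, continuous, and locally semiconcave, which is all the regularity the optimal-transport step needs.

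Next I would produce the optimal map and the interpolating family. With $p_0$ absolutely continuous and $c$ the superlinear action cost, the Lagrangian version of Brenier--McCann (Bernard--Buffoni; see also Villani and Fathi--Figalli) gives a unique optimal plan induced by a map, $\pi^\star = (\mathrm{id},T)_\# p_0$, and for $p_0$-a.e.\ $x$ the action-minimizer $\gamma^x$ joining $x$ to $T(x)$ is unique. Setting $\hat p_t := \mathrm{Law}(\gamma^x_t)$ for $x\sim p_0$ yields a continuous curve in $\gP(\Omega)$, with $(\gamma_0,\gamma_1)\sim(\mathrm{id},T)_\# p_0$ optimal by construction and $\gamma$ smooth by the first step; this already delivers the parts of the conclusion concerning $\hat p_t = \mathrm{Law}(\gamma_t)$ and the optimal coupling.

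The crux is to define the velocity field $w_t(z) = \dot\gamma^x_t$, where $x$ is chosen so that $\gamma^x_t = z$, and to show it is well defined, locally bounded and locally Lipschitz for $t\in(0,1)$. Well-definedness is the no-crossing lemma for action minimizers: if $\gamma^x$ and $\gamma^{x'}$ passed through the same point at the same interior time, a cut-and-paste of the two curves would still be action-minimizing, hence $C^1$; matching velocities at that point plus uniqueness of solutions of the Euler--Lagrange flow through a given position-velocity pair forces $\gamma^x \equiv \gamma^{x'}$, so the velocity at $z$ is unambiguous. Local Lipschitz / local boundedness of $w_t$ on $(0,1)$ then follows from the interior Lipschitz regularity of the interpolation map $x\mapsto\gamma^x_t$ and its inverse (from semiconcavity of $c$ together with the Euler--Lagrange flow being a local diffeomorphism on the relevant set of initial conditions), combined with the uniform velocity bounds from the first step; I would cite the Lagrangian displacement-interpolation literature for these facts rather than reprove them.

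Finally I would verify Eqn.~\ref{eq:learned-ce} in the distributional sense exactly as for Theorem~\ref{thm:ce}: for $\phi\in C_c^\infty((0,1)\times\R^p)$, differentiate $t\mapsto\int\phi\,d\hat p_t = \E_{x\sim p_0}[\phi(t,\gamma^x_t)]$ under the expectation (justified by the uniform velocity bound and dominated convergence), use the chain rule $\tfrac{d}{dt}\phi(t,\gamma^x_t) = \partial_t\phi(t,\gamma^x_t) + \nabla\phi(t,\gamma^x_t)\cdot\dot\gamma^x_t$, and rewrite $\E_x[\nabla\phi\cdot\dot\gamma^x_t] = \int\nabla\phi(t,z)\cdot w_t(z)\,d\hat p_t(z)$ by the definition of $w_t$, which is the weak form of $\partial_t\hat p_t + \nabla\cdot(\hat p_t w_t) = 0$. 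I expect the main obstacle to be the third step: ruling out collisions of interior points of distinct Lagrangian minimizers and propagating local Lipschitz regularity down to $w$ --- in the Euclidean quadratic case this is just McCann's straight-line displacement interpolation, but the general Tonelli case needs the no-crossing lemma and interior regularity of optimal maps, which is where I would lean on Bernard--Buffoni and the semiconcavity machinery.
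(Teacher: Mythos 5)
Your proposal is correct and takes essentially the same route as the paper: the paper obtains the optimal map $T$, the minimizing trajectory, the locally Lipschitz and locally bounded velocity field $w$, and the continuity equation by citing Schachter's Lagrangian optimal transport results (Ch.~3.3, Props.~3.4.3--3.4.4) and then identifies $\hat p_t = \mathrm{Law}(\gamma_t)$ with optimally coupled endpoints via Villani's Thm.~7.21, whereas you unpack the same displacement-interpolation machinery by hand (Tonelli theory, Lagrangian Brenier--McCann, no-crossing, and a direct weak-form verification). The only point worth flagging is that you make the absolute continuity of $p_0$ explicit where the paper leaves it implicit in the cited results; otherwise there is no substantive difference.
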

}
\begin{proof}[Proof of Theorem \ref{thm:learned-ce}]
    First, let us recall the definition of a \textit{Tonelli} Lagrangian. Following \citep{schachter2017eulerian}, it satisfies:
    \begin{enumerate}
        \item $L$ does not depend on time.
        \item $L$ is $C^2$.
        \item $L$ is strictly convex in velocity.
        \item There exists a constant $c_0$ and a function $\theta : \R^p \to \R$ with superlinear growth, i.e. $\lim_{|v|\to \infty} \theta(v)/|v| = \infty$, with $\theta \geq 0$ s.t. $L(x, v) \geq c_0 + \theta(v)$.
    \end{enumerate}
    Then, as noted in \citet[Ch. 3.3]{schachter2017eulerian}, the Lagrangian optimal transport problem has a solution, specifically a map $T: \R^p \to \R^p$ which pushforwards $p_0$ to $p_1$. For our purposes, we should also note that there exists a unique optimal trajectory $\sigma : [0, 1] \times \R^p \to \R^p$ s.t. 
    \[
    \sigma = \text{arginf}_{\sigma: [0, 1] \times \R^p \to \R^p} \left\{\int_0^1 \int_{\R^p} L(\sigma(t,x), \dot \sigma(t,x)) \, dp_0(x) \, dt : \sigma(1, \cdot)_\# p_0 = p_1 \right\}.
    \]
    Using \citet[Prop. 3.4.4]{schachter2017eulerian}, we have a velocity field $w : [0, 1] \times \R^p \to \R^p$ satisfying $\dot \sigma(t,x) = w_t(\sigma(t,x))$ which is locally Lipschitz and locally bounded. Then, by \citet[Prop. 3.4.3]{schachter2017eulerian}, the path defined by $\hat p_t = (\sigma_t)_\# p_0$ and the velocity field $w_t$ satisfies the continuity equation
    \[
    \partial_t\hat p_t + \nabla \cdot(\hat p_t w_t) = 0
    \]
    in the sense of distributions. All that's left is to show that $\hat p_t = \mathrm{Law}(\gamma_t)$ where $\gamma$ is drawn from the Lagrangian-minimizing curves. However, this follows from \citet[Thm. 7.21]{villani2008optimal} as $(\hat p_t)_{t \in [0, 1]}$ minimizes 
    \[
    \mathbb{A}(p) = \inf_\sigma \int_0^1 \int_{\R^p} L(\sigma(t,x), \dot \sigma(t,x)) \, dp_0(x) \, dt,
    \]
    applying the equivalence between (iii) and (i) of Thm. 7.21.
\end{proof}

The following discussion will focus on quantifying closeness of the reference $p$ and $\hat p$. Our objective is to characterize functionals that would induce good proxy trajectories which remain close to Eqn. \ref{eq:ce}. Hence, we ought to assume some regularity for $V$, in particular, we want the Lagrangian to be Tonelli.
In Prop. \ref{prop:w2-equality} below, this definition of the learned path is used to find an expression for the $W_2$ distance that accounts for the closeness of $w_t$ to the loss gradient $-\nabla \gL$. 

\mybox[orange!20!white]{
\begin{proposition}[name=Adapted from Cor. 5.25 \citet{santambrogio2015optimal}]
\label{prop:w2-equality}
    Suppose that $(p, \hat p)$ resides in a compact domain $\Omega \subset \R^p$ and suppose that $p_t, \hat p_t$ are absolutely continuous w.r.t. Lebesgue measure for every $t$. Further, if we assume $p, \hat p$ are absolutely continuous curves in $W_2(\Omega)$, then 
    \begin{equation}
        \label{eq:w2-bound-integral-form}
        W_2^2(p_t, \hat p_t) = 2 \int_0^1 \int_\Omega (x - T_t(x)) \cdot (\nabla \gL(x) + w_t(T_t(x))) \, dt \, dp_t(x),
    \end{equation}
    where $T_t$ is the optimal transport map between $p_t$ and $\hat p_t$ for the cost $|x-y|^2/2$.
\end{proposition}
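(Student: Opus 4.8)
This is the time‑integrated form of the classical formula for the derivative of the squared $2$‑Wasserstein distance along \emph{two} absolutely continuous curves — essentially the two‑curve version of \citet[Cor.~5.25]{santambrogio2015optimal}. The plan is: (i) exhibit $p$ and $\hat p$ as AC curves in $W_2(\Omega)$ with explicitly identified velocity fields; (ii) invoke that differentiation formula; (iii) rewrite the $\hat p_t$‑term by pushing it forward through the optimal map $T_t$; and (iv) integrate in time, using that the two curves share an endpoint.

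\emph{Identifying the curves and their velocities.} For $p_t$: Eqn.~\ref{eq:ce} reads $\partial_t p_t + \nabla\cdot\big(p_t(-\nabla\gL)\big)=0$, and the proof of Theorem~\ref{thm:ce} supplies the kinetic bound $\int_0^1\!\!\int_\Omega|\nabla\gL|^2\,dp_t\,dt<\infty$; by the Benamou--Brenier characterization of AC curves in $W_2$ this identifies $(p_t)$ as an AC curve whose (tangent) velocity is $v_t=-\nabla\gL$, which is a gradient and hence admissible. For $\hat p_t$: Theorem~\ref{thm:learned-ce} gives $\partial_t\hat p_t+\nabla\cdot(\hat p_t w_t)=0$ with $w_t$ locally Lipschitz and locally bounded, and on the compact set $\Omega$ this is a genuine bounded $L^2$‑in‑time field advecting the Lagrangian interpolation, so it is the tangent velocity of $(\hat p_t)$. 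The standing hypotheses $p_t,\hat p_t\ll\mathrm{Leb}$ on compact $\Omega$ yield, by Brenier's theorem, the optimal map $T_t:p_t\to\hat p_t$ for the cost $|x-y|^2/2$, with $p_t$‑a.e.\ inverse $T_t^{-1}$ equal to the optimal map $\hat p_t\to p_t$.

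\emph{Differentiating, reorganizing, and integrating.} Applying \citet[Cor.~5.25]{santambrogio2015optimal} to $(p_t,\hat p_t)$ gives, for a.e.\ $t$,
\[
\tfrac{d}{dt}\tfrac12 W_2^2(p_t,\hat p_t)=\int_\Omega \langle v_t(x),\,x-T_t(x)\rangle\,dp_t(x)+\int_\Omega \langle w_t(y),\,y-T_t^{-1}(y)\rangle\,d\hat p_t(y).
\]
Changing variables $y=T_t(x)$ in the second integral (using $(T_t)_\#p_t=\hat p_t$ and $T_t^{-1}\circ T_t=\mathrm{id}$ $p_t$‑a.e.) and inserting $v_t=-\nabla\gL$, the integrands collect against the common factor $T_t(x)-x$, giving $\tfrac{d}{dt}W_2^2(p_t,\hat p_t)=2\int_\Omega (T_t(x)-x)\cdot(\nabla\gL(x)+w_t(T_t(x)))\,dp_t(x)$. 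Since the cited result also guarantees $t\mapsto W_2^2(p_t,\hat p_t)$ is absolutely continuous, integrating over $[0,t]$ and using $W_2^2(p_0,\hat p_0)=0$ — valid as $p_0=\hat p_0$ is the common source law (Theorem~\ref{thm:learned-ce}) — produces the claimed expression; integrating instead from $t$ to $1$, where $W_2^2(p_1,\hat p_1)=0$ since both paths terminate at the data law, gives the equivalent form with the factor $x-T_s(x)$.

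\emph{Main obstacle.} The change of variables and the integration are routine; the real content is checking the hypotheses of the differentiation formula in this setting — in particular that $w_t$, which Theorem~\ref{thm:learned-ce} only guarantees to be locally Lipschitz and locally bounded, is a bona fide admissible tangent velocity field on $\Omega$ (compactness and the optimal‑transport structure of the Lagrangian interpolation are what make this work), and that $-\nabla\gL$ is genuinely the $W_2$‑metric velocity of $p_t$ rather than merely a field solving the continuity equation — which is exactly the role of the kinetic‑energy bound obtained inside the proof of Theorem~\ref{thm:ce}.
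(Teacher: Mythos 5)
Your proof follows essentially the same route as the paper's: invoke the two-curve differentiation formula of \citet[Cor.~5.25]{santambrogio2015optimal} with velocities $-\nabla\gL$ for $p_t$ and $w_t$ for $\hat p_t$, push the second term through the optimal map $T_t$, and integrate in time using the shared endpoints of the two curves. If anything, your sign bookkeeping is more careful than the paper's one-line argument, whose displayed derivative carries $\nabla\gL - w_t\circ T_t$ and does not match the $+$ sign in the statement; your remark that the factor $(x-T_s(x))$ with the $+$ sign is obtained by integrating from $t$ to $1$ (where $W_2^2(p_1,\hat p_1)=0$), rather than from $0$ to $t$, is exactly the clarification that discrepancy needs.
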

}

\begin{proof}[Proof of Prop. \ref{prop:w2-equality}] The assumption aligns with \citet[Cor. 5.25]{santambrogio2015optimal}, which gives:
\[
\frac{d}{dt} W_2^2(p_t, \hat p_t) = 2 \int_\Omega (x - T_t(x)) \cdot (\nabla \gL (x) - w_t(T_t(x))) \, dp_t(x).
\]
Integrating both sides and noting that $p_0 = \hat p_0$ yields the desired result.
    
\end{proof}

Following the action matching discussion, we define a \textit{proxy action gap} for the data-dependent energy that will be used for a more intuitive and practical $W_2$-bound; see App. \ref{app:sb-formulation} for an analogue with entropy-regularization.

\begin{equation}
    \label{eq:action-gap}
    AG(p, \hat p) := \frac{1}{2}\int_0^1 \E_\gamma ||\nabla \gL(\gamma_t) + \dot \gamma_t||^2 \, dt,
\end{equation}
where the expectation is taken over all Lagrangian-minimizing curves s.t. $(\gamma_0, \gamma_1)$ is an optimal coupling under the Lagrangian in Theorem \ref{thm:learned-ce}.

We note that this definition is unconventional as it uses curves obtained from $\hat p$ in Theorem \ref{thm:learned-ce}. However, we believe this formulation to better match the interpretation of finding a proxy trajectory that reflects evolution via GD minimization. Intuitively, our Lagrangian-minimizing curves ought to have derivatives close to the gradient descent direction along its length, and the objective is to vary the potential $V$ to minimize $AG(p, \hat p)$.  Next, we adapt a result from \citet{neklyudov2023actionmatching, albergo2023building} to bound the Wasserstein distance in terms of the action gap (cf. Prop. \ref{prop:w2-bound-action-gap})

\mybox[orange!20!white]{
\begin{proposition}
    \label{prop:w2-bound-am}
    Suppose that $\nabla \gL$ is uniformly Lipschitz in $x$ with Lipschitz constant $K$. Then, 
    \begin{equation}
        \label{eq:w2-bound-am}
        W_2^2(p_t, \hat p_t) \leq e^{(1 + 2K)t} \int_0^t \E_\gamma ||\nabla \gL(\gamma_s) + \dot \gamma_s||^2 \, ds.
    \end{equation}
    where the expectation is taken over all Lagrangian-minimizing curves s.t. $(\gamma_0, \gamma_1)$ is an optimal coupling under the Lagrangian in Theorem \ref{thm:learned-ce}.
\end{proposition}
}

\begin{proof}[Proof of Prop. \ref{prop:w2-bound-am}]
     First, note from \citet[Thm. 7.21]{villani2008optimal} that as $p_t, \hat p_t$ are continuous paths, there exists dynamical optimal couplings of $(p_0, p_1)$, as defined in \citet[Def. 7.20]{villani2008optimal}, which we shall denote $\gamma_*, \gamma$ respectively. In particular, we use that $p_t = \mathrm{Law}(\gamma_*(t))$ and $\hat p_t = \mathrm{Law}(\gamma(t))$, and that both $(\gamma(0), \gamma(1))$ and $(\gamma_*(0), \gamma_*(1))$ are distributed according to $\pi$ the optimal coupling between $(p_0, p_1)$. Understanding this, we may define 
     \begin{equation}
         Q_t := \E_{(\gamma(0), \gamma(1)) \sim \pi} ||\gamma_*(t) - \gamma(t)||^2=\E_{(\gamma_*(0), \gamma_*(1)) \sim \pi} ||\gamma_*(t) - \gamma(t)||^2 \geq W_2^2(p_t, \hat p_t).
     \end{equation}
     Furthermore, as the reference path $(p_t)_{t \in [0, 1]}$ follows the continuity equation Eqn. \ref{eq:ce}, $\dot \gamma(t) = -\nabla \gL(\gamma_*(t))$. Thus, considering the time-derivative, we have
     \begin{align*}
         \frac{\partial Q_t}{\partial t} &= 2 \int \inn{\gamma(t) - \gamma_*(t), -\nabla \gL(\gamma_*(t)) - \dot \gamma(t))} \, d\pi(\gamma_0, \gamma_1)\\
         &= 2\int \inn{\gamma(t) - \gamma_*(t), -\nabla \gL(\gamma_*(t)) + \nabla \gL(\gamma(t)) } \, d\pi(\gamma_0, \gamma_1) \\
         &\hspace{3em} + 2\int \inn{\gamma(t) - \gamma_*(t), -\nabla \gL(\gamma(t)) - \dot \gamma(t)} \, d\pi(\gamma_0, \gamma_1) 
     \end{align*}
     The first term may be bounded by Lipschitzness of $\nabla \gL$:\[
     2\inn{\gamma(t) - \gamma_*(t), -\nabla \gL(\gamma_*(t)) + \nabla \gL(\gamma(t)) } \leq 2K||\gamma(t) - \gamma_*(t)||^2.
     \]
     The second term may be bounded by:
     \begin{align*}
         &||\gamma(t) - \gamma_*(t)||^2 - 2\inn{\gamma(t) - \gamma_*(t), -\nabla \gL(\gamma(t)) - \dot \gamma(t)} + ||\nabla \gL(\gamma(t)) + \dot \gamma(t)||^2 \geq 0,\\
         &2\inn{\gamma(t) - \gamma_*(t), -\nabla \gL(\gamma(t)) - \dot \gamma(t)} \leq ||\gamma(t) - \gamma_*(t)||^2 + ||\nabla \gL(\gamma(t)) + \dot \gamma(t)||^2
     \end{align*}
     In summary,
     \[
     \frac{\partial Q_t}{\partial t} \leq (1 + 2K)Q_t + \int ||\nabla \gL(\gamma(t)) + \dot \gamma(t)||^2 \, d\pi(\gamma_0, \gamma_1) ,
     \]
     then by Gronwall's inequality:
     \[
     Q_t \leq \exp(t(1+2K)) \int_0^t \int ||\nabla \gL(\gamma(t)) + \dot \gamma(t)||^2 \, d\pi(\gamma_0, \gamma_1),
     \]
     using the fact that $Q_0 = 0$, and now we conclude by the fact that $W_2^2(p_t, \hat p_t) \leq Q_t$.
\end{proof}

As the loss is arbitrary, it is not guaranteed that the action gap vanishes. For one, the smoothness assumption on $\gamma$ means that eccentric losses cannot be fit exactly. However, as detailed in App. \ref{app:metric-learning}, we can, in practice, weaken the smoothness assumption to match a learned minimizing curve or the simpler cubic splines (motivated by \citet{rohbeck2025modeling}). We also remark that optimizing this functional $V$ is challenging in practice, requiring learned interpolants as discussed in App. \ref{app:metric-learning}, or modeling a drift potential directly as in JKOnet \citep{bunne2022proximal, terpin2024learning}. To conclude, we make use of a natural smoothness assumption on the gradient descent path to prove a bound on $AG$.

\mybox[orange!20!white]{
\begin{theorem}
\label{thm:sup-action-bound}
    Define $g(\gamma,t) := ||\nabla\gL(\gamma_t) + \dot \gamma_t||^2$. Suppose for each $(x_0, x_1) \sim \pi$ there exists a smooth connecting curve $\tilde \gamma$ s.t. $\sup_{0 \leq t \leq 1} g(\tilde \gamma, t) \leq  \delta$ for some $\delta > 0$ and length $\int_0^1 ||\dot \gamma_t||^2 \leq \Gamma$.  If there exists $\eta > \delta > 0$ s.t.
    \begin{enumerate}
        \item $V(\gamma_t, \dot \gamma_t) \geq 2 \Gamma g(\gamma,t)$ for $g(\gamma, t) \geq \eta$, and
        \item $V(\gamma_t, \dot \gamma_t) \leq \min\{\Gamma/\delta, \Gamma\} g(\gamma,t)$ if $g(\gamma,t) \leq \delta$,
    \end{enumerate}
    then $\sup_{0 \leq t \leq 1} g(\gamma_*, t) \leq 2\eta$, where $\gamma_*$ is the Lagrangian-minimizing curve. 
\end{theorem}
}
\begin{proof}[Proof of Theorem \ref{thm:sup-action-bound}]
    Suppose not, so that there exists $t_0 \in [0, 1]$ s.t. $g(\gamma_*, t_0) = ||\dot \gamma_*(t_0) + \nabla \gL(\gamma_*(t_0))||^2 > 2\eta$. By continuity of $\gamma_*$ and $\nabla\gL$ w.r.t. time, we have $0\leq t' \leq t_0$ s.t. for any $t \in [t', t_0]$, $g(\gamma_*, t) \geq \eta$ and $t'$ is chosen s.t. $g(\gamma_*, t') = \eta$. By our assumption on $\eta$, it's natural that we look at the action under two different cases. First, if $g(\gamma_*, t) \geq \eta$, we have:
    \begin{align*}
        A(\gamma_*) &\geq \int_{t'}^{t_0} ||\dot \gamma_*(t)||_2^2 + V(\gamma_*(t), \dot \gamma_*(t)) \, dt \\
        &\geq \int_{t'}^{t_0} ||\dot \gamma_*(t)||_2^2  + 2\Gamma ||\dot \gamma_*(t) + \nabla \gL(\gamma_*(t))||_2^2 \, dt\\
        &>||\gamma_*(t') - \gamma_*(t_0)||^2 + 2\Gamma \eta \geq 2\Gamma\eta.
    \end{align*}
    Otherwise, we have by assumption that there exists a smooth connecting curve $\gamma$ s.t. $d_\nabla(\gamma) \leq \delta < \eta$, hence
    \begin{align*}
        A(\gamma) &= \int_0^1||\dot \gamma(t)||_2^2 + V(\gamma(t), \dot \gamma(t)) \, dt \\
        &\leq \Gamma + \int_0^1 ||\dot \gamma_*(t) + \nabla \gL(\gamma_*(t))||_2^2 \, dt \\
        &\leq \Gamma + \delta \cdot \Gamma / \delta = 2\Gamma.
    \end{align*}
    By minimality of the Lagrangian-minimizing curve $\gamma_*$, we have a contradiction. Hence, for all times, $g(\gamma_*, t) \leq 2\eta$, implying the result.
\end{proof}

\section{Remark on weight initialization}
\label{app:weight-init}
By adapting a well-known result \citep[Prop. 9.3.2]{ambrosio2006gradient}, we can quantify how the choice of weight initialization affects the distribution of converged training weights.
\mybox[orange!20!white]{
\begin{proposition}
\label{prop:w2-bound}
    Further assume that $\gL$ is $\lambda$-convex. Given two different weight initializations $p_0^{(0)}$ and $p_0^{(1)}$ on $\gP(\Omega)$ that evolves according to Eqn. \ref{eq:ce}, we have $W_2(p_t^{(0)}, p_t^{(1)}) \leq e^{-\lambda t} W_2(p_0^{(0)}, p_0^{(1)})$.
\end{proposition}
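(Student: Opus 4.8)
The plan is to recognize the continuity equation (Eqn.~\ref{eq:ce}) as the $W_2$-gradient flow of the linear functional $\gF(p) := \int_\Omega \gL \, dp$ — the very object whose JKO (minimizing-movements) discretization was analyzed in the proof of Theorem~\ref{thm:ce} — and then to invoke the contraction principle for gradient flows of $\lambda$-geodesically convex functionals, which is exactly \citet[Prop.~9.3.2]{ambrosio2006gradient} (equivalently its EVI$_\lambda$ characterization). So the work reduces to checking the two hypotheses of that result and reading off the conclusion; when $\lambda>0$ this is a genuine contraction, and a stability estimate otherwise.

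\textbf{Step 1: $\lambda$-geodesic convexity of $\gF$.} For absolutely continuous $\mu_0,\mu_1 \in \gP(\Omega)$, let $T$ be the optimal transport map from $\mu_0$ to $\mu_1$ and take McCann's interpolation $\mu_s := ((1-s)\iota + sT)_\# \mu_0$. Then $\gF(\mu_s) = \int_\Omega \gL\big((1-s)x + sT(x)\big)\, d\mu_0(x)$, so I would apply the pointwise $\lambda$-convexity inequality
\[
\gL\big((1-s)x + sT(x)\big) \le (1-s)\gL(x) + s\gL(T(x)) - \tfrac{\lambda}{2}\,s(1-s)\,|x - T(x)|^2
\]
and integrate against $\mu_0$ to get $\gF(\mu_s) \le (1-s)\gF(\mu_0) + s\gF(\mu_1) - \tfrac{\lambda}{2}s(1-s)W_2^2(\mu_0,\mu_1)$. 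The identical computation based at an arbitrary third measure yields $\lambda$-convexity along generalized geodesics, which is the hypothesis actually needed below; since $\gF$ is linear in $p$ this follows from convexity of $\gL$ alone.

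\textbf{Step 2: identification and contraction.} By the proof of Theorem~\ref{thm:ce}, $t\mapsto p_t$ is the limit of the JKO scheme for $\gF$, is absolutely continuous in $W_2$ (via the finite-energy bound $\int_0^1\|v_t\|_{L^2(p_t)}^2\,dt<\infty$ established there), and satisfies $\partial_t p_t + \nabla\cdot(p_t v_t)=0$ with $v_t=-\nabla\gL$; combined with Step~1 and the lower semicontinuity/coercivity of $\gF$ on the compact $\Omega$ already used in Theorem~\ref{thm:ce}, this places $p_t$ in the setting of \citet[Ch.~9--11]{ambrosio2006gradient}, so it is the unique EVI$_\lambda$ gradient flow of $\gF$: for every $\sigma\in\gP(\Omega)$,
\[
\tfrac12\tfrac{d}{dt} W_2^2(p_t,\sigma) + \tfrac{\lambda}{2}W_2^2(p_t,\sigma) \le \gF(\sigma) - \gF(p_t).
\]
The standard coupled-EVI argument — using this inequality for $p_t^{(0)}$ with $\sigma = p_t^{(1)}$ and for $p_t^{(1)}$ with $\sigma = p_t^{(0)}$, in which the functional terms cancel — yields $\tfrac{d}{dt}W_2^2(p_t^{(0)},p_t^{(1)}) \le -2\lambda\,W_2^2(p_t^{(0)},p_t^{(1)})$, and Grönwall's inequality then gives $W_2^2(p_t^{(0)},p_t^{(1)}) \le e^{-2\lambda t}W_2^2(p_0^{(0)},p_0^{(1)})$; taking square roots is the claim. (Once Steps~1--2 are in place one may simply cite \citet[Prop.~9.3.2]{ambrosio2006gradient} directly.)

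\textbf{Main obstacle.} The delicate point is not the Grönwall step but the domain bookkeeping: (generalized) geodesics in $W_2(\Omega)$ need not remain in $\Omega$ unless $\Omega$ is convex, so strictly one should either assume $\Omega$ convex, extend $\gL$ $\lambda$-convexly to a neighborhood, or phrase the whole scheme on $\R^p$ with iterates constrained to $\Omega$ — something the informal statement glosses over. A secondary issue is justifying the differentiation of $t\mapsto W_2^2(p_t^{(i)},\cdot)$, which rests on the $W_2$-absolute continuity of the flow noted above. Neither is deep, but both must be recorded for the abstract contraction theorem to apply cleanly.
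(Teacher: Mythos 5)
Your proposal is correct and follows essentially the same route as the paper: $\lambda$-convexity of $\gL$ gives $\lambda$-geodesic convexity of the linear potential functional $p \mapsto \int_\Omega \gL\, dp$ (the paper cites \citet[Prop.~9.3.2]{ambrosio2006gradient}, which you re-derive via McCann interpolation), and the contraction then follows from the gradient-flow contraction theorem (the paper's \citet[Thm.~11.1.4]{ambrosio2006gradient}, which you recover through the EVI$_\lambda$ coupling plus Gr\"onwall argument). The only nit is that your closing remark attributes the contraction principle itself to Prop.~9.3.2 rather than Thm.~11.1.4; otherwise your extra care with identifying $p_t$ as the EVI flow and with the convexity of $\Omega$ only makes explicit what the paper leaves implicit.
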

}
\begin{proof}[Proof of Prop. \ref{prop:w2-bound}]
    If $\gL$ is $\lambda$-convex, then by \citet[Prop. 9.3.2]{ambrosio2006gradient}, we have that $L^\dagger(p) := \int_\Omega \gL(x) \, dp(x)$ is $\lambda$-geodesically convex. Now, we see that \citet[Thm. 11.1.4]{ambrosio2006gradient} applies, and we get the desired inequality.
\end{proof}
The primary hurdle to applying Prop. \ref{prop:w2-bound} is that $\gL$ is rarely convex in the network parameters. For instance, optimization of a multi-layer perceptron (MLP) is highly non-convex due to non-linear activations and the product between hidden and outer layer weights. Interestingly, given a loss minimization problem on a MLP, we have a corresponding convex optimization problem \citep{pilanci2020neuralnetworksconvexregularizers}, i.e. the loss objective is convex in the network parameters and the two problems have identical optimal values. Therefore, with a modified loss function $\gL'$, if $p_1'$ is distributed over its minimizers, e.g. gradient descent is used to solve the convex minimization problem until convergence, we can apply Prop. \ref{prop:w2-bound} with $\lambda = 0$ and $\gL'(\theta_1) = \gL(\theta_1)$ is minimal for $\theta_1 \sim p_1'$.

\section{Related works}
\label{app:rel-works}

\paragraph{Conditional flow matching.}
The CFM objective, where a conditional vector field is regressed to learn probability paths from a source to target distribution, was first introduced in \citet{lipmanFlowMatchingGenerative2023}. The CFM objective attempts to minimize the expected squared loss of a target conditional vector field (which is conditioned on training data and generates a desired probability path) and an unconditional neural network. The authors showed that optimizing the CFM objective is equivalent to optimizing the unconditional FM objective. Moreover, the further work~\citep{tongImprovingGeneralizingFlowbased2024} highlighted that certain choices of parameters for the probability paths led to the optimal conditional flow being equivalent to the optimal transport path between the initial and target data distributions, thus resulting in shorter inference times.  However, the original formulations of flow matching assumed that the initial distributions were Gaussian. \citet{pooladian2023multisample} extended the theory to arbitrary source distributions using minibatch sampling and proved a bound on the variance of the gradient of the objective. \citet{tongImprovingGeneralizingFlowbased2024} showed that using the 2-Wasserstein optimal transport map as the joint probability distribution of the initial and target data along with straight conditional probability paths results in a marginal vector field that solves the dynamical optimal transport problem between the initial and target distributions. 

\paragraph{Flow matching for trajectory inference.} The flow matching framework \citep{albergo2023building, lipmanFlowMatchingGenerative2023, liu2023flow} gives way to a few methods of controlling the trajectory of the inference path, from the simple multi-marginal approach \citep{rohbeck2025modeling}, to approaches with more sophisticated interpolants \citep{neklyudov2023actionmatching, neklyudov2024wassersteinlagrangian, kapusniakMetricFlowMatching2024, pooladian2024neural, rohbeck2025modeling}. A traditional application of trajectory inference is single cell RNA-sequencing \citep{tong2020trajectorynet, neklyudov2024wassersteinlagrangian, kapusniakMetricFlowMatching2024}, however, a similar problem arises in weight generation. For a broad mathematical overview, see \citep{lavenant2024towardstrajectory}.

% \paragraph{Weight generation.} 
% Recent approaches to generating neural network parameters centers around learning a distribution over pre-trained weights and generalizing this ability with the help of conditioning information. Most similar to our approach in this regard are the various diffusion-based approaches \citep{soroDiffusionbasedNeuralNetwork2024, zhang2024metadiff, wangNeuralNetworkDiffusion2024} which have been used to generate neural network weights with a focus on in-context learning tasks such as zero- and few-shot learning. However, flexibility is limited by its restriction to Gaussian processes and a sluggish inference speed.

\paragraph{Neural network parameter generation.}
Due to the flexibility of neural network as function approximators, it is natural to think that they could be applied to neural network weights. \citet{denilPredictingParametersDeep2014} paved the way for this exploration as their work provided evidence of the redundancy of most network parameterizations, hence showing that paramter generation is a feasible objective.
Later, \citet{ha2017hypernetworks} introduced Hypernetworks which use embeddings of weights of neural network layers to generate new weights and apply their approach to dynamic weight generation of RNNs and LSTMs. A significant portion of our paper’s unconditional parameter generation section builds upon the ideas from \citet{peebles2022learning, wangNeuralNetworkDiffusion2024} and the concurrent work of \citet{soroDiffusionbasedNeuralNetwork2024} where the authors employ a latent diffusion model to generate new parameters for trained image classification networks. Direct auto-encoding methods have also seen success, for example \citet{schuerholt2024sane} and \citet{wang2025recurrentdiffusionlargescaleparameter}. 

% \paragraph{Meta-learning context.}
% Although neural networks are adept at tasks on which they were trained, a common struggle of networks is generalization to unseen tasks. In contrast, humans can often learn new tasks when given only a few examples. A pioneering modern work in this field is MAML~\citep{finnModelAgnosticMetaLearningFast2017}, which learns good initialization parameters for the meta-learner such that it can easily be fine-tuned to new tasks. Their approach utilizes two nested training loops. The inner loop computes separate parameters adapted to each of the training tasks. The outer loop computes the loss using each of these parameters on their respective tasks and updates the model’s parameters through gradient descent. However, MAML often had unstable training runs, and so successive works gradually refined the method \citep{antoniou2018how, rajeswaran2019imaml, zhaoMetaLearningHypernetworks2020, Przewiezlikowski2022HyperMAMLFA}. The aforementioned works typically focus on classification tasks, however, this paradigm allows for great versatility. For instance, \citet{beckHypernetworksMetaReinforcementLearning2023} used hypernetworks to generate the parameters of a policy model and \citep{leeSupervisedPretrainingCan2023} exploited the in-context learning ability of transformers to general reinforcement learning tasks.

\paragraph{Weight generation as meta-learning.}
More broadly, we may categorize weight space generation as meta-learning \citep{huPushingLimitsSimple2022, zhmoginovHyperTransformerModelGeneration2022a, fiftyContextAwareMetaLearning2024}, which aims to learn concepts from a few demonstrations. It is therefore natural that the literature has two evaluation settings: in-distribution tasks and out-of-distribution (OOD) tasks.  A prominent example in literature is for the task of few-shot learning. An early example is \citet{raviOptimizationModelFewShot2017} who designed a meta-learner based on the computations in an LSTM cell. Moreover, we may leverage the advancements in generative modeling for weight generation. \citet{leeSupervisedPretrainingCan2023} used transformers for in-context reinforcement learning, but we also see the works of \citet{zhmoginovHyperTransformerModelGeneration2022a, huPushingLimitsSimple2022,
kirsch2024generalpurposeincontextlearningmetalearning, fiftyContextAwareMetaLearning2024} use transformers and foundation models. More similar to our method is the body of work on using diffusion models for weight generation~\citep{duProtoDiffLearningLearn2023, zhang2024metadiff, wangNeuralNetworkDiffusion2024, soroDiffusionbasedNeuralNetwork2024}. These methods vary in their approach, some leveraging a relationship between the gradient descent algorithm and the denoising step in diffusion models to design their meta-learning algorithm. Others rely on the modeling capabilities of conditioned latent diffusion models to learn the target distribution of weights. Most evaluations conducted were in-distribution tasks---with enough training and capacity, it's clear \textit{meta-models} (i.e. models trained on multiple data distributions) should excel at in-distribution tasks. Hence, there is room to explore adaptation for out-of-distribution tasks. However, generalization to novel tasks often presents a challenge to meta-learning and weight generation frameworks \citep{wangNeuralNetworkDiffusion2024, schuerholt2024sane, soroDiffusionbasedNeuralNetwork2024}, including non-diffusion-based approaches \citet{knyazev2021parameter, knyazev2023canwescale}. 

\section{Stochastic formulation of weight evolution}
\label{app:sb-formulation}
\subsection{Setup}
The present formulation of gradient descent as gradient flow ignores the crucial role of noise within typical neural network optimization schemes. Stochastic differential equations (SDEs) provides a way to model SGD as a continuous-time process while taking into account the role of noise. Following \citet{li2021modelingsgdwithsdes}, we write:
\begin{equation}
\label{eq:sgd-as-sde}
    dX_t = -\nabla \gL(X_t) dt + (\alpha \Sigma(X_t))^{1/2}dW_t
\end{equation}
where $W_t$ is the Wiener process, $\alpha > 0$ is the learning rate, and $\Sigma(X) = \E[(\nabla \gL_\xi(X) - \nabla \gL(X))(\nabla \gL_\xi(X) - \nabla \gL(X))^\top]$; here, $\xi$ is a random variable denoting the random batch of training data in the context of SGD. We may then write the Fokker-Planck-Kolmogorov (FPK) equation
\[
\partial_t p_t - \nabla\cdot(p_t \nabla \gL) = \frac{\alpha}{2} \sum_{ij} \frac{\partial^2}{\partial x_i \partial x_j}([\Sigma^{1/2}\Sigma^{\top/2}]_{ij} p_t),
\]
and the corresponding Schr\"odinger bridge (SB) problem
\begin{equation}
    \label{eq:sb-matching}
    \sP^* := \argmin_{\sP_0 = p_0, \sP_1 = p_1} D_{KL}(\sP|\sQ)
\end{equation}
where $\sQ = \mathrm{Law}(X)$ as governed by Eqn. \ref{eq:sgd-as-sde}. 

\subsection{Solution with known SB matching methods}
We run into the issue of well-posedness if the noise covariance is not known a priori, and also if it depends on the state $X_t$. For now, we are aware of the work by \citet{berlinghieri2025beyond}, which only assumes standard SDE regularity conditions to optimize Eqn. \ref{eq:sb-matching}. In particular, given a density over time $h(\cdot)$ and samples from a random time $t_i$, we may construct a state distribution at snapshots $\hat f_{t_i}(\cdot)$. For our approximation, we instead use an empirical time density $\hat h(\cdot)$ approximated from sampled timesteps, and a candidate SDE model parameterized by $\theta$, $f_{\theta, t}$. Training proceeds by using the maximum mean discrepancy (MMD) to quantify the discrepancy between $\hat h \circ f_{\theta, t}$ and $\hat h \circ \hat f_t$.

To simplify the analysis and align with most SB approximation methods, take $\eps_t = \alpha\E_{X_t} [\Sigma(X_t)]$ to form the approximation
\begin{equation}
    \label{eq:approx-sgd-as-sde}
    dY_t = -\nabla \gL(Y_t) dt + \sqrt \eps_t dW_t,
\end{equation}
and obtain the FPK
\begin{equation}
    \label{eq:approx-fpk}
    \partial_t p_t - \nabla\cdot(p_t \nabla \gL) = \frac{\eps_t}{2} \Delta p_t.
\end{equation}
In this form, multi-marginal methods \citep{lavenant2024towardstrajectory, chen2023deep, shen2025multimarginal} may be employed with intermediate weight samples as reference data.

\paragraph{Variational interpolants.} The recent work by \citet{shen2025multimarginal} sheds some light directly onto the problem of modeling gradient flows. This method allows us to specify a family of possible proxies e.g. those induced by the SDE
\begin{equation}
\label{eq:variational-sde-ref}
dX_t = \nabla \Psi_{\alpha}(X_t) dt + \gamma_\beta dW_t,
\end{equation}
where $\alpha, \beta$ are learnable parameters. This method is explored in the context of multi-marginal Schr\"odinger bridges, but it is trivial to modify it for our purposes:
\begin{enumerate}
    \item For $i = 0, \ldots, K-1$, consider data anchors $\{\theta_{t_i}^j\}_{j \in [N_i]}$ and $\{\theta_{t_{i+1}}^j\}_{j \in [N_{i+1}]}$.
    \begin{enumerate}
        \item Simulate forward to $t_{i+1}$ from $\{\theta_{t_i}^j\}_{j \in [N_i]}$ using Eqn. \ref{eq:variational-sde-ref}.
        \item Simulate backward to $t_i$ from $\{\theta_{t_{i+1}}^j\}_{j \in [N_{i+1}]}$ using Eqn. \ref{eq:variational-sde-ref}.
        \item Use simulated samples to estimate the drift between $t_i$ and $t_{i+1}$.
    \end{enumerate}
    \item Concatenate the estimated drifts and use Stage 2 of Alg. 1 in \citet{shen2025multimarginal} to fit $\alpha$ according to the estimated drifts.
\end{enumerate}
The diffusion parameter $\beta$ can be estimated in an outer loop of the above algorithm, as suggested by \citet{shen2025multimarginal}, allowing us to match $\eps_t$. By following this procedure for flow model training, we can vary our interpolant within a natural family of path distributions, using data anchors to better inform training.

\paragraph{Generalized Schr\"odinger Bridge Matching.}
Due to the close relation with our analysis in Sec. \ref{sec:proxies}, we further discuss GSBM \citep{liu2024generalized} which employs entropic action matching as the inner loop objective. In particular, given the reference process Eqn. \ref{eq:approx-fpk}, invoke \citet[Prop. 3.1]{neklyudov2023actionmatching} to obtain a unique entropic action satisfying the FPK equation Eqn. \ref{eq:approx-fpk}. However, in this case, we have a two level optimization: \textbf{(1)} perform action matching to obtain the drift $u_t^\theta$ for a fixed reference path $(\hat p_t)_{t \in [0, 1]}$, and \textbf{(2)} optimize the marginals $\hat p_t$ given the coupling $p_{0, 1}^\theta$ evolved according to the learned drift $u_t^\theta$ in 
\begin{equation}
    \label{eq:sde-drift}
    dX_t = u_t^\theta(X_t) dt + \sqrt \eps_t dW_t.  
\end{equation}
Most relevant to us is the second stage which involves optimizing the marginal distributions. Given $x_0 \sim p_0$ and $x_1$ from Eqn. \ref{eq:sde-drift}, we also obtain the intermediate states $\{X_{t_k}\}$ for $0< t_1 < \dots < t_K < 1$. To parameterize the marginals $p_t$, we assume a Gaussian path $p_t(X_t | x_0, x_1) = \gN(\mu_t, \sigma_t^2 \mI)$, hence deferring our optimization to $\mu_t$ and $\sigma_t$. \citet{liu2024generalized} uses 1-D splines with the control points $\{X_{t_k}\}$ to obtain $\mu_t$ and a uniform sampling of $\sigma_t$ (with boundary conditions $\sigma_0 = \sigma_1 = 0$). Using this parameterization, we can compute the minimization objective 
\begin{equation}
    \label{eq:gsbm-objective}
    \gJ = \int_0^1 \E_{p_t(X_t | x_0, x_1)} \left[  \frac{1}{2}||u_t^\theta(X_t)||_2^2 + V_t(X_t)  \right] \, dt
\end{equation}
to optimize the control points $X_{t_k}$ and $\sigma_{t_k}$, given some choice of $V_t(\cdot)$.

Returning to the marginals $p_t$, we wish to relate its evolution with Eqn. \ref{eq:approx-sgd-as-sde}. Applying, for example, \citet[Prop. 3]{du2024doob}, we can write down the FPK equation
\begin{equation}
    \partial_t\hat p_t(x) = -\nabla \cdot(\hat p_t(x) u_t(x)) + \frac{\eps_t}{2}\Delta \hat p_t \quad \text{where} \quad u_t(x) = \dot \mu_t + \frac{1}{2}\left(\frac{\dot \sigma_t}{\sigma_t} - \frac{\eps_t}{\sigma_t}\right)(x - \mu_t).
\end{equation}
Therefore, our analogy to Eqn. \ref{eq:action-gap} would be to choose our energy functional $V$ to minimize the gap
\[
\int_0^1 \E_{x \sim \hat p_t} ||\nabla \gL(x) + u_t(x)||^2_2 \, dt.
\]

% Mention the Li SDE paper. Mention the beyond schrodinger bridge stuff and note that if diffusion coeff is known, then we reduce to the current problem as we are only matching law(p_t) which is the same as the expectation of the diffusion term vanishes. Idea is to use Lagrangian as the drift as we are doing now, but of course make it entropic OT. First, do this analysis, and then discuss methods for doing this in general, e.g. SB iterative refinement, multi-marginal stuff, beyond SB...

\section{Interpolating paths}
\label{app:metric-learning}
There are many variants of interpolating paths that have been used as reference in the flow matching literature. Typically, the conditional probability path is of the form 
\begin{equation}
    \label{eq:prob-path}
    p_t(x | z) = \gN(x; \mu_t(z), \sigma_t^2(z)\mI)
\end{equation}
with the consideration that the boundary conditions are satisfied, i.e. $\int p_0(x |z) \, dq(z) = p_0(x)$ and $\int p_1(x |z) \, dq(z) = p_1(x)$. The simplest case by \citet{tongImprovingGeneralizingFlowbased2024} considers a linear interpolant $\mu_t(x_0, x_1) = tx_1 + (1-t)x_0$ with a constant, small variance and $q(x_0, x_1) = p_0(x_0) p_1(x_1)$.

Since we are most concerned with the inducing vector field (see Eqn. \ref{eq:cfm}), we would like a simple vector field that induce the desired conditional probability path. The Gaussian path Eqn. \ref{eq:prob-path} is known \citep{rohbeck2025modeling} to have flow
\[
\phi_t(x |z) = \mu_t(z) +\sigma_t(z)\left(\frac{x - \mu_0(z)}{\sigma_0(z)}\right)
\]
which, in fact, has a unique inducing vector field \citep{lipmanFlowMatchingGenerative2023}
\begin{equation}
\label{eq:vec-field}
u_t(x|z) = \frac{\sigma'_t(z)}{\sigma_t(z)}(x_t - \mu_t(z)) + \mu_t'(z).
\end{equation}
The above suggests that if we have a desired interpolating path $\mu_t$, the vector field to match is known from Eqn. \ref{eq:vec-field}. Following, we discuss a few examples from literature.

\paragraph{Lifted curves.} In spline interpolation, it is best if data points correspond to different timesteps to better capture the trajectory over time. If instead we have a sampling of population over time, it is more natural to consider matching marginal path distributions. \citet{lavenant2024towardstrajectory} (gWOT) provides a framework for exactly this. When we have data from $N_i$ samples at various time points $t_i$, $\{\theta_{t_i}^j\}_{i \in [K]}$, we may form the empirical distribution
\[
    \hat\rho_{t_i} = \frac{1}{N_i} \sum_{j = 1}^{N_i} \delta_{\theta_{t_i}^j}.
\]
To produce smoother interpolants, we also introduce a regularizer and Gaussian convolution with a kernel of width $h$ to obtain $\hat\rho_{t_i}^h$. Specifically, we minimize (by gradient descent) the convex functional
\[
F_{K, \lambda, h} (\rmR):= \sigma^2 D_{KL}(\rmR|\rmW^\sigma) + \frac{1}{\lambda}\sum_{i = 1}^K|t_{i + 1}- t_i| D_{KL}(\hat\rho_{t_i}^h|\rmR_{t_i})
\]
over a law on paths $\rmR \in \mathcal P(\Omega)$.

\paragraph{Data-dependent geodesics.} \citet{kapusniakMetricFlowMatching2024} made use of a data-dependent metric to compute a geodesic. In practice, the interpolant is obtained through training a neural network to minimize
\begin{equation}
    \label{eq:mfm}
    L_{\text{mfm}}(\theta) = \E_{\rt \sim U[0, 1], (\rvx_0, \rvx_1) \sim \pi}||v_\theta(\rvx_{t, \eta^*}, t) - \dot \rvx_{t, \eta^*}||^2_{g(\rvx_{t, \eta^*})}.
\end{equation}
Comparing with Eqn. \ref{eq:vec-field}, note that we are using a small, constant variance, so indeed we only match the interpolant derivative. Moreover, as $g$ is a data-dependent metric, its optimization towards a suitable proxy path reduces to learning a parameterization of $g$ w.r.t. the intermediate weights $\gD$. Here, we note the MFM framework fits without issue into the proxy path framework of Sec. \ref{sec:proxies} primarily because geodesics are presumed smooth in time. Intuitively, due to randomness in the training process, some intermediate weights $\theta \in \gD$ ought to be weighed less than others so that the induced geodesic $\gamma$ better matches the true loss gradient. 

\paragraph{Cubic splines.} Following the recent work by \citet{rohbeck2025modeling}, we can fit a cubic spline to conditional data points and use this curve as the reference interpolant. Cubic splines are obtained by optimizing the variational objective
\begin{equation}
    \label{eq:cubic-spline}
    \mu_t(x_0, \ldots, x_k) = \arg\,\min_{\gamma \in \gH^2([t_0, t_K])} \int_{t_0}^{t_K} ||\ddot \gamma(t)||_2^2 \, dt
\end{equation}
where $x_k = \gamma(t_k)$ and $\gH^2([t_0, t_K])$ denotes the class of functions that has absolutely continuous first derivative and weak second derivative on the interval $[t_0, t_K]$. The conditioning data may be sampled, say from our dataset $\gD$. Moreover, \citet{rohbeck2025modeling} considered class-conditional trajectories. In our setting, this could mean weight trajectories from different training datasets. Thus, we sample intermediate weights $(x_0^c, \ldots, x_K^c)$, where $c$ denotes the training set, and we may use techniques such as classifier-free guidance on $v_\theta$ to improve training. Note that as the interpolant is entirely contingent on the intermediate distributions, and specifically the intermediate samples, interpolant optimization cannot be done within this approach. Instead, we rely on a faithful sampling $\gD$ from the reference Eqn. \ref{eq:ce} to provide a good proxy path.

\paragraph{Piecewise linear interpolants.} We conclude by discussing the most straightforward method of incorporating conditional data. In particular, if we have points $z = (x_0, \ldots, x_K)$ at times $(t_0, \ldots, t_K)$, we have the conditional vector field 
\begin{equation}
    \label{eq:lin-interp-vfield}
    u_t(x|z) = \sum_{k = 0}^{K-1} \frac{x_{k + 1} - x_k}{t_{k + 1} - t_k} \mathbf{1}_{[t_k, t_{k + 1})}(t).
\end{equation}
Within our conceptual framework, we may write 
\[
\frac{1}{2}\int_0^1 \E_\gamma ||\nabla \gL(\gamma_t) + \dot \gamma_t||^2 \, dt = \frac{1}{2} \int_0^1 \sum_{k = 0}^{K-1}\mathbf{1}_{[t_k, t_{k + 1})}(t) \E_{(x_k, x_{k + 1}) \sim p_{t_k} \otimes p_{t_{k + 1}}} ||\nabla \gL(\gamma_t) +\frac{x_{k +1} - x_k}{t_{k + 1} - t_k}||^2
\]
as the action gap. As the interpolant is entirely contingent on the intermediate distributions, and specifically the intermediate samples, interpolant optimization cannot be done within this approach. However, this approach has desirable limiting properties w.r.t. the sampling of $\gD$. In particular, recalling Eqn. \ref{eq:ce}, if we let $\sigma:[0, 1] \times \R^p \to \R^p$ be the flow of the drift in Eqn. \ref{eq:ce}, the expression 
\[
\E_{(x_k, x_{k + 1}) \sim p_{t_k} \otimes (\sigma_{t_{k + 1} - t_k})_\#p_{tk}} ||\nabla \gL(\gamma_t) +\frac{x_{k +1} - x_k}{t_{k + 1} - t_k}||^2
\]
goes to zero as $t_{k + 1} - t_k \to 0$. In other words, if the intermediate samples are drawn from the same training trajectory, and are sampled with sufficient time-density, the gap can be made arbitrarily small.
% The above can maybe be proven using Gateaux derivatives? essentially we want to lift the notion into probaility space.

\section{Further method details}
\label{app:arch}

Here, we expound on the implementation of our approach. See Figure~\ref{fig:f2sl} for a schematic of the training and inference process.

\subsection{Pre-trained model acquisition}
\label{app:pre-training-acqui}

\paragraph{Datasets and architectures.} We conduct experiments on a wide range of datasets, including CIFAR-10/100~\citep{krizhevsky2009cifar}, STL-10~\citep{coates2011stl10}, Fashion-MNIST~\citep{xiao2017fmnist}, CIFAR10.1 \citep{recht2019imagenet}, and Camelyon17 \citep{veeling2018rotation}. To evaluate our meta-model's ability to generate new subsets of network parameters, we conduct experiments on ResNet-18~\citep{He2015DeepRL}, ViT-Base~\citep{dosovitskiy2021animage}, ConvNeXt-Tiny~\citep{liu2022convnet2020s}, the latter two are sourced from timm \citet{rw2019timm}. As we shall detail below, small CNN architectures from a model zoo~\citep{schurholtModelZoosDataset2022} are also used for full-model generations.

\paragraph{Model pre-training and checkpointing.} For better control over the target distribution $p_1$, in experiments involving ResNet-18, ViT-Base, and ConvNeXt-Tiny, we pre-train these base models from scratch on their respective datasets. We follow \citet{wangNeuralNetworkDiffusion2024} and train the base models until their accuracy stabilizes (in practice, we train all base models for 100 epochs). Depending on the experiment, we save checkpoints differently. If only the converged weights are needed, we save 200 weights at every iteration past 100 epochs. Otherwise, if we require intermediate weights, then we specify the number of saving epochs and the number of weights to save in such an epoch. For instance, we may have 100 save epochs and 100 saves per epoch, meaning that we save at \textit{every} training epoch and save weights in the first 100 iterations of each epoch.

\paragraph{Model zoo.} The model zoo used for meta-training in the model retrieval setting, was sourced from \citep{schurholtModelZoosDataset2022}. As the base model, we employed their CNN-medium architecture, which consists of three convolutional layers and contains 10,000-11,000 parameters, depending on the number of input channels. We pre-trained these models as described above.

\subsection{Variational autoencoder}
The variational autoencoder follows the implementation of \citet{soroDiffusionbasedNeuralNetwork2024}, which employs a UNet architecture for auto-encoding. In particular, given a set of model weights $\{\mathcal{M}_i\}_{i = 1}^N$, we first flatten the weights to obtain vectors $\vw_i \in \R^{d_i}$. For the sake of uniformity, we always zero-pad vectors to $d = \max_i d_i$. Alternatively, we allow for layer-wise vectorization: set a chunk size $\ell$ which corresponds to the weight dimension of a network layer. Then, zero-pad $\vw_i$ to be a multiple of $\ell$, say $\tilde d$. This allows us to partition into $k$ equal length vectors $\vw_{i, k} \in \R^{\tilde d/k}$. Typically, larger models benefit from layer-wise vectorization.

Subsequently, we train a VAE to obtain an embedding of such vectors by optimizing the objective:
\begin{equation}
\label{eq:vae}
    L_{\text{VAE}}(\theta, \phi) := -\E_{\vz \sim q(\vz | \vw)} [\log p_\theta(\vw | \vz)  + \beta D_{KL}(q_\phi(\vz | \vw) || p(\vz))]
\end{equation}
where $\vw$ is the vectorized weights, $\vz$ is the embedding we are learning, and $p_\theta, q_\phi$ are the reconstruction and posterior distributions respectively. Moreover, we fix the prior $p(\vz)$ to be a $(0, 1)$-Gaussian and the weight is set to be $\beta = 10^{-5}$. For layer-wise vectorization, we simply change the input dimensions to match the chunk size. Upon decoding, we concatenate the chunks to re-form the weight vector.

As for training, the VAE was trained with the objective in \eqref{eq:vae}. Moreover, following p-diff \citep{wangNeuralNetworkDiffusion2024}, we add Gaussian noise to the input and latent vector, i.e. given noise factors $\sigma_{in}$ and $\sigma_{lat}$ with encoder $f_\phi$ and decoder $f_\theta$, we have \[
\vz = f_\phi(\vw + \xi_{in}), \; \hat \vw = f_\theta(\vz + \xi_{lat}) \quad \text{where} \quad  \xi_{in} \sim \mathcal{N}(0, \sigma_{in}^2\mI), \; \xi_{lat} \sim \mathcal{N}(0, \sigma_{lat}^2\mI).
\]

A new VAE is trained at every instantiation of the \fsl-CFM model as architectures often differ in their input dimension for different experiments. However, they are trained with different objectives: the VAE is trained to minimize reconstruction loss. In all experiments, we fix $\sigma_{in} = 0.001$ and $\sigma_{lat} = 0.01$.

\begin{algorithm}[ht]
\caption{Sampling Trajectories from Weight Tensor}
\label{alg:sampling}
\begin{algorithmic}[1]
\Require Number of save epochs $N_{epochs}$, savepoints per epoch $S$, classifier size $D$, tensor of classifier weights $X \in \R^{N_{epochs} \times S \times D}$, number of time samples $K \leq N_{epochs}$.
\Ensure Sampled tensor $W \in \R^{K \times S \times D}$
\State Flatten $X$ to shape $[N_{epochs} \cdot S, D]$ \Comment{Assumes first dim. sorted by training iteration.}
\State Sample $K \cdot S$ indices $I$ uniformly from $[0, N_{epochs} S)$
\State Extract $X[I]$ and reshape to $W \in \R^{K \times S \times D}$
\If{add\_noise}
    \State $W \gets W + \eps$ \Comment{$\eps \sim \gN(0, 10^{-3})$}
\EndIf
\State \Return $W$
\end{algorithmic}
\end{algorithm}

\subsection{Generative meta-model}

\paragraph{Multi-marginal flow matching.} Multi-marginal flow matching (MMFM) proceeds in the same regression paradigm as flow matching models, with the difference being the regression target is now Eqn.~\ref{eq:lin-interp-vfield} (piece-wise linear interpolation). In practice, $z = (x_0, \ldots, x_K)$, where we have $K=3, 4$, or 5, as evaluated in Table \ref{tab:uncond}, and the elements of $z$ are sampled from $p_{t_0} \otimes \dots \otimes p_{t_K}$ constituted by samples obtained from base model pre-training App. \ref{app:pre-training-acqui}. Typically, we save a lot more checkpoints than needed, and so we need to subsample them by Algorithm~\ref{alg:sampling} to create the training dataset. For better training, we may also choose to sample weight initializations, i.e. $x_0 \in z$, for each batch. This can be done easily by using \verb|torch.nn.init| to reset parameters of a module to the desired initialization, e.g. Kaiming uniform. In fact, sampling the weight initialization is all that is required for the validation step.

\paragraph{JKOnet$^{*}$.} The dataloading aspect of JKOnet training is exactly the same as MMFM, so we focus our attention to the training regiment. Following \citet{terpin2024learning}, we may formally write our loss as 
\begin{equation}
    \label{eq:jkonet-loss}
    \sum_{k = 0}^{K-1} \int_{\R^D \times \R^D} ||\nabla V_\theta(x_t, t) + (x_{t + 1}- x_t) / \Delta t ||^2 \, d \gamma(x_t, x_{t+1}).
\end{equation}
Recalling the gradient descent formula, the time argument is not necessary for this loss, but we found it to improve empirical performance. Moreover, the $\gamma(\cdot, \cdot\cdot)$ distribution is traditionally an optimal coupling of consecutive marginals $p_{t_k}, p_{t_{k + 1}}$, however, we have a more natural choice: $x_t$ and $x_{t+1}$ ought to be checkpoints saved consecutively during pre-training. Indeed, we found this to be the superior choice in terms of performance. Moreover, an important note is in order: since JKOnet${}^*$ expects $x_t$ to evolve by a gradient flow, this imposes a condition on how the checkpoints are obtained. We found that meta-training \textbf{only works if pre-training is done using the SGD optimizer}. Modern optimizers such as Adam notably fails for JKOnet$^{*}$. We also connect this point to the footnote in Table \ref{tab:model-retrieval}: just because the weights evolve by a gradient flow does \textit{not} mean that the encoded weights behave the same way. Indeed, model retrieval in latent space fails when we use JKOnet$^{*}$. This suggests an avenue for further research: construct an autoencoder that preserves gradient flow in latent space, where preferably the latent gradient flow can be deduced/estimated from the original gradient steps.

\paragraph{Architecture.}
The neural network used for flow matching is the UNet from D2NWG \citep{soroDiffusionbasedNeuralNetwork2024}. The specific hyperparameters used for the CFM model varies between experiments, so we leave this discussion to the next section. For experiments such as model retrieval where we require a conditioning vector, this is implemented by concatenating a context feature vector (e.g. images are passed into a CLIP encoder \citep{Radford2021LearningTV}, and optionally an attention module if we have a set of context images) to the last dimension (same axis as the neural network parameters); of course, we ignore these extra features after the forward pass.

\begin{table}[t]
\caption{Model architectures and hyperparameters. The JKOnet model uses the same UNet as \fsl-CFM and \fsl-MMFM, but with the up-sampling section replaced by pooling and linear layers.}
\label{tab:model_architectures}
\tablestyle{1.5pt}{1.0}
\centering
\begin{tabular}{lc}
\toprule

\multicolumn{2}{c}{\textbf{Weight Encoder} (UNet)} \\
\midrule
Architecture & VAE \\
Latent Space Size & $4 \times 4 \times 4$ \\
Upsampling/Downsampling Layers & 5 \\
Channel Multiplication (per Downsampling Layer) & (1, 1, 2, 2, 2)\\
ResNet Blocks (per Layer) & 2 \\
KL-Divergence Weight & 1e-5 \\
\midrule
\multicolumn{2}{c}{\textbf{\fsl-CFM and \fsl-MMFM Model} (UNet)} \\
\midrule
Input Size w/ VAE & $4 \times 4 \times 4$ \\
Input Size w/o VAE & variable \\
Upsampling/Downsampling Layers & 4 \\
Channel Multiplication (per Downsampling Layer) & (2, 2, 2, 2) \\
ResNet Blocks (per Layer) & 2 \\
\bottomrule
\end{tabular}

\vskip -0.10in
\end{table}

\subsection{Reward fine-tuning}
\label{app:reft}

\paragraph{Overview.} Continuing from the exposition in Section \ref{sec:methods}, we first provide further motivation for our choice of AM. We also note that our noise injection is different from the noise schedule assumed in the original derivations \citep{domingo-enrichAdjointMatchingFinetuning2024}. As such, we modified the computations slightly. However, since the conventional flow matching algorithm injects a very small noise in the Gaussian path, i.e. $p_t(x_t | x_0, x_1) = \gN(\mu_t, \sigma_t^2 I)$, where practically $\sigma_t = 10^{-3}$, we found that divisions by $\sigma_t$ in the adjoint matching algorithm will explode quantities in the loss. To resolve this issue, we derived a deterministic version of the adjoint matching algorithm, which we found to work much  better with more sensible norms. Further, we derived a multi-marginal variant of the adjoint matching algorithm, however, we focus our resources only on the CFM case. All computations are given below.

\paragraph{Motivation.} Unlike naive fine-tuning strategies that require backpropagation through the inference-time solver, AM reduces the problem to a regression objective, closely resembling standard CFM training (see Algorithm \ref{alg:det-am}). Importantly, it simply renews the drift by learning an additive correction: $v^{ft}_t(x) := v^{base}_t(x) + u(t,x)$ (App. \ref{app:reft}), so the analysis in Sec. \ref{sec:proxies} continues to apply without modification.

\paragraph{Adjoint matching with $\sigma_t = 10^{-3}$.} We start in \citet[App. C.5]{domingo-enrichAdjointMatchingFinetuning2024}. As we are using \fsl-CFM, let $Y_0 \sim p_0$ and $Y_1 \sim p_1$, then we may write
\begin{equation}
    \label{eq:am-cfm-velocity}
    \begin{split}
        v(x,t) &= \E[\alpha_t Y_1 + \beta_t Y_0 \mid x=\alpha_tY_1 + \beta_t Y_0]\\
        &=\E[\frac{\dot \alpha_t(x-\beta_tY_0)}{\alpha_t} + \dot \beta_t Y_0 \mid x=\alpha_tY_1 + \beta_t Y_0]\\
        &=\frac{\dot\alpha_t}{\alpha_t}x + (\dot \beta_t - \frac{\dot\alpha_t}{\alpha_t}\beta_t) \E[Y_0 \mid x=\alpha_tY_1 + \beta_t Y_0],
    \end{split}
\end{equation}
using the fact that $Y_1 = (x-\beta_t Y_0)/\alpha_t$. On a practical note, we typically have $\alpha_t = t, \, \beta_t = 1-t$. A central piece of the theory consists of relating this vector field $v$ with the score function $s(x,t)$. We note that once this relationship is established with $\sigma_t$, then the rest of the adjoint matching derivation follows. The crucial step is in writing the score function; following from \citet[Eqn. 92]{domingo-enrichAdjointMatchingFinetuning2024},
\begin{equation}
    \label{eq:am-score}
    s(x,t) = \frac{\E[p_{t|1}(x|Y_1) \nabla \log p_{t|1} (x|Y_1)]}{p_t(x)}, \quad \text{where} \quad p_{t|1}(x|Y_1) = \frac{\exp(-||x-\alpha_t Y_1||^2/2\beta_t^2)}{(2\pi\beta_t^2)^{D/2}}.
\end{equation}
It suffices to change $p_{t|1}$:
\begin{equation}
    \label{eq:am-new-score}
    p_{t|1}(x|Y_1) = \frac{\exp(-||x-\alpha_t Y_1||^2/(2\sigma_t)^2)}{(2\pi\sigma_t^2)^{D/2}} \implies \nabla \log p_{t|1}(x|Y_1) = -\frac{x-\alpha_tY_1}{\sigma_t^2}
\end{equation}
and combine with Eqn. \ref{eq:am-score} to obtain $s(x,t) = -\beta_t \E[Y_0 \mid x = \beta_t Y_0 + \alpha_t Y_1] / \sigma_t^2$. Further combining with Eqn. \ref{eq:am-cfm-velocity}, we have the correspondence
\begin{equation}
    \label{eq:am-vel-score-equiv}
    v(x,t) = \frac{\dot \alpha_t}{\alpha_t}x + \left( \frac{\dot \alpha_t}{\alpha_t} \beta_t - \dot \beta_t
    \right) \frac{\sigma_t^2}{\beta_t} s(x,t) \iff s(x,t) = \frac{\beta_t\left(v(x,t) - \frac{\dot \alpha_t}{\alpha_t} x \right)}{\sigma_t^2 \left( \frac{\dot\alpha_t}{\alpha_t} \beta_t - \dot\beta_t\right)}.
\end{equation}
Concluding, we may thus write the SDE that mirrors \citet[Eqn. 6]{domingo-enrichAdjointMatchingFinetuning2024}:
\begin{equation}
    \label{eq:am-new-sde}
    \begin{split}
        dX_t &= b(X_t,t) \, dt + \eps(t) \, dB_t\\
        &= \left[\frac{\dot \alpha_t}{\alpha_t}x + s(x,t) \left(\left( \frac{\dot \alpha_t}{\alpha_t}\beta_t - \dot \beta_t\right) \frac{\sigma_t^2}{\beta_t} + \eps(t)^2/2\right)\right] \, dt + \eps(t) \, dB_t.
    \end{split}
\end{equation}

\begin{algorithm}[ht]
\caption{Deterministic adjoint matching for fine-tuning flow models}
\label{alg:det-am}
\begin{algorithmic}[1]
\Require Pre-trained FM velocity field $v^{base}$, step size $h$, number of fine-tuning iterations $N$, trajectory batch size $M$, dataset batch size $m$, initialized $v^{ft} = v^{base}$, cross-entropy loss $\Ls$.
\Ensure Reward fine-tuned FM velocity field $v^{ft}$.
\For{$n \in [0, \ldots, N-1]$}
    \State Sample $M$ trajectories $\mX = (X_t)_{0\leq t \leq 1}$ with an Euler solver with step size $h$ and $X_0 \sim p_0$.
    \State $\{(x_i, y_i)\}_{i = 1}^m \sim \train$ \Comment{Sample from classifier dataset}
    \State For each of the $M$ trajectories, evaluate classifier on predicted weights\\ \hspace{5em} \(\ell(X_1) = \sum_{i = 1}^m \Ls(\textsc{nnet}_{X_1}(x_i), y_i).\) \Comment{$\ell(X_1)$ is a vector of size $M$}
    \State $\tilde a_{t-h} = \tilde a_t + h\tilde a_t ^\top \nabla_{X_t}v^{base}(X_t,t), \quad \tilde a_1 = \nabla_{X_1}\ell(X_1)$ \Comment{Backward solve the lean adjoint ODE}
    \State Detach from computation graphs: $X_t = \verb|stopgrad|(X_t)$ and $\tilde a_t = \verb|stopgrad|(\tilde a_t)$.
    \State $\Ls_{AM}(\theta) = \sum_{t \in \{0, \ldots, 1-h\}} ||v_\theta^{ft}(X_t, t) - (v^{base}(X_t,t) - \tilde a_t)||^2$.
    \State Compute $\nabla_\theta \Ls_{AM}$ and optimize as usual.
\EndFor
\end{algorithmic}
\end{algorithm}

\paragraph{Deterministic in practice.} The key quantity in adjoint matching is the memory-less schedule given as $\eps(t) = \sqrt{2\eta_t}$, where reading from Eqn. \ref{eq:am-new-sde}, $\eta_t = \left( \frac{\dot \alpha_t}{\alpha_t}\beta_t - \dot \beta_t\right)\frac{\sigma_t^2}{\beta_t}$. Plugging in $\alpha_t = t, \, \beta_t = 1-t$, we have 
\[
\eps(t) = \sqrt{\frac{2\sigma_t^2}{1-t}((1-t)/t + 1)} = \sqrt{\frac{2\sigma_t^2}{t(1-t)}} = \sigma_t\sqrt{\frac{2}{t(1-t)}}.
\]
Following the suggestions in \citet[App. H]{domingo-enrichAdjointMatchingFinetuning2024}, we add a small value $h$ to both terms in the denominator. In practice, this means that $\sqrt{\frac{2}{t(1-t)}} \leq 10$, whereas $\sigma_t = 10^{-3}$. Looking at the adjoint matching algorithm, this clearly presents a problem of scale. We found in practice that a deterministic variant, i.e. $\eps(t) = 0$ works well in our evaluations. To formulate this modified algorithm, note that our primary objective is to optimize over the control $u$ in $dX_t = (b(X_t, t) + \eps(t) u(X_t, t)) \, dt + \eps(t) \, dB_t$. The original formulation includes the correction term to the drift, and defines $v^{ft}(x,t)$ as \[
b(x,t) + \eps(t) u(x,t) = 2v^{ft}(x,t) - \frac{\dot \alpha_t}{\alpha_t}x.
\]
We simplify by instead defining $b(x,t) = v^{base}(x,t)$ and $v^{ft}(x,t) = b(x,t) + u(x,t)$. Consequently, this simplifies the adjoint ODE and the regression target, leading to Algorithm \ref{alg:det-am}.

\paragraph{Multi-marginal adjoint matching.} Since this paper touches upon multi-marginal flow matching in its experiments, we think it natural to extend the adjoint matching computations to the multi-marginal setting. In this section, we provide the derivations and leave experimentation to future work. Suppose we are given $z = (Y_0, \ldots, Y_K)$ at times $(t_0, \ldots, t_K) \subset [0, 1]$, we first need to find an expression for a sampled point $x$ analagous to $x = \beta_tY_0 + \alpha_tY_1$. Motivated by Eqn. \ref{eq:lin-interp-vfield}, define
\begin{equation}
    \label{eq:am-mmfm-sample}
    x = \sum_{k = 0}^{K-1}[(1-s_t^{(k)}) Y_k + s_t^{(k)} Y_{k+1}] \mathbf{1}_{[t_k, t_{k + 1})}(t), \quad \text{where } s_t^{(k)} = \frac{t-t_k}{t_{k + 1}-t_k}.
\end{equation}
Differentiating w.r.t. $t$, we indeed get $u_t(x|z)$ in Eqn. \ref{eq:lin-interp-vfield}; moreover, to simplify notatoin, set $r_t^{(k)} = 1-s_t^{(k)}$. The key insight is that only one of the terms in Eqn. \ref{eq:am-mmfm-sample} is non-zero for any $t \in [0, 1]$, therefore 
\begin{equation}
    \label{eq:am-mmfm-vel}
    \begin{split}
        v(x,t) &= \sum_k \mathbf{1}_{[t_k, t_{k + 1})}(t) \E[\dot r_t^{(k)} Y_k + \dot s_t^{(k)}Y_{k+1} \mid x]\\
        &= \sum_k \mathbf{1}_{[t_k, t_{k + 1})}(t) \E\left[\dot r_t^{(k)} Y_k + \dot s_t^{(k)}\frac{x-r_t^{(k)}Y_k}{s_t^{(k)}} \mid x\right]\\
        &= \sum_k \mathbf{1}_{[t_k, t_{k + 1})}(t) \left[ \frac{\dot s_t^{(k)}}{s_t^{(k)}} x + \left(r_t^{(k)} - \frac{\dot s_t^{(k)}}{s_t^{(k)}} r_t^{(k)}\right)
         \E[Y_k \mid x]\right].
    \end{split}
\end{equation}
Like before, the key step is to relate the velocity $v$ and the score function. Note that the backward conditional probabilty is now
\begin{equation}
    \label{eq:am-backward-condprob}
    p_{t|1}(x|z) = \sum_k \mathbf{1}_{[t_k, t_{k + 1})}(t) p_{t|t+1}(x|Y_{k+1}),
\end{equation}
where again we use $\sigma_t$:
\begin{equation}
    \label{eq:am-backward-condprob-summands}
    p_{t|t+1}(x|Y_{k+1}) = \frac{\exp(-||x-s_t^{(k)}Y_{k+1}||^2/2\sigma_t^2)}{(2\pi \sigma_t^2)^{D/2}} \implies \nabla \log p_{t|t+1}(x|Y_{k+1}) = - \frac{x- s_t^{(k)}Y_{k+1}}{\sigma_t^2}.
\end{equation}
Since $\E, \nabla$ are linear, we may move around the sum as we please. Therefore, analagous to the CFM case:
\begin{equation}
    s(x,t) = -\sum_k \mathbf{1}_{[t_k, t_{k + 1})}(t) \E\left[
    \frac{x-s_t^{(k)}Y_{k + 1}}{\sigma_t^2} \Bigg\vert \; x
    \right] = -\sum_k \mathbf{1}_{[t_k, t_{k + 1})}(t) \frac{r_t^{(k)}}{\sigma_t^2} \E[Y_k \mid x].
\end{equation}
At this point, it is a matter of using the same trick as in Eqn. \ref{eq:am-mmfm-vel}: there must be at most one $k \in [0, \ldots, K-1]$ such that $\E[Y_k \mid x] = -\sigma_t^2 s(x,t) / r_t^{(k)}$. Therefore, 
\begin{equation}
    \label{eq:mmfm-vel-score-equiv}
    \begin{split}
    &v(x,t) = \sum_k \mathbf{1}_{[t_k, t_{k + 1})}(t)\left[ \frac{\dot s_t^{(k)}}{s_t^{(k)}} x + \frac{\sigma_t^2}{r_t^{(k)}}\left(\frac{\dot s_t^{(k)}}{s_t^{(k)}} r_t^{(k)} - r_t^{(k)}\right)
    s(x,t)\right] \\
    \iff &s(x,t) = \sum_k \mathbf{1}_{[t_k, t_{k + 1})}(t) \left[ \frac{r_t^{(k)}}{\sigma_t^2 \left( \frac{\dot s_t^{(k)}}{s_t^{(k)}} r_t^{(k)} - r_t^{(k)}\right)}
    \left(v(x,t) - \frac{\dot s_t^{(k)}}{s_t^{(k)}}x \right) \right]
     \end{split}
\end{equation}
Now recall that \[
\frac{dX_t}{dt} = v(X_t, t) \iff dX_t = \left( v(X_t, t) + \frac{\eps(t)^2}{2} s(x,t) \right) \, dt + \eps(t) \, dB_t
\]
as given in \citet[Eqns. 3, 4]{domingo-enrichAdjointMatchingFinetuning2024}. Let $b(X_t, t)$ denote the drift term in the SDE above. Plugging in $v$ and $s$ from Eqn. \ref{eq:mmfm-vel-score-equiv}, we find 
\begin{equation}
    \label{eq:mmfm-eta}
    \eta_t = \sum_{k = 0}^{K-1} \mathbf{1}_{[t_k, t_{k + 1})} (t) \frac{\sigma_t^2 \left( \frac{\dot s_t^{(k)}}{s_t^{(k)}} r_t^{(k)} - r_t^{(k)}\right)}{r_t^{(k)}}.
\end{equation}
Therefore, the memory-less noise schedule $\eps(t) = \sqrt{2\eta_t}$ yields
\[
b(X_t, t) = \sum_{k = 0}^{K-1} \mathbf{1}_{[t_k, t_{k + 1})} (t) \left[ 2v(X_t, t) - \frac{\dot s_t^{(k)}}{s_t^{(k)}}X_t\right]
\]
and by analogy we define the $v^{ft}$ via a control function $u$:
\begin{equation}
    b(x, t) + \eps(t) u(x,t) = \sum_{k = 0}^{K-1} \mathbf{1}_{[t_k, t_{k + 1})} (t)\left[ 2v^{ft}(x, t) - \frac{\dot s_t^{(k)}}{s_t^{(k)}}x\right].
\end{equation}

\subsection{Practical considerations and scaling}
\label{app:practice-and-scaling}
Thus far, our method has been demonstrated on small CNN models (on the order of 10,000 parameters), and a natural direction for future work is extending it to larger network architectures. For instance, the collection of training trajectories, detailed in App. \ref{app:pre-training-acqui}, requires about 2 hours of base model training (NVIDIA A40 GPU) and 1 GB of storage per downstream dataset (e.g. CIFAR10). The meta-model itself comes to about 4M parameters with slight deviations caused by differences in base model size (e.g. MNIST classifiers are slightly smaller than CIFAR10 classifiers); more generally, it scales linearly with the number of parameters of the base model. The modular nature of our framework, however, provides multiple avenues for improving scalability. One promising path is leveraging the growing number of publicly available model zoos for larger architectures \citep{falk2025modelzoovisiontransformers, schürholt2025modelzoophasetransitions}, which would reduce both training time, and possibly storage requirements if models can be streamed on demand. Another direction is the incorporation of encoding schemes such as VAEs, which have proven critical in scaling weight-space learning to architectures like ResNet-18 and Vision Transformers \citep{schuerholt2024sane, wang2025recurrentdiffusionlargescaleparameter}.  Furthermore, when only weight embeddings are needed for training, storage costs can be further reduced by pre-computing and saving embeddings directly, rather than full model weights.

\section{Further experiments}
\label{app:futher-exps}
In this section, we explore in depth the design choices that were made concerning the generative model. In particular, we bring to light some upsides of flow matching over prior diffusion methods \citep{soroDiffusionbasedNeuralNetwork2024}, evaluate each approach's ability to model weight trajectories, and examine the effects of weight space symmetry \citep{hechtnielsen1990algebraicstructure, chen1993geometryfeedforward}.

\subsection{Parameter symmetries}
In this section, we detail two methods for incorporating parameter symmetries into the modeling framework. As mentioned in the main text, the symmetries we investigate involve permutation \citep{hechtnielsen1990algebraicstructure} and scaling \citep{chen1993geometryfeedforward} of neurons in a neural network. We proceed with two approaches: weight alignment and equivariant architectures.

\paragraph{Alignment to a reference.} This approach involves choosing a reference base model and aligning the \textit{permutation} of the layers of base models in the training and evaluation sets; we use Git Re-Basin \citep{ainsworthGitReBasinMerging2023} for this task. 
We perform few-step inference and trajectory modeling with this setup, as shown in Table \ref{tab:few-step-inference} and Figure \ref{fig:w1-tests-aligned}.

\begin{wrapfigure}{r}{0.4\textwidth}
  \begin{center}
    \includegraphics[width=0.38\textwidth]{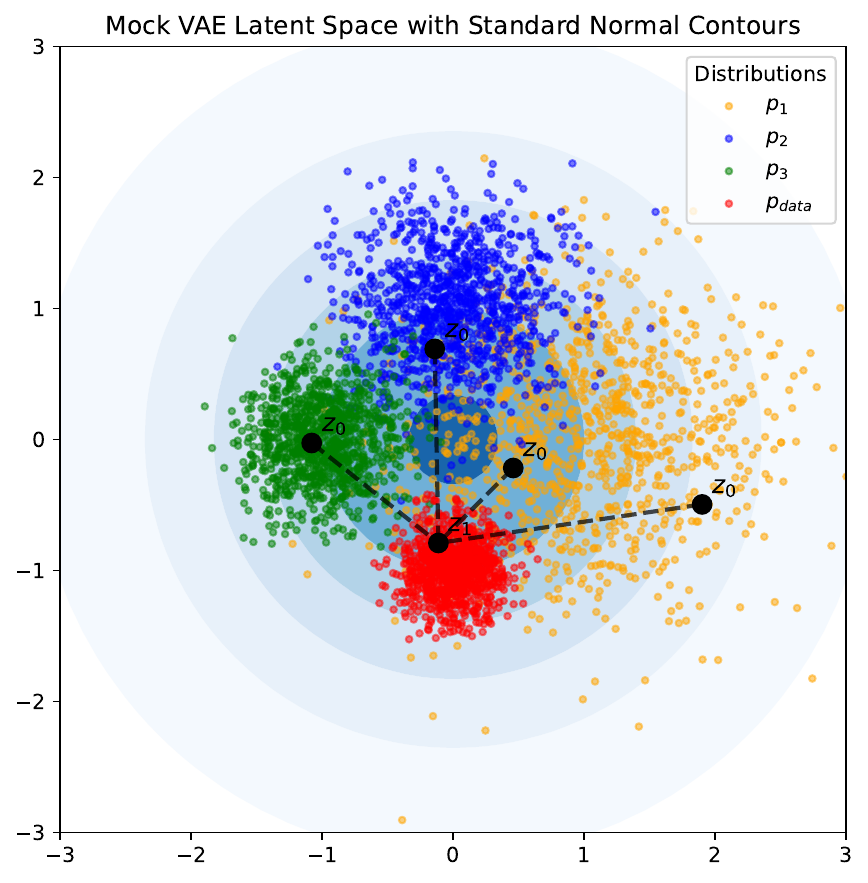}
  \end{center}
  \caption{Mock visualization of VAE latent space in D2NWG. Although the interpolant (dotted line connecting $z_0$ to $z_1$) does not follow the reference trajectory $p_0 \to p_1 \to \dots \to p_{data}$, the points on the line reside within the data manifold.}
  \vspace{-3em}
  \label{fig:vae-latent-space}
\end{wrapfigure}

\paragraph{MonomialNFN as a meta-model.} In this section, we subsitute the UNet for the equivariant architecture MonomialNFN \citep{tran2024monomial}, which accounts for \textit{both} permutation and scaling symmetries. The network design largely follows from the designs in \citet{zhou2023permutation, tran2024monomial}. We start with a Gaussian Fourier Transformation with a mapping size of 16 and scale of 3. Then, we encode the features with a IOSinusoidalEncoding. The encoded features are passed into 4 MonomialNFN layers, employing a residual connection, of hidden dimension 32. Before every layer, we add a time-conditioned HyperNetwork \citep{sitzmann2020implicit}, similar to the time embedding layers of the UNet architecture. Finally, a last MonomialNFN layer is used to reduce the channel down to 1. We used a batch size of 32, and for optimization, we instantiate an AdamW optimizer with learning rate $10^{-3}$ and weight decay $10^{-2}$ for 1000 epochs. As we have explored few-step inference and trajectory modeling with layer alignment above, we focus on reward fine-tuning and evaluating the support of classifier weights as done in App. \ref{app:reft-results}, as shown in Table \ref{tab:corrupt-ft-aug}.

\subsection{Few-step inference}
In this experiment, we wish to evaluate the inference capability of the generative meta-models under strained conditions, in this case, one-step and two-step inference of CNN3 base models. To further distinguish the flow matching framework, we also test \fsl-CFM with a Gaussian source distribution (as opposed to Kaiming uniform). As comparison, we have the diffusion baseline from D2NWG \citep{soroDiffusionbasedNeuralNetwork2024}. 

Table \ref{tab:few-step-inference} shows that \fsl-CFM mostly outperforms D2NWG in the standard and aligned settings (see below for description). Moreover, we see a clear advantage when using Kaiming uniform (the weight initialization used during base model pre-training) over the standard Gaussian. Concerning parameter symmetries, Table \ref{tab:few-step-inference} shows considerable improvements, especially on CIFAR10 tests, indicating that alignment helps in compute-constrained environments. The disparity between source distributions for the \fsl-CFM approach can be explained by the distance between the Gaussian and trained weights distribution (see index 0 of the bottom row plots in Figure \ref{fig:w1-tests}). In contrast, the VAE loss includes a KL divergence term which regularizes the latent space towards a standard Gaussian (Eqn. \ref{eq:vae}).

\begin{table}[h]
\caption{Mean validation accuracy over 10 runs of unconditional generation for CIFAR10 and MNIST using just one or two inference steps. We include results where the \fsl-CFM source distribution is a standard Gaussian, as well as training runs where we aligned the layers to a reference model. Best mean accuracies are \textbf{bolded}, second-best \underline{underlined}.}
\label{tab:few-step-inference}
\tablestyle{3pt}{1.0}
\begin{center}
\begin{adjustbox}{max width=\textwidth}
\begin{tabular}{l @{\hspace{10pt}}cc cc}
\toprule
&  \multicolumn{2}{c}{One-step} & \multicolumn{2}{c}{Two-step} \\
\cmidrule(lr){2-3} \cmidrule(lr){4-5} 
& CIFAR10 & MNIST & CIFAR10 & MNIST \\
\midrule

D2NWG & \pmval{51.36}{4.64} & \pmval{93.42}{2.39} & \pmval{50.54}{4.75} & \pmval{93.44}{2.35} \\
D2NWG-Aligned & \underline{\pmval{55.47}{5.06}} & \pmval{95.94}{0.80} & \pmval{55.23}{5.24} & \pmval{95.96}{0.81} \\
\fsl-CFM w/ Gauss & \pmval{26.72}{5.81} & \pmval{50.83}{13.71} & \pmval{47.46}{10.99} & \pmval{88.60}{21.9} \\
\fsl-CFM-Aligned w/ Gauss & \pmval{25.63}{4.04} & \pmval{55.75}{17.72} & \pmval{49.15}{5.18} & \pmval{96.07}{3.02}\\
\fsl-CFM & \pmval{49.75}{3.07} & \underline{\pmval{96.64}{1.06}} & \underline{\pmval{62.98}{0.47}} & \underline{\pmval{98.11}{0.19}} \\
\fsl-CFM-Aligned & \textbf{\pmval{57.18}{2.34}} & \textbf{\pmval{97.89}{0.53}} & \textbf{\pmval{63.27}{0.34}} & \textbf{\pmval{98.62}{0.04}} \\
\bottomrule
\end{tabular}
\end{adjustbox}
\vspace{-1em}
\end{center}
\end{table}

\begin{figure}
    \centering
    \includegraphics[width=0.49\linewidth]{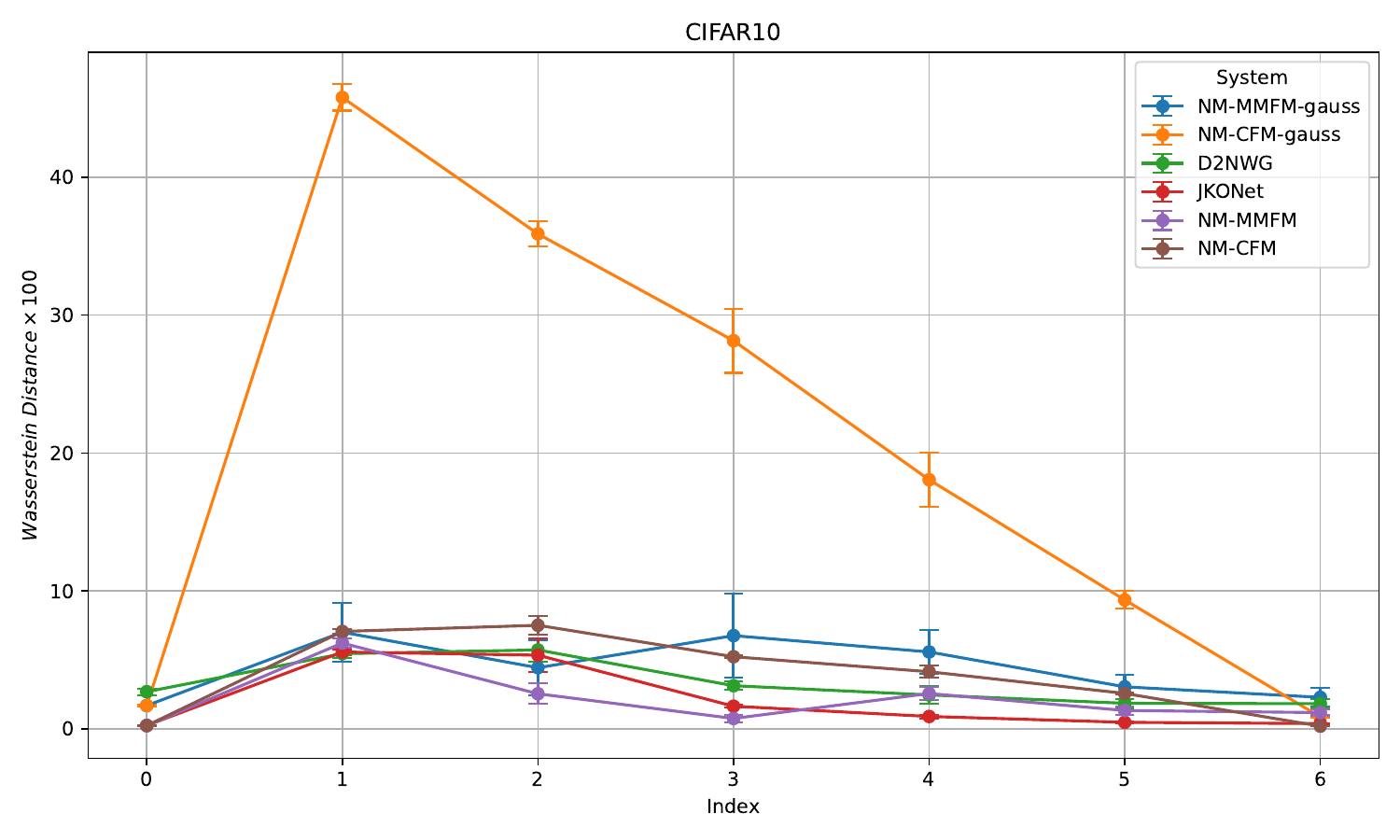}
    \includegraphics[width=0.49\linewidth]{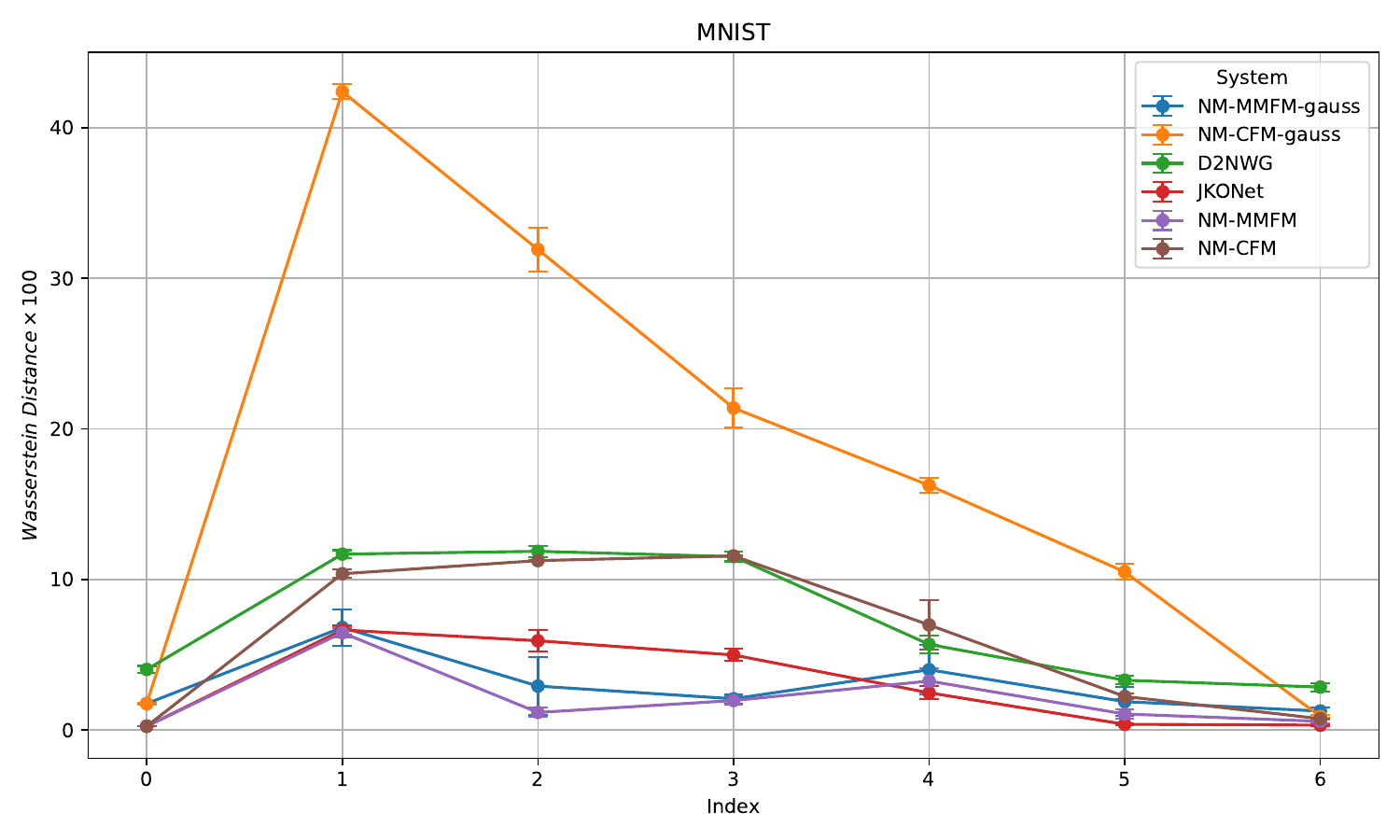}
    \includegraphics[width=0.49\linewidth]{figures/cifar10_zoomed_wasserstein_plot.pdf}
    \includegraphics[width=0.49\linewidth]{figures/mnist_zoomed_wasserstein_plot.pdf}
    \caption{Mean Wasserstein-1 distance ($\times 100$) between reference and generated intermediate marginals over 5 seeds of unconditional generation. We also provide plots excluding \fsl-CFM w/ Gauss due to its deviation from the rest.}
    \label{fig:w1-tests}
\end{figure}

\subsection{Trajectory modeling}
In this experiment, we evaluate the ability of different approaches to model the weight trajectory. As decided in Section \ref{sec:experiments}, we used \fsl-MMFM(3) and \fsl-JKO(4) as representatives for this method. Moreover, in the interest of fairness, we divide the trajectory into 5 buckets, and so the MMFM and JKO methods would need to interpolate between training distributions. Once again, our baseline is D2NWG where the VAE is trained on full trajectory weights at each batch iteration. See Figure \ref{fig:w1-tests} for results.

Interestingly, we find that the $W_1$-distance of the generated trajectory to be consistently lower in D2NWG vs. \fsl-CFM, but both methods lag behind MMFM and JKO which explicitly models the weight trajectory. We suspect the D2NWG performance is because of the latent space which is trained to encode trajectory data. Thus, even if the interpolated weights do not follow the expected trajectory, it still lands on the data manifold; see Figure \ref{fig:vae-latent-space}. Concerning parameter symmetries, Figure \ref{fig:w1-tests-aligned} shows a more modest improvement in the intermediate indices, e.g. indices 1-3, where D2NWG improves significantly, but happens to degrade in later indices, crucially where downstream evaluation takes place. We hypothesize that the improvement for D2NWG arises due to autoencoder underfitting. Indeed, the autoencoder only take parameters as input and is trained to reconstruct it, however, the \fsl-CFM models take in an extra time parameter that helps distinguish between the different parameter distributions.

\begin{figure}
    \centering
    \includegraphics[width=0.49\linewidth]{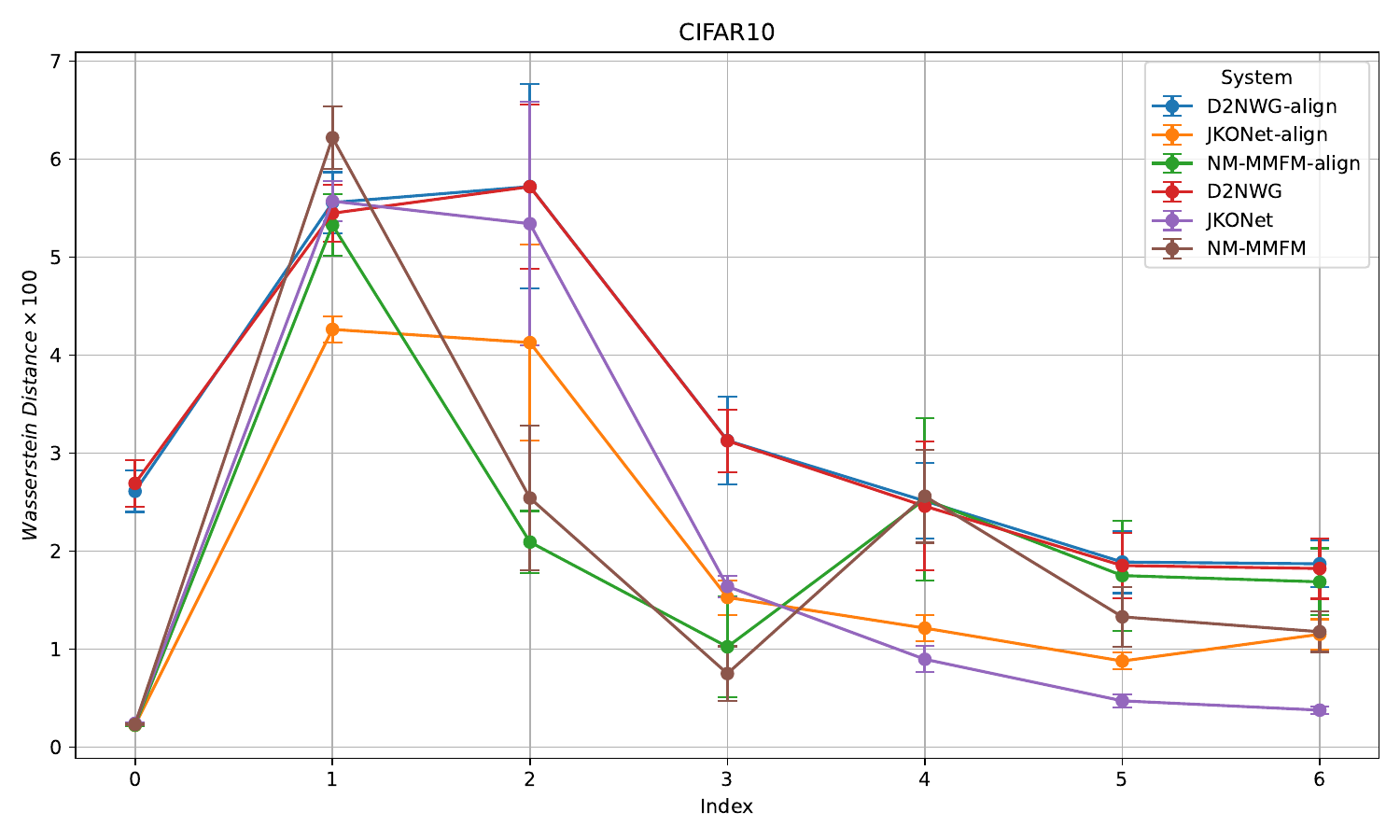}
    \includegraphics[width=0.49\linewidth]{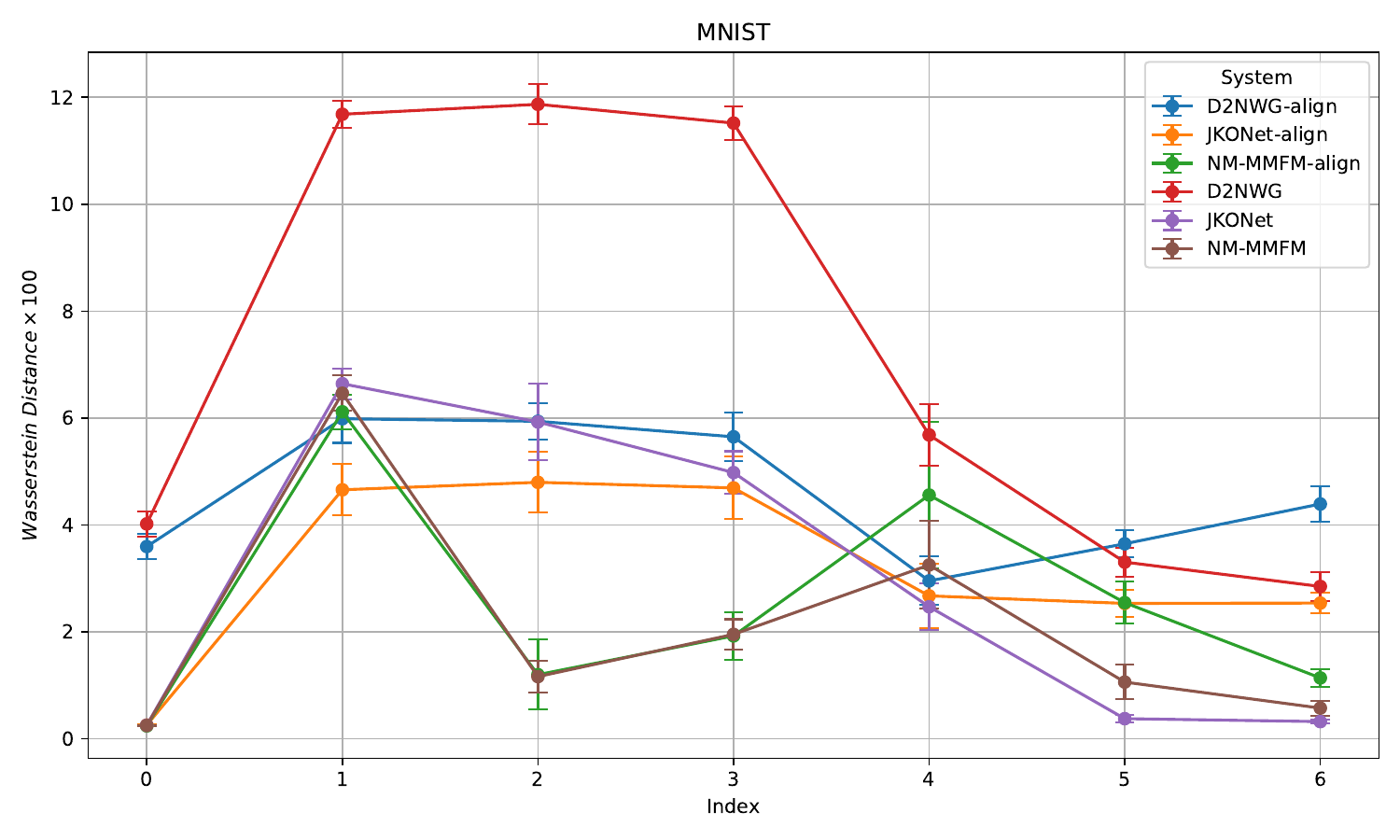}
    \includegraphics[width=0.49\linewidth]{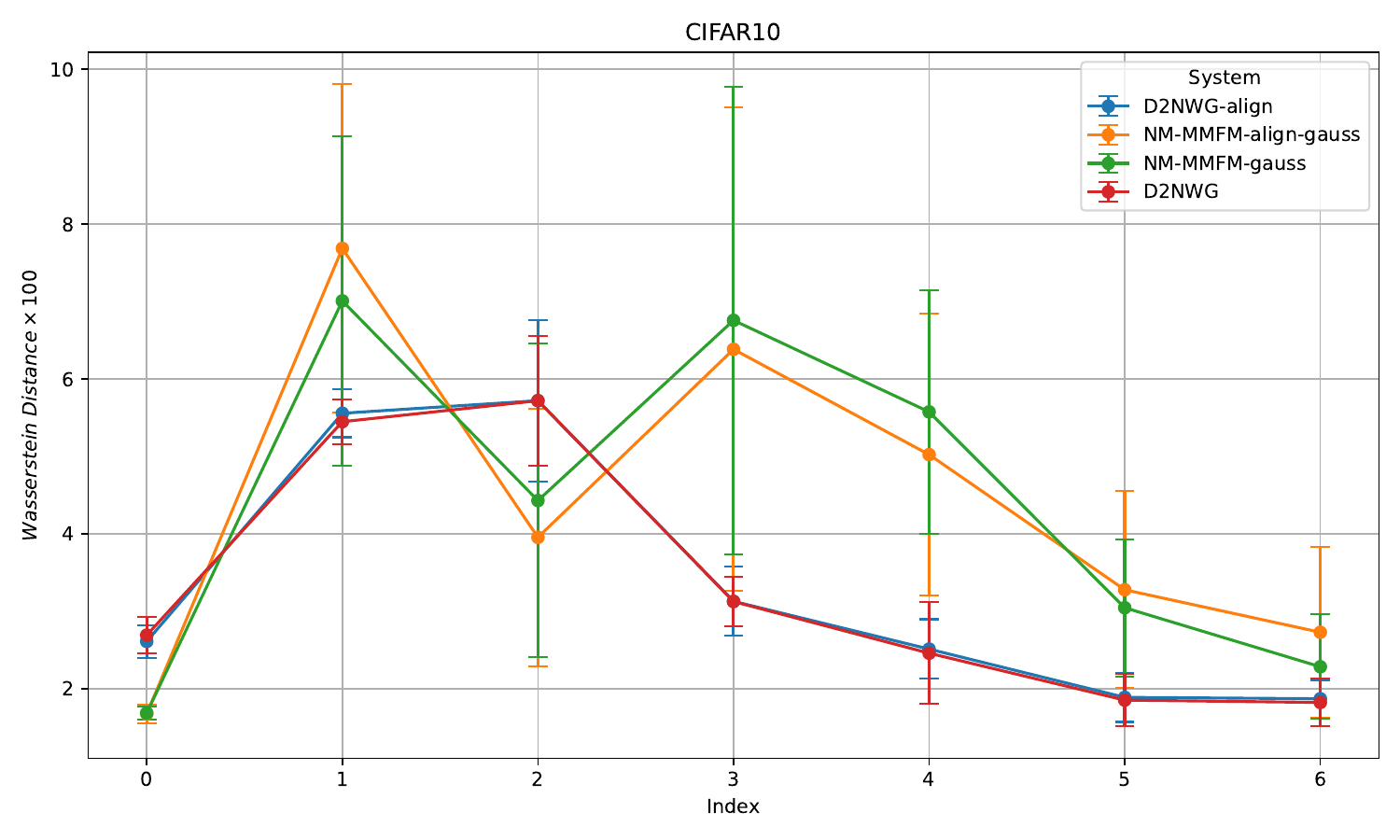}
    \includegraphics[width=0.49\linewidth]{figures/cifar10_aligned_gauss_wasserstein_plot.pdf}
    \caption{Mean Wasserstein-1 distance ($\times 100$) between reference and generated intermediate marginals over 5 seeds of unconditional generation, excluding \fsl-CFM results, illustrating the effect of permutation alignment on the training and validation data. Due to the high $W_1$ distances resulting from \fsl-CFM, we exclude them in the comparison.}
    \label{fig:w1-tests-aligned}
\end{figure}

\subsection{Reward fine-tuning: support of classifier weights}
We first note that for symmetry to hold, we had to remove the \verb|MaxPool2d| layers that were previously included in the CNN architecture, hence the different validation accuracy. This can be seen as a downside of current equivariant architectures.

Intuitively, if we equate data in the same equivalence class, then the effective support of classifier weights ought to expand, thus improving generalization to prediction on corrupted data. This intuition turns out to be false, as seen in Table \ref{tab:corrupt-ft-aug}. It shows that the new architecture in fact degrades the validation accuracy, which is most evident in the Corruption Level 2 column. We hypothesize this is because the members of the equivalence class are separated by a considerable distance and they do not correspond to meaningful regions of weight space that influence generalization on corrupted datasets. Empirically, this reinforces the findings of a recent work \citep{zeng2025weightsgeneralization}. Moreover, when using equivariant architectures, we forgo the basic Gaussian noise augmentation, which may be more beneficial for slight corruption in downstream task data. 

\begin{table}[h]
\caption{Mean validation accuracy over five generated classifiers after reward fine-tuning on increasingly corrupted datasets. The -MoNFN suffix indicates that network made use of the MonomialNFN \citep{tran2024monomial} architecture instead of the UNet. The arrow '$\rightarrow$' indicates the accuracy before (left) and after (right) reward fine-tuning.}

\label{tab:corrupt-ft-aug}
\tablestyle{3pt}{1.0}
\begin{center}
\begin{adjustbox}{max width=\textwidth}
\begin{tabular}{l @{\hspace{10pt}}ccc}
\toprule
& \multicolumn{3}{c}{CIFAR10} \\
\cmidrule(lr){2-4} 
Corruption Level & 0 & 1 & 2 \\
\midrule
SGD fine-tuning & 55.68 $\rightarrow$ 55.96 & 52.47 $\rightarrow$ 54.00 & 19.79 $\rightarrow$ 42.54 \\
\fsl-CFM-GaussAug & \pmval{54.52}{0.56} $\rightarrow$ \pmval{54.65}{0.57} & \pmval{51.62}{0.56} $\rightarrow$ \pmval{52.43}{0.72} & \pmval{19.45}{1.06} $\rightarrow$ \pmval{30.06}{0.80}  \\
\fsl-CFM-MoNFN & \pmval{54.09}{0.73} $\rightarrow$ \pmval{54.17}{1.70} & \pmval{50.94}{2.02} $\rightarrow$ \pmval{51.50}{1.01} & \pmval{19.76}{1.62} $\rightarrow$ \pmval{27.03}{1.65} \\

\midrule

& \multicolumn{3}{c}{MNIST} \\
\cmidrule(lr){2-4}
Corruption Level & 0 & 1 & 2 \\
\midrule
SGD fine-tuning & 92.55 $\rightarrow$ 93.65 & 86.92 $\rightarrow$ 91.19 & 9.51 $\rightarrow$ 87.64 \\
\fsl-CFM-GaussAug & \pmval{92.63}{0.18} $\rightarrow$ \pmval{92.65}{0.18} & \pmval{86.22}{0.73} $\rightarrow$ \pmval{88.74}{0.29} & \pmval{9.51}{0.01} $\rightarrow$ \pmval{30.36}{5.78} \\
\fsl-CFM-MoNFN & \pmval{92.06}{0.33} $\rightarrow$ \pmval{92.23}{3.48} & \pmval{83.52}{2.05} $\rightarrow$ \pmval{85.61}{5.58} & \pmval{9.51}{0.00} $\rightarrow$ \pmval{18.09}{2.52} \\

\bottomrule
\end{tabular}
\end{adjustbox}
\vspace{-1em}
\end{center}
\end{table}

\subsection{Discussion}

\subsubsection{Failure cases}
In this section, we note specific failure cases and discuss possible explanations. 
    
\paragraph{Gaussian \fsl-JKO.} In prior experiments, we also attempted to use the Gaussian distribution as the source distribution in the \fsl-JKO approach, but we observed failure (i.e. validation accuracy no better than chance) in all trials. We suspect this has to do with JKOnet's sensitivity to changes in scale given that the model output is simply a scalar value. Indeed, since the standard deviation and norm of parameters distributed by a Gaussian is considerably higher (about $10-100\times$) than the initialization (Kaiming uniform) or the converged weights, JKOnet would need to effectuate a large gradient $\nabla_xV(x,t)$ at small times $t$, and suddenly transition to small adjustments after the first intermediate distribution (for \fsl-JKO(4) this would be $t=0.2$).
    
\paragraph{Stochasticity levels.} When employing \fsl-CFM on a latent space created by a VAE, we observe good performance with a standard deviation $\sigma = 0.1$ when sampling the interpolant $x_t \sim p_t(x_t | x_0, x_1)$. However, this fails completely when applied to the raw weight space, where we instead set $\sigma=10^{-3}$. This clearly indicates that to generate performant base models, the parameters cannot deviate by much from the converged parameters (at least for the CNN architecture). 

\paragraph{Diffusion models on weight space.} We also attempted to use the diffusion model from D2NWG directly on weight space (i.e. D2NWG without the VAE component). We observe some decrease in the loss but found it to be at least an order of magnitude higher than the \fsl-CFM loss and the validation accuracy did not exceed 20\%. We hypothesize that this relates to the issue of stochasticity discussed above. With \fsl-CFM, one has greater control over the level of stochasticity as opposed to a diffusion model. Indeed, the forward process of diffusion requires noising towards a Gaussian distribution and so there is a clear tradeoff when we decrease the beta noise schedule $(\beta_t)_{t \in [0, T]}$: if $\beta_t$ is made small for a greater number of timesteps, then the forward process will not reach a proper Gaussian distribution. Consequently, this adversely affects the reverse (inference) process as the model will have a poor understanding of the source distribution.

\subsubsection{Conclusion}
From these results, several key observations emerge. First, given that diffusion models fail when applied directly in weight space---and considering the results in Table \ref{tab:few-step-inference}---we see clear advantages in end-generation precision when using an FM model over prior diffusion-based approaches. In particular, the ability to control the level of stochasticity appears important for achieving high base-model validation accuracy (see failure cases above). Second, the flexibility in choosing the source distribution also substantially affects the accuracy with which weight-space trajectories are modeled, as illustrated in Figure \ref{fig:w1-tests}. This further supports the use of (MM)FM, which can accurately model weight-space data without requiring an autoencoder. Lastly, we observe that although layer alignment of permutation states helps simplify the training distribution resulting in easier training (see Table \ref{tab:few-step-inference} and Figure \ref{fig:w1-tests-aligned}), in the usual case of inference, where we use 100 steps instead of 1 or 2, and when modeling weight trajectory, the improvements are quite modest as our current architecture has the capacity to fit the training data well. In fact, when we move on to using equivariant architectures, we find that in the case of reward fine-tuning, this degrades downstream performance.

\begin{table}[h]
\caption{Best validation accuracy of unconditional \fsl{} generation for various datasets. \textit{orig} denotes base models trained conventionally by SGD and \textit{p-diff} those generated with p-diff \citep{wangNeuralNetworkDiffusion2024}. We focus on generating just the batch norm parameters.}
\label{tab:uncond-app}
\tablestyle{3pt}{1.0}
\begin{center}
\begin{adjustbox}{max width=\textwidth}
\begin{tabular}{l @{\hspace{20pt}}ccc ccc ccc ccc}
\toprule
&  \multicolumn{3}{c}{CIFAR100} & \multicolumn{3}{c}{CIFAR10} & \multicolumn{3}{c}{MNIST} & \multicolumn{3}{c}{STL10} \\
\cmidrule(lr){2-4} \cmidrule(lr){5-7} \cmidrule(lr){8-10} \cmidrule(lr){11-13}
Base Models  & orig. & \fsl{} & p-diff. & orig. & \fsl{} & p-diff. & orig. & \fsl{} & p-diff & orig. & \fsl{} & p-diff\\
\midrule
ResNet-18 & 71.45 & 71.42 & 71.40 & 94.54 & 94.36 & 94.36 & 99.68 & 99.65 & 99.65 & 62.00 & 62.00 & 62.24 \\
ViT-base  & 85.95 & 85.86 & 85.85 & 98.20 & 98.11 & 98.12 & 99.41 & 99.38 & 99.36 & 96.15 & 95.77 & 95.80 \\
ConvNext-tiny  & 85.06 & 85.12 & 85.17 & 98.03 & 97.89 & 97.90 & 99.42 & 99.41  & 99.40 & 95.95 & 95.63 & 95.63  \\
\bottomrule
\end{tabular}
\end{adjustbox}
\vspace{-1em}
\end{center}
\end{table}

\section{Experimental details and further results}
\label{app:experimental-details}
In this section, we provide further experimental details such as hyperparameters and computation times, alongside some extra results.

\subsection{Unconditional generation}
Unconditional generation involves two stages: first is the training of base models. We choose a Resnet18, ViT-B, ConvNext-tiny, and CNN3 for our base models and provide the training parameters in Table~\ref{tab:task-training}. Next, we train the generative meta-model; Table~\ref{tab:generative} lists the training settings.
We found that in most cases, the autoencoder and \fsl-CFM converge after 1000 epochs. One should note that when using Kaiming uniform for the source $p_0$, it is also necessary to train the autoencoder on this distribution. On a NVIDIA A40 card, we estimate training time to be around 2 hours when the VAE is used. Un-encoded training generally takes double that time. Inference requires less than 1 minute to complete for both encoded and un-encoded runs, and it takes a few seconds to go through the validation set to compute the classification accuracy. Notably, the JKO runs were about \textbf{$2\times$ as fast} since the model output is a scalar. This means we can compress inputs through the downsampling layers of the UNet, leading to a smaller parameter count. Extra results are presented in Table \ref{tab:uncond-app}.

\paragraph{Remark on generating batch norm parameters.} In order to reduce the target parameter count, we restrict ourselves to the batch norm parameters for larger architectures as in Table \ref{tab:uncond-app}. We train base models in the same way as App. \ref{app:pre-training-acqui} suggests, but for training we only save \verb|state_dict| tensors with \verb|bn| in the key. The rest of the model is saved for evaluation: after generating batch norm parameters, we impute these parameters back into the trained base models and validate as usual.

\begin{table}[t]
\caption{Task training settings. Note that only CNN3 was evaluated for MMFM and JKOnet experiments, so it's the only model type with non-zero \textit{Num. of save epochs} and \textit{Savepoints per epoch}.}
\label{tab:task-training}
\tablestyle{1.5pt}{1.0}
\centering
\begin{tabular}{lccc}
\toprule
Model Type & ResNet18 & ViT \& ConvNext & CNN \\
\midrule
Optimizer & SGD & AdamW & SGD\\
Initial training LR & 0.1 & $1\times 10^{-4}$ & 0.1\\
Training scheduler & MultiStepLR & CosineAnnealingLR & MultiStepLR \\
Layer params. saved & Last 2 BN layers & Last 2 BN layers & All layers\\
Num. of save epochs & 0 & 0 & 100 \\
Savepoints per epoch & 0 & 0 & 100 \\
Num. final weights saved & 200 & 200 & 200 \\
Saved parameter count & 2048 & 3072 & [10565, 12743]\\
Training epochs & 100 & 100 & 100\\
Batch Size & 64 & 128 & 128\\
\bottomrule
\end{tabular}

\vskip -0.10in
\end{table}

\begin{figure}
    \centering
    \includegraphics[width=0.8\linewidth]{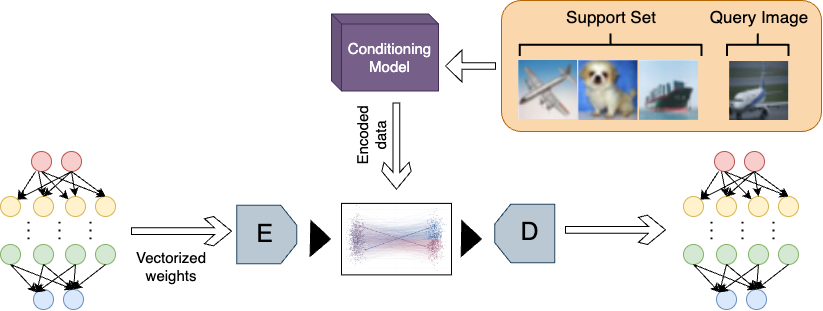}
    \caption{A schematic of the training process of \fsl{}-CFM w/ VAE for conditional generation. Given a set of pre-trained target weights and a support set of images, we apply the conditioned flow model to pushforward a sample of the latent prior towards encoded target weights. The decoder is used during inference where we start from a sample of the latent prior and pushforward towards the target distribution with a trained vector field $v_\theta(\cdot, t; \vy)$ where $\vy$ is the support set embedding.}
    \label{fig:f2sl}
\end{figure}

\subsection{Model retrieval} 
For each dataset, we first sample 30 images as representatives, and then pre-compute its CLIP \citep{Radford2021LearningTV} embeddings. We then have a choice of how to aggregate the 30 embeddings. In our case, we use a light multi-head attention module and a linear layer to compress the context into a vector. Generally, we maintain the feature dimension---the version of CLIP we use returns a 768-dimensional vector. To condition the flow/diffusion model with this context, we concatenate to the channel dimension if we are using a VAE, or to the last dimension otherwise. One should note that the number of parameters differ between datasets; for instance, the three-channel datasets have slightly more parameters than their one-channel counterparts. Hence, we apply a simple zero-padding to standardize the input dimension. On a NVIDIA A40 card, we estimate a training time of 2 hours for the VAE and 3 hours for the CFM to achieve our level of accuracy (w/ VAE) and about 5 hours un-encoded. Inference times remain the same as in \textit{unconditional generation}.

\begin{table}[t]
\caption{Training settings for modules in the unconditional generation experiment. The number of JKOnet inference steps depends on the number of intermediate marginal distributions we are modeling.}
\label{tab:generative}
\vspace{0.5em}
\tablestyle{1.5pt}{1.0}
\centering
\begin{tabular}{l cccc}
\toprule
Model Type & Autoencoder & CFM w/ AE & MMFM & JKOnet\\
\midrule
Optimizer & AdamW & AdamW & AdamW & AdamW\\
(LR, weight decay) & (1e-4, 2e-6) & (1e-4, 2e-6) & (3e-4, 2e-6) & (5e-3, 2e-6) \\
Num. inference steps & n/a & 100 & 100 & variable \\
Weight initialization & n/a & Kaiming uniform & Kaiming uniform & Kaiming uniform \\
Epochs & 500 & 1000 & 1000 & 1000 \\
Batch size & 64 & 64 & 64 & 64 \\
\bottomrule
\end{tabular}
\vskip -0.10in
\end{table}

\subsection{Downstream initialization}
In this evaluation, seen in Table \ref{tab:ft-retrieval}, we obtain weights from model retrieval (albeit trained with weights before convergence), and fine-tune on their corresponding dataset, i.e. if we retrieve an MNIST classifier, we fine-tune on MNIST and $\reallywidetilde{\text{MNIST}}$. Fine-tuning is done conventionally with the task training settings in Table \ref{tab:task-training}. The corruption of datasets is done by applying the following transformations: random horizontal flip, random rotation (max $15$ deg), color jittering, Gaussian blur (kernel size of 3, $\sigma \in [0.1, 2.0]$).

\begin{table*}[h]
\vspace{-1em}
\caption{Mean validation accuracy over top-5 fine-tuned generated weights post-retrieval. Setup follows Table \ref{tab:model-retrieval}, excepting the target weight data. $\widetilde{\text{tilde}}$ indicates corrupted datasets, on which the model was not trained.}
\label{tab:ft-retrieval}
{\fontsize{10}{10}\selectfont
\tablestyle{2.2pt}{1.0}
\renewcommand{\arraystretch}{0.9}
\setlength{\tabcolsep}{4pt}
\begin{center}
\begin{adjustbox}{max width=\textwidth}
    \begin{tabular}{l l c c c c c c c c}
    \toprule
    Epoch & Method &
    $\reallywidetilde{\text{CIFAR10}}$ & $\reallywidetilde{\text{STL10}}$ & $\reallywidetilde{\text{MNIST}}$ & $\reallywidetilde{\text{FMNIST}}$ &
    CIFAR10 & STL10 & MNIST & FMNIST\\
    \midrule
    \raisebox{-1.5\normalbaselineskip}[0pt][0pt]{0}
     & RandomInit & $\sim 10\%$ & $\sim 10\%$ & $\sim 10\%$ & $\sim 10\%$ & $\sim 10\%$ & $\sim 10\%$ & $\sim 10\%$ & $\sim 10\%$\\
     & \fsl-CFM w/ VAE & \pmval{39.66}{0.08} & \pmval{36.85}{0.36} & \pmval{90.93}{0.35} & \pmval{72.44}{0.27} & \pmval{44.63}{0.05} & \pmval{40.06}{0.02} & \pmval{95.57}{0.05} & \pmval{83.68}{0.24}\\
     & \fsl-MMFM w/ VAE & \pmval{39.08}{0.22} & \pmval{21.98}{ 0.89} & \pmval{90.55}{0.23} & \pmval{74.08}{0.08} & \pmval{41.89}{0.37} & \pmval{36.41}{1.54} & \pmval{90.02}{1.09} & \pmval{83.38}{0.29}\\
     & \fsl-JKO & \pmval{40.53}{0.27} & \pmval{18.83}{0.30} &\pmval{90.80}{0.20} & \pmval{74.45}{0.15} & \pmval{45.04}{0.19} & \pmval{41.76}{0.21} & \pmval{95.42}{0.22} & \pmval{83.90}{0.04}\\
    \midrule
    
    \raisebox{-1.5\normalbaselineskip}[0pt][0pt]{1}
     & RandomInit & \pmval{34.05}{1.13} & \pmval{16.64}{1.61} & \pmval{83.21}{0.57} & \pmval{66.81}{0.74}	& \pmval{36.27}{2.05} & \pmval{22.13}{2.11} & \pmval{96.52}{0.20} & \pmval{77.21}{0.26} \\
     & \fsl-CFM w/ VAE & \pmval{44.68}{0.16}	& \pmval{38.18}{0.40} & \pmval{94.02}{0.42} & \pmval{76.43}{0.22} & \pmval{48.28}{0.14} & \pmval{41.66}{0.19} & \pmval{97.38}{0.10} & \pmval{85.27}{0.30} \\
     & \fsl-MMFM w/ VAE & \pmval{43.96}{0.37} & \pmval{29.44}{1.25} &	\pmval{94.71}{0.35} & \pmval{79.85}{0.47} & \pmval{47.98}{0.32} & \pmval{39.93}{1.45} & \pmval{97.68}{0.13} & \pmval{85.50}{0.12}\\
     & \fsl-JKO & \pmval{45.04}{0.31} & \pmval{22.26}{0.08} & \pmval{94.94}{0.17} & \pmval{80.63}{0.20} & \pmval{49.61}{0.29} & \pmval{42.41}{0.31}  & \pmval{97.39}{0.02} & \pmval{85.48}{0.13} \\
     
    \midrule
    \raisebox{-1.5\normalbaselineskip}[0pt][0pt]{5}
     & RandomInit & \pmval{46.55}{0.80} & \pmval{25.08}{1.13} & \pmval{92.53}{0.28} & \pmval{79.08}{0.93} & \pmval{47.74}{1.33}	 & \pmval{31.55}{2.00} & \pmval{98.24}{0.03} & \pmval{84.87}{0.40}\\
     & \fsl-CFM w/ VAE & \pmval{47.41}{0.13}	& \pmval{39.98}{0.29} & \pmval{95.22}{0.09} & \pmval{79.88}{0.61} & \pmval{51.69}{0.14} & \pmval{42.62}{0.08} & \pmval{98.04}{0.05} & \pmval{86.76}{0.15}\\
     & \fsl-MMFM w/ VAE & \pmval{47.06}{0.45} & \pmval{35.24} {0.72} & \pmval{95.92}{0.19} & \pmval{82.08}{0.11} & \pmval{51.70}{0.21} & \pmval{41.35}{0.65} & \pmval{98.51}{0.02} & \pmval{86.64}{0.14}\\
     & \fsl-JKO & \pmval{48.14}{0.08} & \pmval{26.00}{0.28} & \pmval{95.92}{0.11} & \pmval{82.87}{0.19} & \pmval{53.01}{0.11} & \pmval{43.38}{0.29} & \pmval{98.13}{0.05} & \pmval{86.75}{0.04} \\
     
    \midrule
    \raisebox{-1.5\normalbaselineskip}[0pt][0pt]{20}
     & RandomInit & \pmval{50.28}{0.43} & \pmval{33.63}{0.99} & \pmval{95.81}{0.18} & \pmval{82.36}{0.42} & \pmval{51.35}{1.21} & \pmval{44.16}{1.28} & \pmval{98.51}{0.05} &  \pmval{88.25}{0.69}\\
     & \fsl-CFM w/ VAE & \pmval{52.25}{0.18}	& \pmval{41.18}{0.28} & \pmval{96.41}{0.08} & \pmval{83.42}{0.23} & \pmval{55.66}{0.23}	& \pmval{44.38}{0.16} & \pmval{98.25}{0.05} &  \pmval{88.02}{0.17}\\
     & \fsl-MMFM w/ VAE & \pmval{52.57}{0.73} & \pmval{39.80}{0.48} & \pmval{97.01}{0.21} & \pmval{84.38}{0.08} & \pmval{55.85}{0.76}	& \pmval{44.10}{0.26} & \pmval{98.85}{0.03} & \pmval{88.29}{0.03}\\
     & \fsl-JKO & \pmval{52.59}{0.02} & \pmval{34.53}{0.35} & \pmval{96.82}{0.10} & \pmval{84.89}{0.20} & \pmval{56.65}{0.27} & \pmval{45.06}{0.21} & \pmval{98.43}{0.03} & \pmval{88.10}{0.14} \\
    \midrule
    30
     & RandomInit & \pmval{52.99}{0.55} & \pmval{37.79}{0.55} & \pmval{96.55}{0.22} & \pmval{84.16}{0.66} & \pmval{	56.05}{1.21} & \pmval{45.80}{1.19} & \pmval{98.55}{ 0.05} & \pmval{88.55}{0.66}\\
    \bottomrule
    \end{tabular}
\end{adjustbox}
\end{center}
}
\vspace{-1em}
\end{table*}

\subsection{Reward fine-tuning}
\label{app:reft-results}
\paragraph{Training settings.}
For the corresponding experiments, we used an AdamW optimizer with learning rate $2\times 10^{-5}$ and weight decay $5\times 10^{-4}$. We use a trajectory batch size of 8 (denoted $M$ in Algorithm \ref{alg:det-am}) and a dataset batch size $m = 128/512$ depending on the dataset. We also clip gradients at norm 1.0 and set the number of fine-tuning iterations to 150. We use a cosine annealing scheduler with $\eta_{\min} = 10^{-6}$. The step size was set to $h=0.025$, meaning our trajectory consists of 40 timesteps. As suggested in \citet[App. H]{domingo-enrichAdjointMatchingFinetuning2024}, we only evaluated gradients at 20 out of 40 timesteps: 10 of the last timesteps and a uniform sample of 10 from the first 30 timesteps.

\paragraph{Training tricks.} We also introduce a few tricks. First, Algorithm \ref{alg:det-am} suggests that we must take a fine-tuning step every time we sample a dataset batch. We instead opt to average the starting lean adjoint $\tilde a_1$ over 3 batch samples; we found this to result in more stable training losses. In fact, this is all the iterations done per epoch, so if $N=150$, we only have 450 batch iterations total. Moreover, as $\tilde a_1$ is the gradient of a classifier loss, we experimented with treating it like a stochastic gradient descent step, which means including a learning rate, momentum, and weight decay parameter. To clarify, this involves saving the $\tilde a_1$ values from previous training iterations. It does not seem to help much, other than the learning rate. We set the reward learning rate to 1.5 and momentum to 0.01. 

\paragraph{Padding regularization.}
Another trick specific to \fsl-CFM-All is to use padding regularization. As mentioned in the model retrieval section, zero-padding is applied to standardize the input dimension given the variability between dataset classifiers. This trick indexes the padded elements in the input tensor and adds its $\ell_2$-norm to the loss, thus coercing it towards zero. Notably, this is an implicit method of conditioning on the dataset. For example, if we are regularizing more elements of the input, that suggests we are classifying on a one-channel dataset since these classifiers require more padding.

\paragraph{Weight augmentation.} In our experiments, we also tried augmenting the network weight data acquired from pre-training in an attempt to expand $\text{supp } \hat p_1$. This is done by simple Gaussian noise ($\sigma = 5 \times 10^{-3}$), dropout $(p=0.02$), and mix-up. Recall that mix-up involves sampling a data pair $(w_1, w_2)$ and an interpolation parameter $\alpha \sim \text{Uniform}[0, 1]$, and returning an interpolation $(1-\alpha)w_1 + \alpha w_2$.

\paragraph{Corruption levels.} As part of our experiments to get a sense of the width of classifier supports, we applied increasing corruption to the base datasets. The transformations are as follows:
\begin{enumerate}
    \item Level 1: random horizontal flip, random rotation (max $15$ deg).
    \item Level 2: Level 1 + color jittering, and Gaussian blur (kernel size of 3 and $\sigma \in [0.1, 2.0]$).
    \item Level 3: Level 2 + random erasing with $p = 0.5$, scale in [0.2, 0.5], and ratio in [0.3, 3.3].
\end{enumerate}

\begin{table}[h]
\caption{Mean validation accuracy over five generated classifiers after reward fine-tuning on increasingly corrupted datasets. The -All suffix indicates that the CFM was trained on classifiers of CIFAR10, STL10, MNIST, and FMNIST, whereas +A indicates weight augmentation, and +P indicates regularization on padded values. The arrow '$\rightarrow$' indicates the accuracy before (left) and after (right) reward fine-tuning.}

\label{tab:corrupt-ft}
\tablestyle{3pt}{1.0}
\begin{center}
\begin{adjustbox}{max width=\textwidth}
\begin{tabular}{l @{\hspace{10pt}}ccc}
\toprule
& \multicolumn{3}{c}{CIFAR10} \\
\cmidrule(lr){2-4} 
Corruption Level & 0 & 1 & 2 \\
\midrule
SGD fine-tuning & 63.38 $\rightarrow$ 63.38 & 59.93 $\rightarrow$ 60.91 & 24.18 $\rightarrow$ 49.90  \\
 \fsl-CFM & \pmval{62.53}{0.02} $\rightarrow$ \pmval{63.33}{0.08} & \pmval{58.65}{0.22} $\rightarrow$ \pmval{60.34}{0.76} & \pmval{24.84}{0.93} $\rightarrow$ \pmval{34.15}{0.74}  \\

\fsl-CFM-All & \pmval{28.92}{11.19} $\rightarrow$ \pmval{61.90}{0.22} & \pmval{23.57}{16.43} $\rightarrow$ \pmval{57.65}{0.91} & \pmval{22.72}{1.56} $\rightarrow$ \pmval{34.59}{1.88}  \\

\fsl-CFM-All+A & \pmval{45.07}{15.64} $\rightarrow$ \pmval{56.24}{1.65} & \pmval{37.53}{15.16} $\rightarrow$ \pmval{55.99}{1.17} &  \pmval{19.89}{4.11} $\rightarrow$ \pmval{32.44}{1.73} \\

\fsl-CFM-All+P & \pmval{50.18}{18.49} $\rightarrow$ \pmval{60.29}{0.62} & \pmval{51.36}{14.51} $\rightarrow$ \pmval{56.64}{1.14} & \pmval{21.84}{3.69} $\rightarrow$ \pmval{33.97}{1.23}  \\

\fsl-CFM-All+A+P & \pmval{39.25}{17.68} $\rightarrow$ \pmval{58.85}{1.33} & \pmval{41.59}{15.05} $\rightarrow$ \pmval{55.91}{1.79} & \pmval{19.21}{3.30} $\rightarrow$ \pmval{33.79}{2.00} \\

\midrule

& \multicolumn{3}{c}{MNIST} \\
\cmidrule(lr){2-4}
Corruption Level & 0 & 1 & 2 \\
\midrule
SGD fine-tuning & 98.93 $\rightarrow$ 98.93 & 96.58 $\rightarrow$ 97.78 & 18.8 $\rightarrow$ 97.55 \\
 \fsl-CFM & \pmval{98.52}{0.01} $\rightarrow$ \pmval{98.79}{0.04} & \pmval{95.87}{0.01} $\rightarrow$ \pmval{97.01}{2.27} & \pmval{15.68}{0.17} $\rightarrow$ \pmval{91.21}{3.05} \\

\fsl-CFM-All & \pmval{60.77}{27.58} $\rightarrow$ \pmval{97.56}{1.20} & \pmval{35.74}{27.14} $\rightarrow$ \pmval{59.32}{27.88} & \pmval{26.68}{22.11} $\rightarrow$ \pmval{65.40}{34.17} \\

\fsl-CFM-All+A & \pmval{37.67}{37.07} $\rightarrow$ \pmval{95.84}{0.53} & \pmval{49.42}{37.93} $\rightarrow$ \pmval{92.50}{8.99} & \pmval{11.14}{4.65} $\rightarrow$ \pmval{35.32}{28.25} \\

\fsl-CFM-All+P & \pmval{30.28}{33.54} $\rightarrow$ \pmval{95.72}{1.03} & \pmval{49.24}{38.96} $\rightarrow$ \pmval{94.65}{0.41} & \pmval{28.54}{24.07} $\rightarrow$ \pmval{78.87}{21.20} \\

\fsl-CFM-All+A+P & \pmval{27.57}{34.27} $\rightarrow$ \pmval{95.88}{0.35} & \pmval{58.32}{38.92} $\rightarrow$ \pmval{94.59}{0.30} & \pmval{12.68}{6.16} $\rightarrow$ \pmval{88.64}{3.92} \\
\bottomrule
\end{tabular}
\end{adjustbox}
\vspace{-1em}
\end{center}
\end{table}

\begin{table}
\caption{Mean validation accuracy over five generated classifiers after reward fine-tuning on four datasets. The -All suffix indicates that the CFM was trained on classifiers of CIFAR10, STL10, MNIST, and FMNIST, whereas +A indicates weight augmentation, and +P indicates regularization on padded values. The arrow '$\rightarrow$' indicates the accuracy before (left) and after (right) reward fine-tuning.}
\vspace{-1em}
\label{tab:corrupt-ft-extra}
\tablestyle{2.0pt}{1.1}
\begin{center}
\begin{adjustbox}{max width=\textwidth}
\begin{tabular}{l @{\hspace{10pt}} cccc}
\toprule
  &  \multicolumn{1}{c}{{CIFAR10}} & \multicolumn{1}{c}{{STL10}} & \multicolumn{1}{c}{{MNIST}} & \multicolumn{1}{c}{{FMNIST}}\\
\midrule
\fsl-CFM-All & \pmval{28.92}{11.19} $\rightarrow$ \pmval{61.90}{0.22} & \pmval{42.21}{11.68} $\rightarrow$ \pmval{52.63}{0.14} & \pmval{60.77}{27.58} $\rightarrow$ \pmval{97.56}{1.20} & \pmval{43.12}{34.22} $\rightarrow$ \pmval{88.49}{1.09} \\

\fsl-CFM-All+A & \pmval{45.07}{15.64} $\rightarrow$ \pmval{56.24}{1.65} & \pmval{27.21}{11.44} $\rightarrow$ \pmval{50.66}{1.71} & \pmval{37.67}{37.07} $\rightarrow$ \pmval{95.84}{0.53} & \pmval{55.49}{33.85} $\rightarrow$ \pmval{85.65}{2.76} \\
\fsl-CFM-All+P & \pmval{50.18}{18.49} $\rightarrow$ \pmval{60.29}{0.62} & \pmval{22.05}{6.32} $\rightarrow$ \pmval{52.11}{0.36} & \pmval{30.28}{33.54} $\rightarrow$ \pmval{95.72}{1.03} &\pmval{32.07}{29.62} $\rightarrow$ \pmval{86.93}{1.97} \\
\fsl-CFM-All+A+P & \pmval{39.25}{17.68} $\rightarrow$ \pmval{58.85}{1.33} & \pmval{21.27}{5.06} $\rightarrow$ \pmval{50.49}{0.77} & \pmval{27.57}{34.27} $\rightarrow$ \pmval{95.88}{0.35} & \pmval{46.20}{34.08} $\rightarrow$ \pmval{84.21}{5.30} \\
\bottomrule
\end{tabular}
\vspace{-2em}
\end{adjustbox}
\end{center}
\end{table}

\paragraph{Experiments on the support of classifier weights.}
To get a sense of how the weight distributions change as the dataset changes, see Table \ref{tab:corrupt-ft}. In this experiment, we reward fine-tuned the \fsl-CFM meta-model on increasingly corrupted versions of the base training dataset. The affect of the corruption is noticeable on the support as reward fine-tuning, which is constrained within the support set, fails to reach the accuracy of conventional fine-tuning. Indeed, we find accuracies to be bounded above, often far below the validation accuracy obtained from SGD fine-tuning for the most corrupted data. This holds true even for mild corruption schemes, suggesting the ideal classifier support on the corrupted set has little intersection with the original support, indicating narrowness of the set. To see that different classifiers have mostly disjoint supports, we try expanding $\text{supp }p_1^{\text{base}}$ by training on classifiers for different datasets. To verify this, we trained a new \fsl-CFM model on classifiers for different datasets \textit{without} any context conditioning. Since the fine-tuned target distribution ought to classify only one dataset, the hope is for fine-tuning to redirect the velocity field towards this one classifier distribution. This intuition turns out to be insufficient, as shown by the \fsl-CFM-All rows in the table, further supporting our hypothesis. Moreover, this result holds with weight augmentations. The results also suggest that context conditioning is necessary for consistent validation accuracies, as convincingly shown in the MNIST case. Indeed, the padding regularization is an implicit form of context conditioning as the MNIST classifiers---expecting one-channel images---are slightly smaller than the 3-channel dataset classifiers. Further reinforcing our hypothesis, we also provide results of a \fsl-CFM model trained on all four datasets fine-tuned to generate classifiers for each in Table \ref{tab:corrupt-ft-extra}.

\paragraph{Computation times.} On a NVIDIA A40 GPU, one full training run takes about 2 and a half hours. During evaluation, we sample generated weights 5 times and validate on the test dataset; this test completes in under 5 minutes.

\begin{table}[h]
\caption{True positive rate at the 5\% significance level (TPR@5) and area under receiver operating characteristic curve (AUROC) for detection of harmful covariate shift on CIFAR10.1 and Camelyon17. We test on both the disagreement rate (DAR) and the entropy, setting $\lambda = \kappa/(|\tQ| + 1)$. The best result for each column and our method are \textbf{bolded}.}

\label{tab:extra-meta-detectron}
\tablestyle{3pt}{1.0}
\begin{center}
\begin{adjustbox}{max width=\textwidth}
\begin{tabular}{l @{\hspace{10pt}}ccc ccc}
\toprule
\textbf{TPR@5} & \multicolumn{3}{c}{CIFAR10} & \multicolumn{3}{c}{Camelyon}\\
\cmidrule(lr){2-4} \cmidrule(lr){5-7}
 $|\tQ|$ & 10 & 20 & 50 & 10 & 20 & 50 \\
\midrule
Detectron (DAR), $\kappa = 1$ & 0 & 0 & 0 & 0 & \pmval{.10}{.10} & 0 \\
Detectron (DAR), $\kappa$ match & 0 & 0 & 0 & \pmval{.30}{.15} & \pmval{.20}{.13} & \textbf{\pmval{.50}{.17}} \\

\textbf{Meta-detectron (DAR)} & \pmval{.33}{.13} & \pmval{.47}{.13} &	\pmval{.27}{.12} & \textbf{\pmval{.80}{.11}} & \textbf{\pmval{.40}{.13}} & \pmval{.42}{.15}\\

Detectron (Entropy), $\kappa = 1$ & \textbf{\pmval{.60}{.17}} & \pmval{.40}{.16} & \pmval{.50}{.17} & 	0	& 0	 & 0 \\
Detectron (Entropy), $\kappa$ match & \pmval{.50}{.17} & \pmval{.10}{.10} & \pmval{.20}{.13} & \pmval{.10}{.10} & 	0	& 0 \\
\textbf{Meta-detectron (Entropy)} & \pmval{.27}{.12}	 & \textbf{\pmval{.93}{.07}} & \textbf{\pmval{.93}{.07}} & 0 &  0 & \pmval{.27}{.12} \\

\toprule

\textbf{AUROC} & \multicolumn{3}{c}{CIFAR10} & \multicolumn{3}{c}{Camelyon}\\
\cmidrule(lr){2-4} \cmidrule(lr){5-7}
$|\tQ|$ & 10 & 20 & 50 & 10 & 20 & 50 \\
\midrule
Detectron (DAR), $\kappa = 1$ & 0.515	& 0.595	 & 0.485 &	0.590 & 0.595 & 0.795 \\
Detectron (DAR), $\kappa$ match & 0.495 & 0.485 & 0.560	& 0.690 & 0.795	& \textbf{0.935} \\
\textbf{Meta-detectron (DAR)} & \textbf{0.849} & 0.838 &	0.938 & \textbf{0.900}	& 0.760 & 0.806 \\

Detectron (Entropy), $\kappa = 1$ & 0.740	& 0.695 &	0.850 &	0.345 & 	0.610 &	0.720 \\
Detectron (Entropy), $\kappa$ match & 0.735 &	0.730 &	0.820 & 	0.510 & 	0.455 & 0.600 \\
\textbf{Meta-detectron (Entropy)} & 0.716 &\textbf{0.987} &	\textbf{0.996}	& 0.747 &	\textbf{0.836} & 0.847 \\

\bottomrule
\end{tabular}
\end{adjustbox}
\vspace{-1em}
\end{center}
\end{table}

\begin{table}[h]
\caption{In-distribution validation accuracy before and after reward fine-tuning.}

\label{tab:meta-detectron-val-acc}
\tablestyle{3pt}{1.0}
\begin{center}
\begin{adjustbox}{max width=\textwidth}
\begin{tabular}{l @{\hspace{10pt}}ccc}
\toprule
 & \multicolumn{3}{c}{CIFAR10} \\
\cmidrule(lr){2-4} 
 $|\tQ|$ & 10 & 20 & 50 \\
\midrule
Meta-detectron ($\tP^*$) & \pmval{61.81}{0.17} $\rightarrow$ \pmval{61.01}{0.14} & \pmval{60.97}{0.16} $\rightarrow$ \pmval{60.49}{0.25} & \pmval{60.64}{0.19} $\rightarrow$ \pmval{60.32}{0.30} \\
Meta-detectron ($\tP^*$), $\kappa$ vary & \pmval{60.78}{0.21} $\rightarrow$ \pmval{61.09}{0.17} & \pmval{60.97}{0.16} $\rightarrow$ \pmval{60.49}{0.25} & \pmval{60.78}{0.09} $\rightarrow$ \pmval{60.38}{0.24} \\

Meta-detectron ($\tQ$) & \pmval{60.84}{0.27} $\rightarrow$ \pmval{60.85}{0.27} & \pmval{61.61}{0.30} $\rightarrow$ \pmval{61.11}{0.13} & \pmval{60.90}{0.15} $\rightarrow$ \pmval{61.15}{0.28} \\
Meta-detectron ($\tQ$), $\kappa$ vary & \pmval{60.97}{0.15} $\rightarrow$ \pmval{60.76}{0.26} & \pmval{61.61}{0.30} $\rightarrow$ \pmval{61.11}{0.13} & \pmval{61.14}{0.15} $\rightarrow$ \pmval{60.59}{0.26} \\

\midrule

 & \multicolumn{3}{c}{Camelyon} \\
\cmidrule(lr){2-4}
 $|\tQ|$ & 10 & 20 & 50 \\
\midrule
Meta-detectron ($\tP^*$) & \pmval{92.75}{0.24} $\rightarrow$ \pmval{91.38}{0.72} & \pmval{92.84}{0.21} $\rightarrow$ \pmval{91.71}{0.59} & \pmval{92.63}{0.15} $\rightarrow$ \pmval{92.39}{0.22} \\	
Meta-detectron ($\tP^*$), $\kappa$ vary & \pmval{92.10}{0.35} $\rightarrow$ \pmval{92.12}{0.46} & \pmval{92.09}{0.54} $\rightarrow$ \pmval{91.95}{0.48} &  \pmval{92.31}{0.21} $\rightarrow$ \pmval{90.43}{0.82} \\	

Meta-detectron ($\tQ$) & \pmval{92.85}{0.27} $\rightarrow$ \pmval{92.47}{0.21} & \pmval{92.60}{0.19} $\rightarrow$ \pmval{92.44}{0.34} & \pmval{93.03}{0.18} $\rightarrow$ \pmval{92.70}{0.42} \\
Meta-detectron ($\tQ$), $\kappa$ vary & \pmval{92.36}{0.34} $\rightarrow$ \pmval{91.89}{0.66} & \pmval{90.61}{1.45} $\rightarrow$ \pmval{91.38}{0.23} &  \pmval{92.66}{0.24} $\rightarrow$ \pmval{91.01}{0.36}\\

\bottomrule
\end{tabular}
\end{adjustbox}
\vspace{-1em}
\end{center}
\end{table}

\subsection{Detecting harmful covariate shifts}
\label{app:meta-detectron-training}

\paragraph{Training.} The only settings that were changed from reward fine-tuning is the learning rate, which is now $1.5 \times 10^{-5}$ and the number of fine-tuning epochs $N=100$. The computation time varies between CIFAR10 and Camelyon17. The former completes in 2 hours, whereas the latter requires 3 and a half hours. The difference stems from the higher image resolution of Camelyon17, resulting in more parameters in the classifier.

\paragraph{Choosing $\lambda$.} Following the exposition in \citet{ginsberg2023harmfulcov}, the choice of $\lambda$ can be motivated by a counting argument. We suppose that agreeing with the base classifier on a sample of $\tP$ incurs a reward of 1 and disagreeing on a sample of $\tQ$ incurs a reward of $\lambda$. Originally, to encourage agreement of $\tP$ as the primary objective, $\lambda$ is set so that the reward obtained from disagreeing on \textit{all} samples of $\tQ$ is less than agreeing on just \textit{one} sample of $\tP$, i.e. $\lambda|\tQ| < 1$, giving $\lambda = \frac{1}{|\tQ| + 1}$. However, this argument can be generalized slightly. As reward fine-tuning is a more conservative approach, we want to increase the reward for disagreeing on $\tQ$. For instance, we may want the reward obtained from disagreeing on \textit{all} samples of $\tQ$ to be about the same as agreeing on $\kappa > 0$ samples of $\tP$. This gives $\lambda = \kappa/(|\tQ|+1)$ as the $\ell_{dce}$ weight. 

\paragraph{Choosing $\kappa$.}
We tried $\kappa=1, 3, 6, 9, |\tQ|+1$ for Camelyon17 and $\kappa=1, 32, 40, 50, |\tQ|+1$ for CIFAR. We ran two experiments with different weight settings. In our fixed run, seen in Tables \ref{tab:extra-meta-detectron} and \ref{tab:meta-detectron-val-acc}, we used $\kappa=4$ for Camelyon17 and $\kappa=32$ for CIFAR10. The reason for the lower $\kappa$ values for Camelyon17 stems from the number of classes: recall that Camelyon17 is requires a binary classifier, whereas there are 10 classes in the CIFAR10 dataset. The dataset batch size also matters: we used a batch size of 128 for Camelyon17 (as the images are larger) and 512 for CIFAR10. In the run where we varied $\kappa$ over the sample size $|\tQ|$, we started at a reference point: $\kappa=32$ for CIFAR10 at $|\tQ|=20$ and scaled naturally, i.e. $\kappa=16$ when $|\tQ|=10$ and $\kappa=80$ when $|\tQ|=50$. Likewise, we used the reference $\kappa=4$ for Camelyon17 at $|\tQ|=20$ and scaled accordingly.

\paragraph{Shift detection test.} We used a standard two-sample test identical to \citet{ginsberg2023harmfulcov}. Given the original $\tP$ and the unknown $\tQ$, we would like to rule out the null hypothesis $\tP = \tQ$ at the 5\% significance level by comparing two statistics: \textit{entropy} and the \textit{disagreement rate}. The definition of entropy we use measures uncertainty over classes in the logits. 
\begin{equation}
    \label{eq:detectron-entropy}
    \text{Entropy}(x) = \sum_{c=1}^N \hat p_c \log \hat p_c \quad \text{where }\hat p = \frac{f(x) + g(x)}{2},
\end{equation}
where $f$ is our base classifier and $g$ is the generated classifier. In contrast to Detectron, we do not use CDC ensembles in our method. We use a Kolmogorov-Smirnov test to compute the p-value for covariate shift on the entropy distributions, comparing $g_{\tP^*}$ and $g_{\tQ}$. Intuitively, when $\tQ$ is out-of-distribution, the generated classifier predicts with high entropy on $\tQ$ and low on $\tP^*$.

Regarding the disagreement rate, the null hypothesis is represented by $\E[\phi_\tQ] \leq \E[\phi_{\tP^*}]$ where $\phi$ is the disagreement rate and expectation is taken over trial seeds. This comes from the idea that its easier to learn to reject from a distribution that is not in the training set (since the base classifier $f$ will also be unsure). The test result is considered significant at $\alpha\%$ when $\phi_\tQ$ is greater than the $(1-\alpha)$ quantile of $\phi_{\tP^*}$. In practice $\alpha = 5\%$.

\begin{figure}
    \centering
    \includegraphics[width=0.49\linewidth]{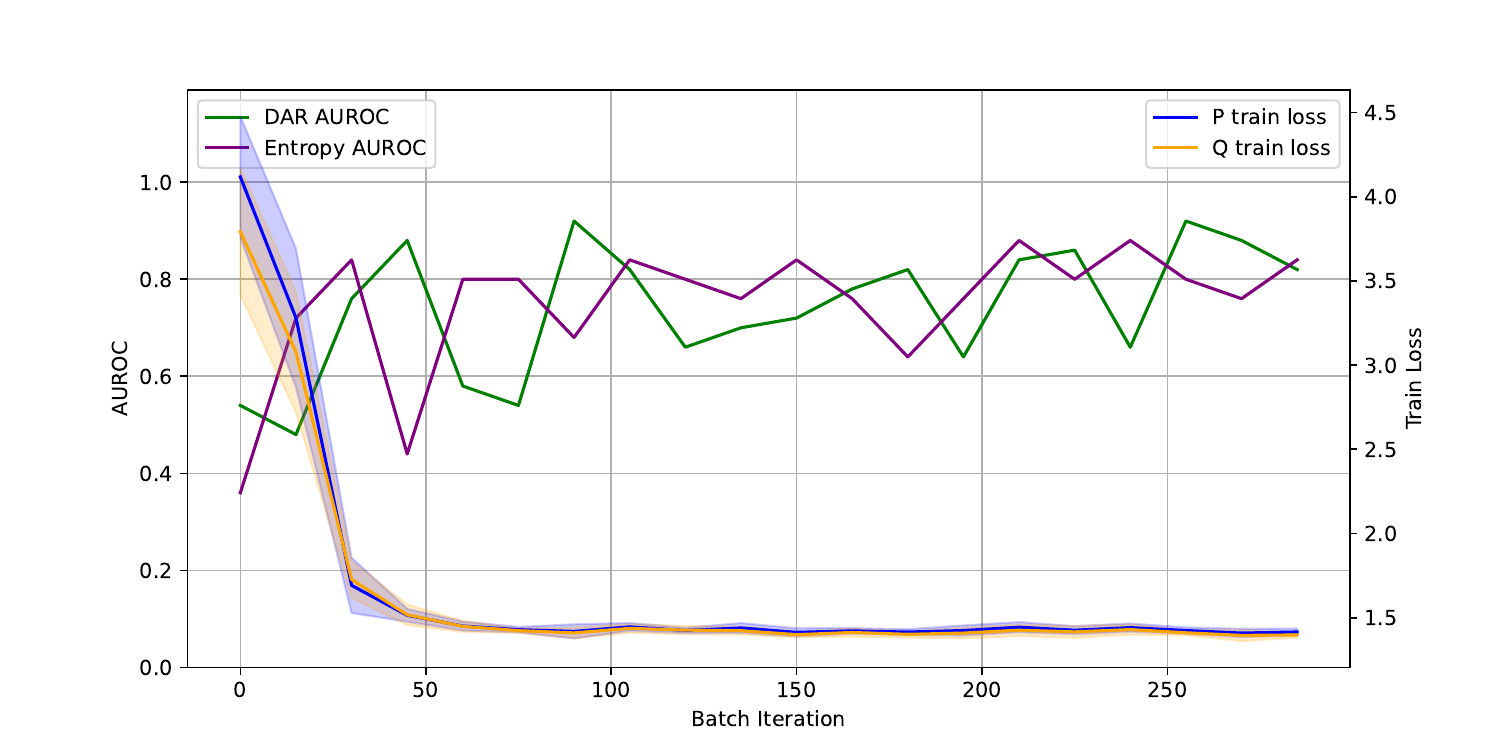}
    \includegraphics[width=0.49\linewidth]{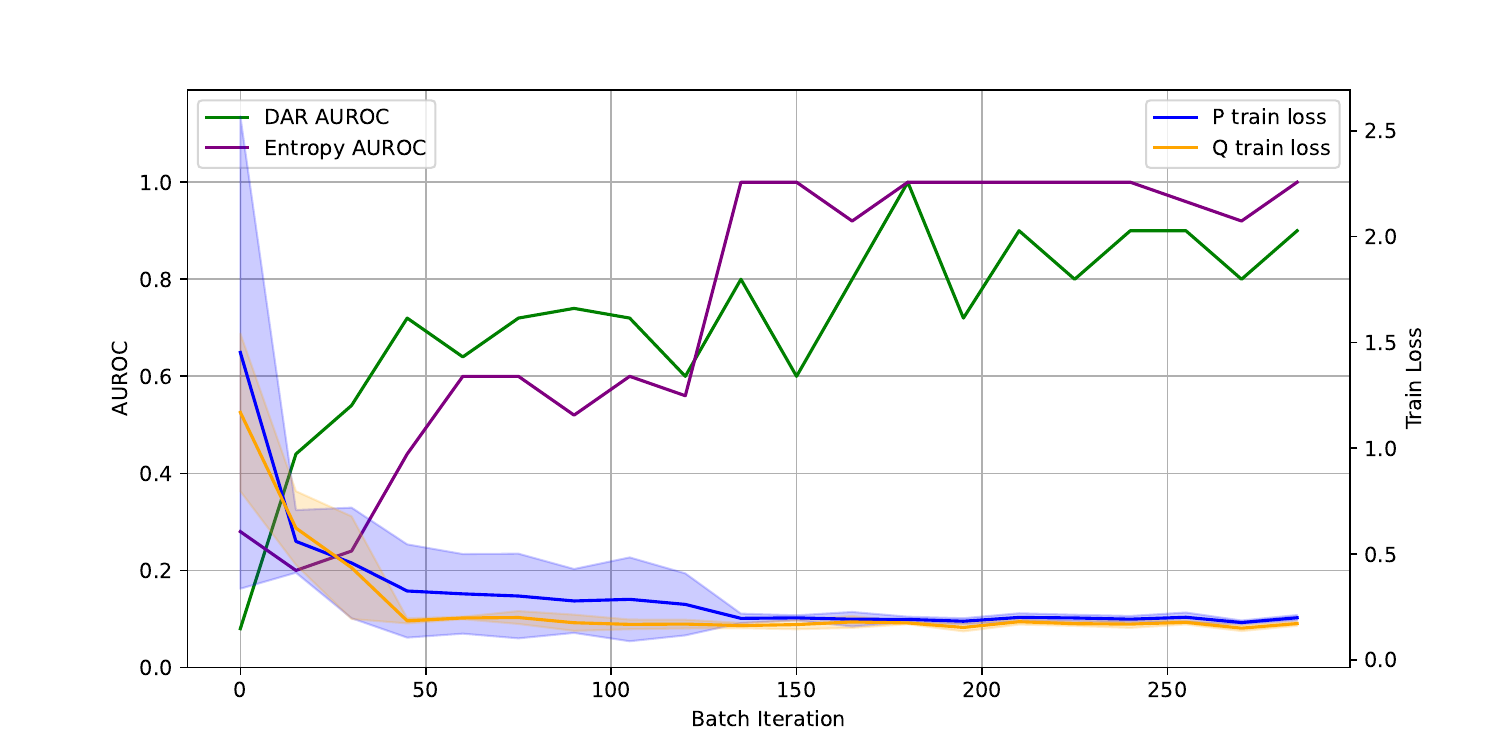}
    \includegraphics[width=0.49\linewidth]{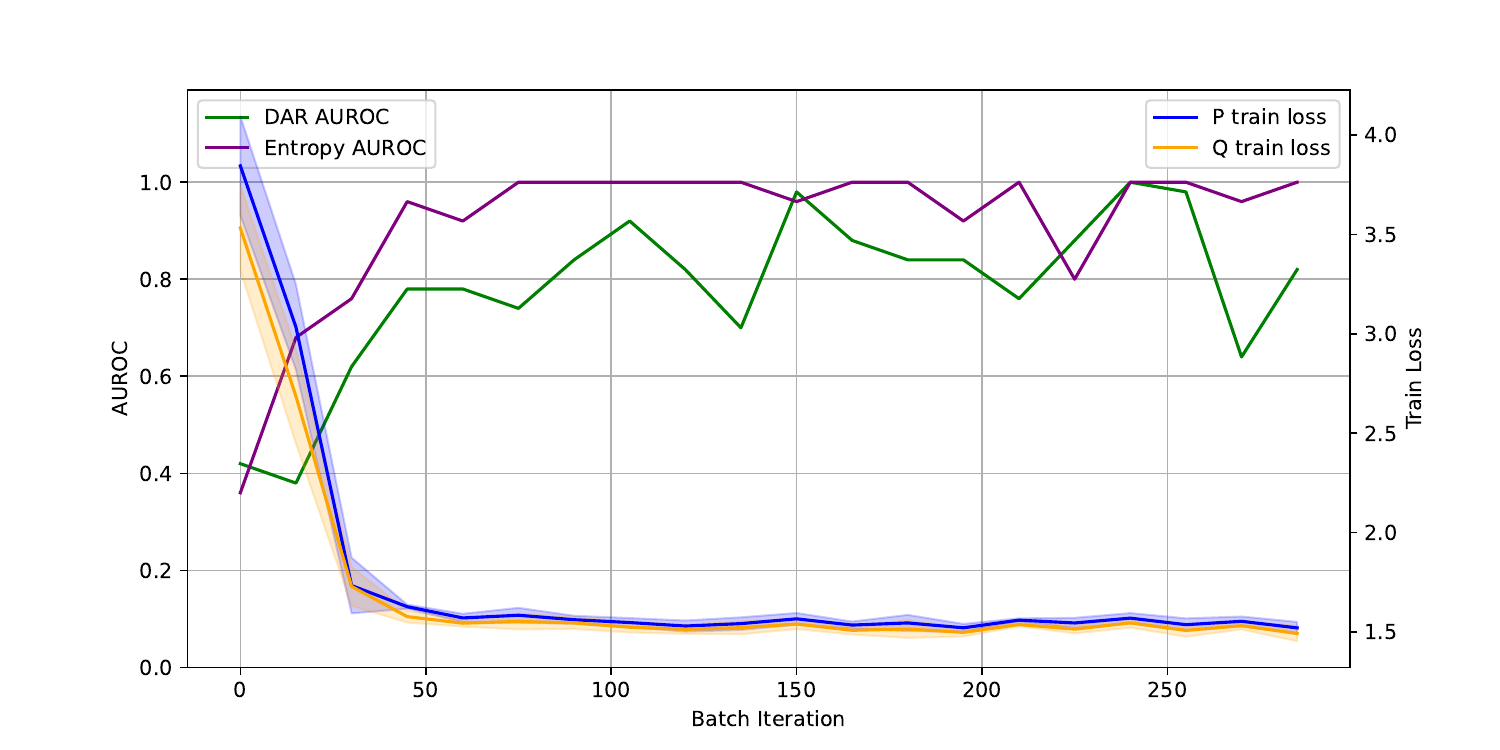}
    \includegraphics[width=0.49\linewidth]{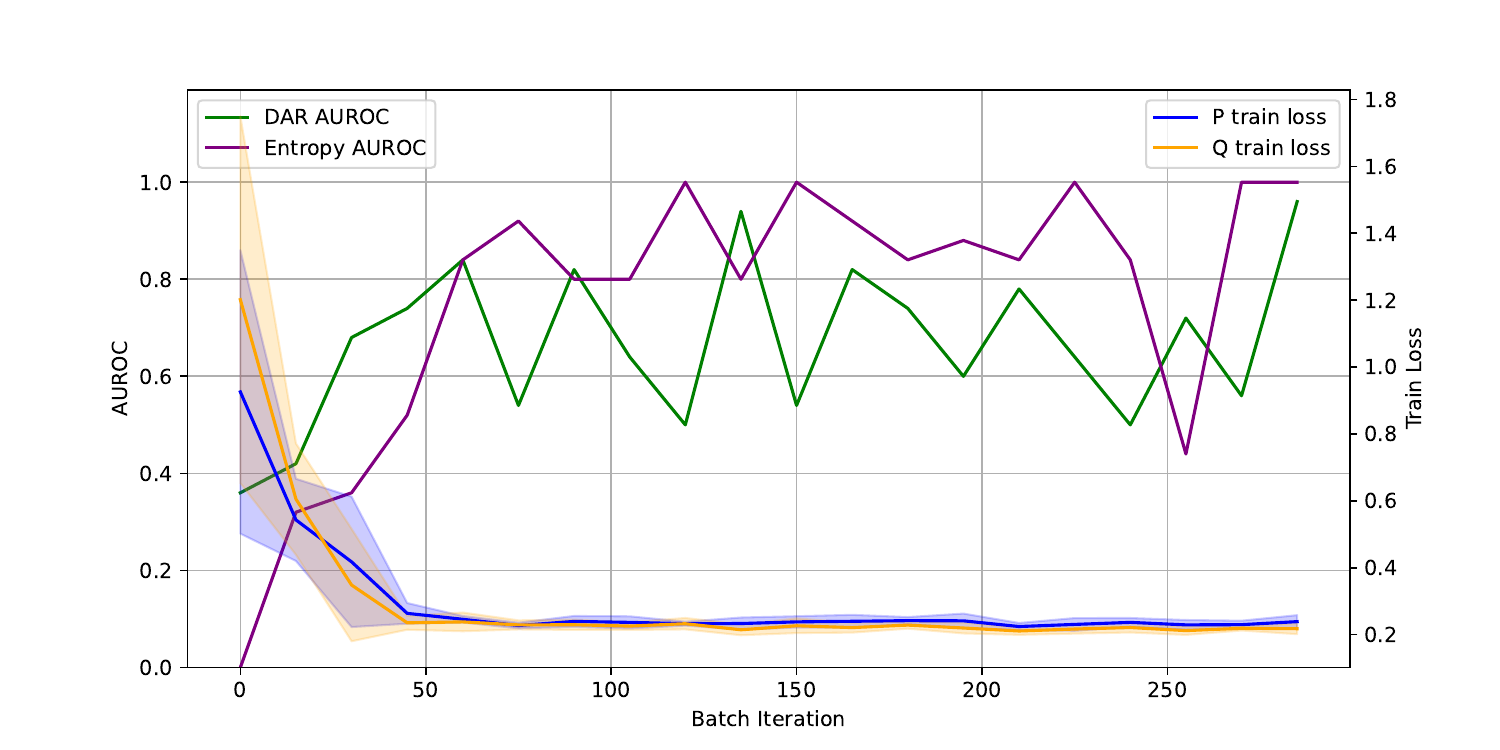}
    \includegraphics[width=0.49\linewidth]{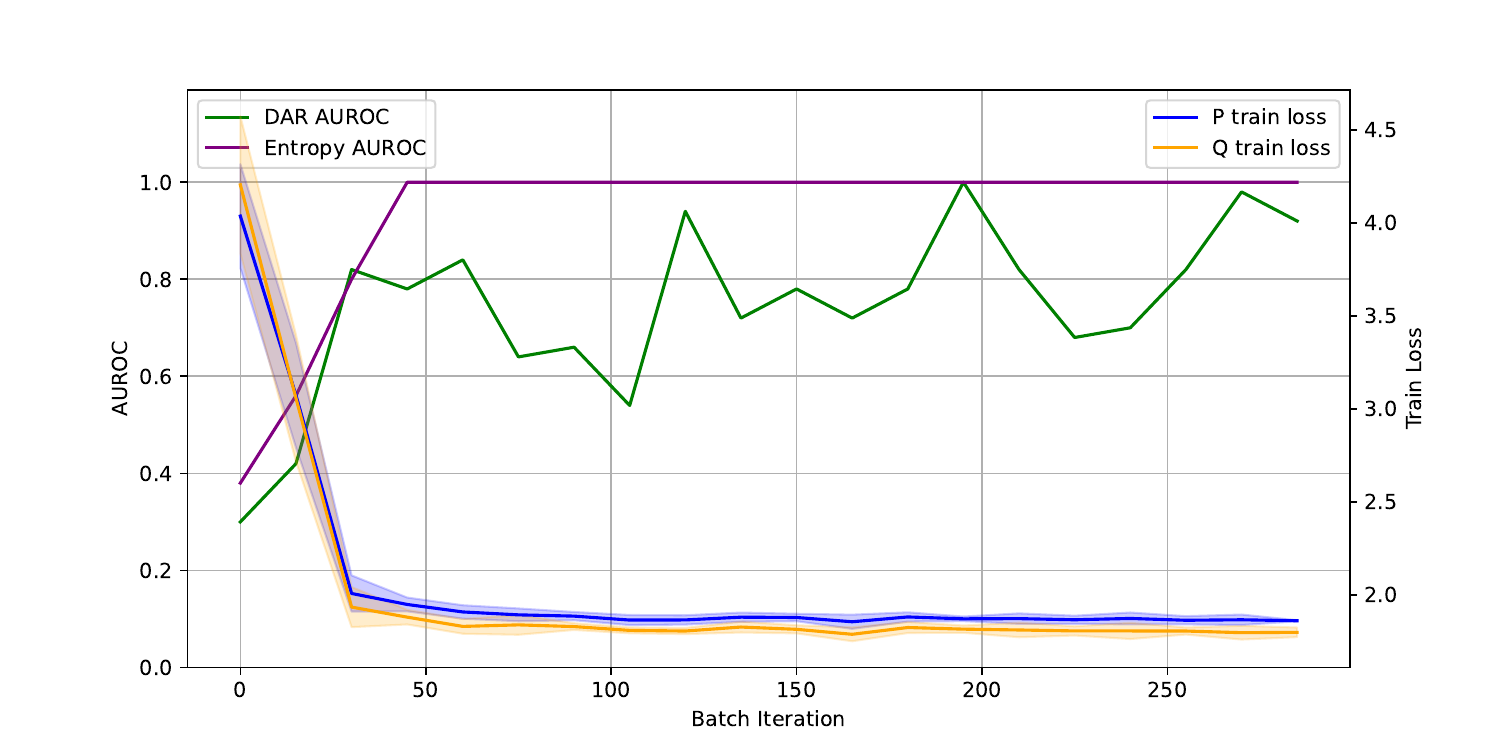}
    \includegraphics[width=0.49\linewidth]{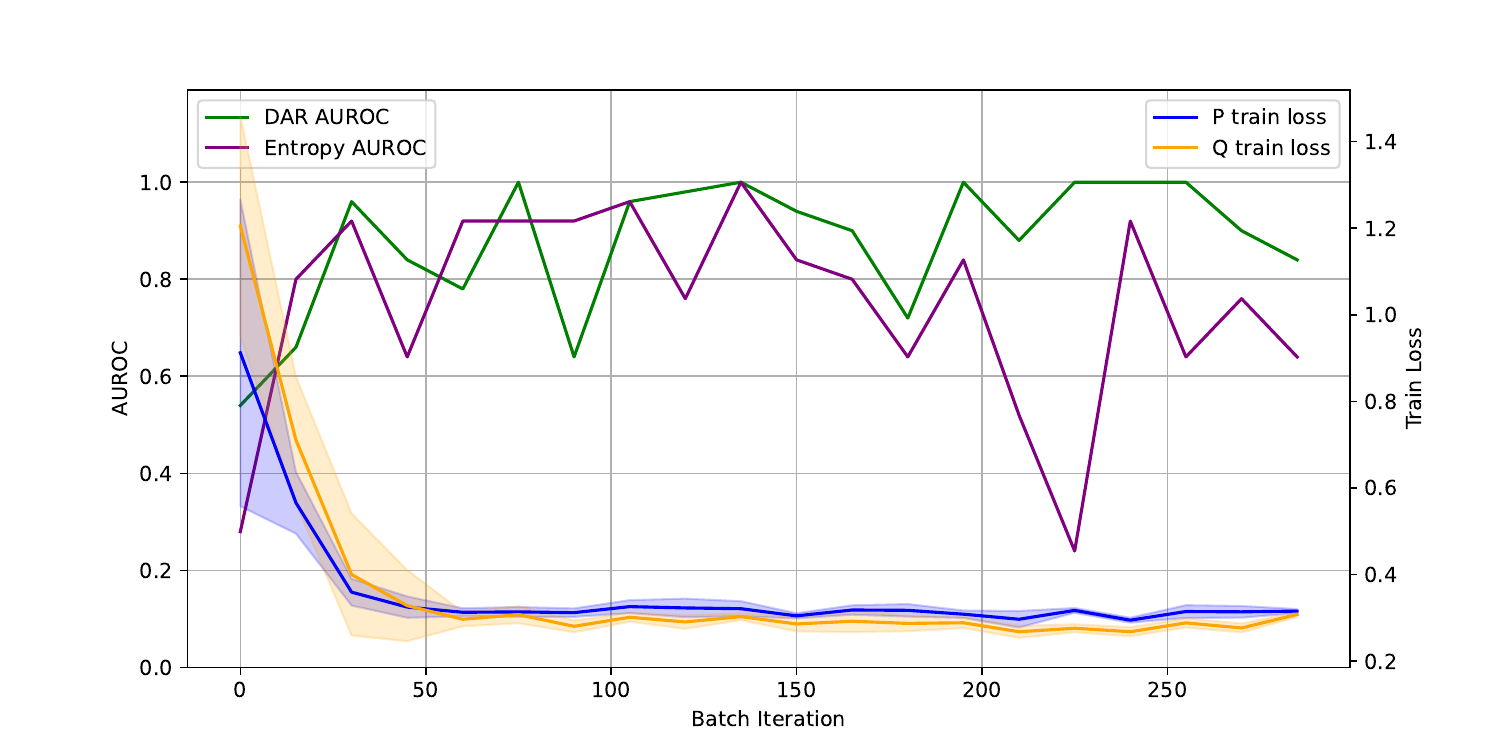}
    \caption{Plots illustrating how AUROC and $\ell_{cdc}$ evolves over reward fine-tuning iterations for CIFAR10 and Camelyon17 when $|\tQ|=10, 20, \, 50$ for \textit{varying} $\kappa$ ordered from top to bottom.}
    \label{fig:extra-vary-auroc-and-loss}
\end{figure}

\begin{figure}
    \centering
    \includegraphics[width=0.49\linewidth]{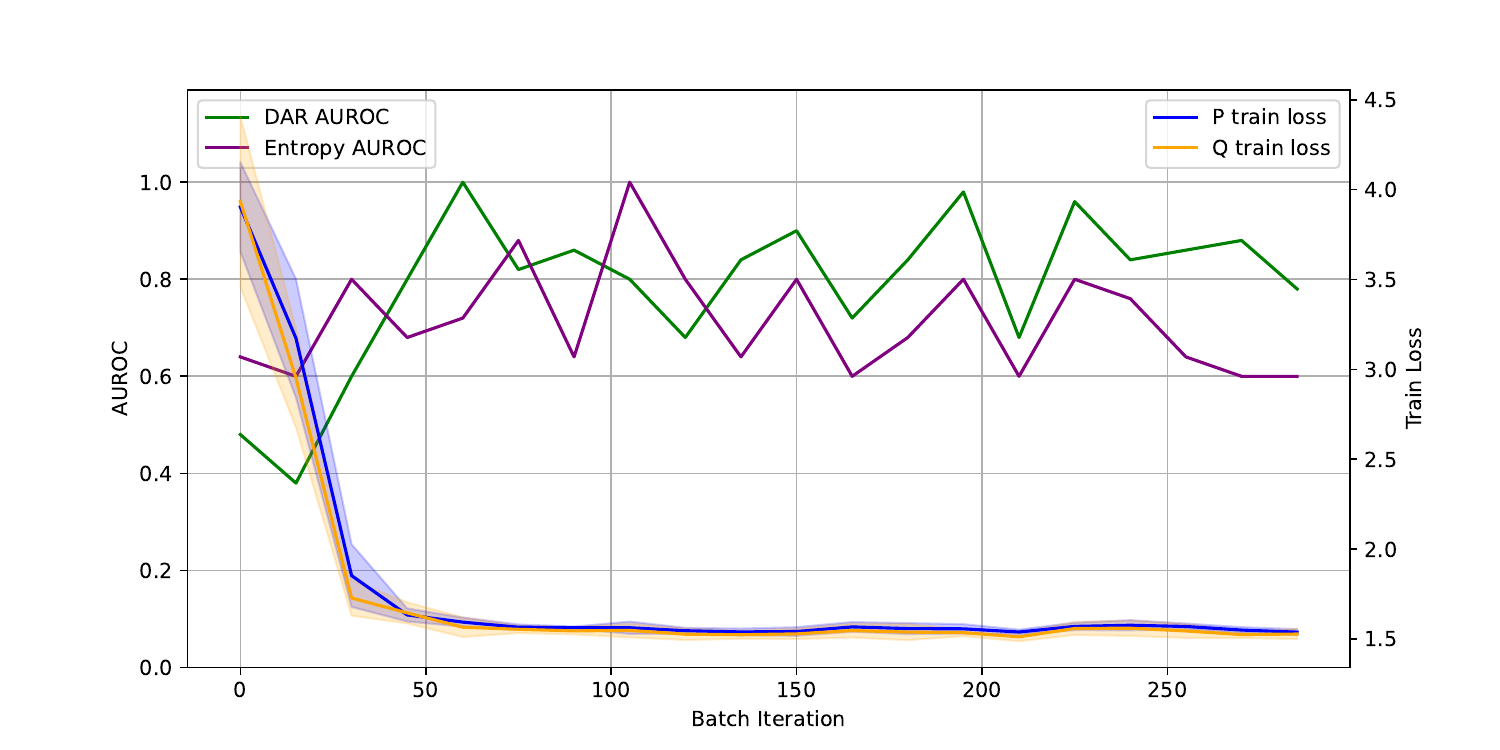}
    \includegraphics[width=0.49\linewidth]{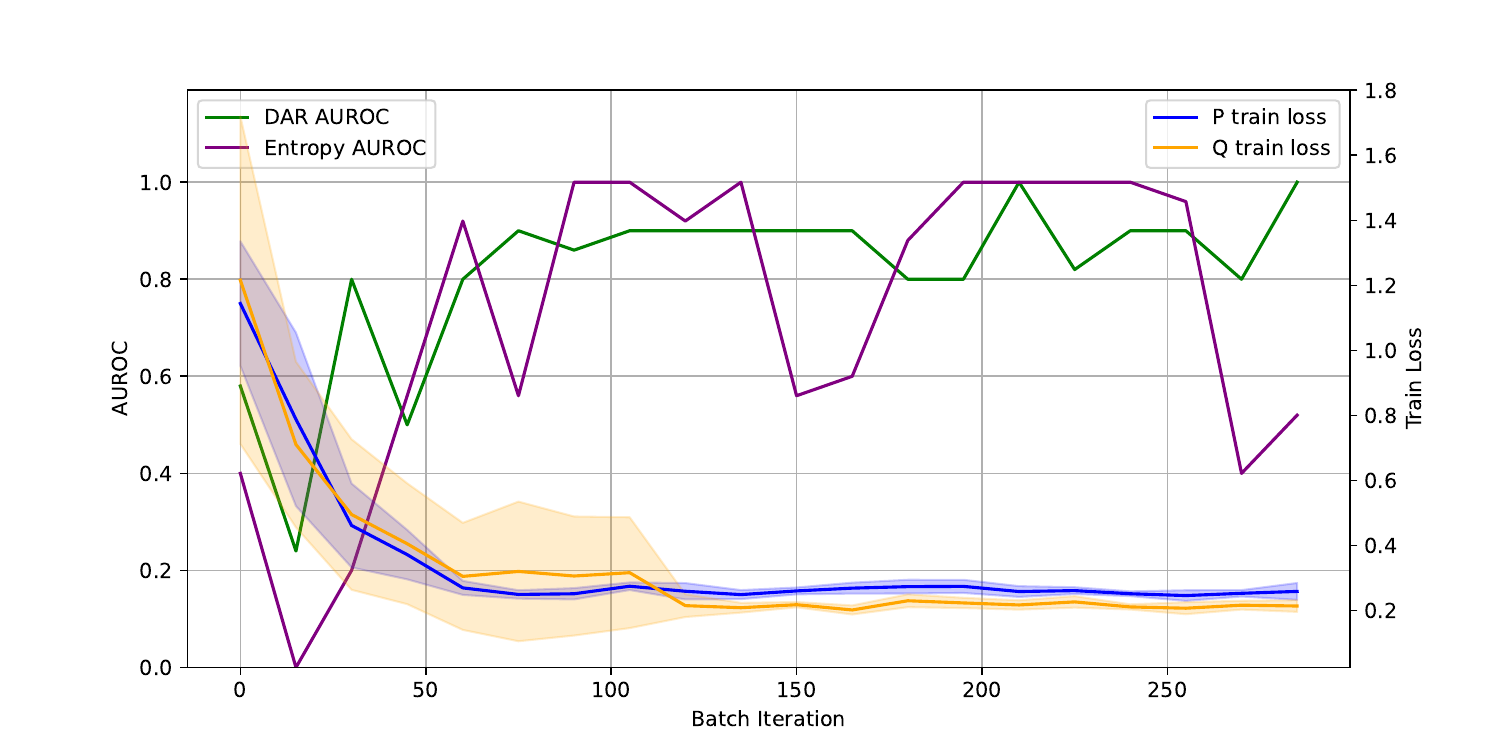}
    \includegraphics[width=0.49\linewidth]{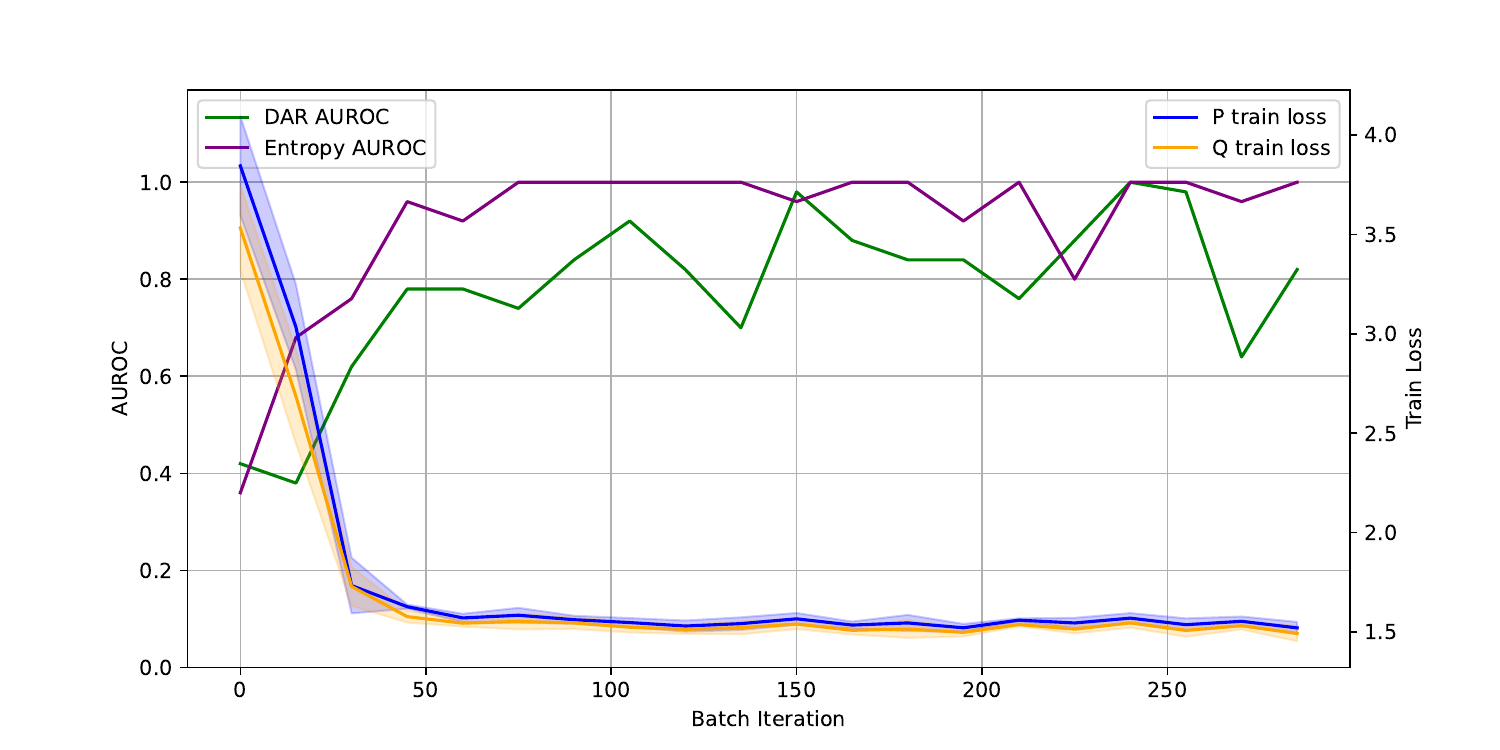}
    \includegraphics[width=0.49\linewidth]{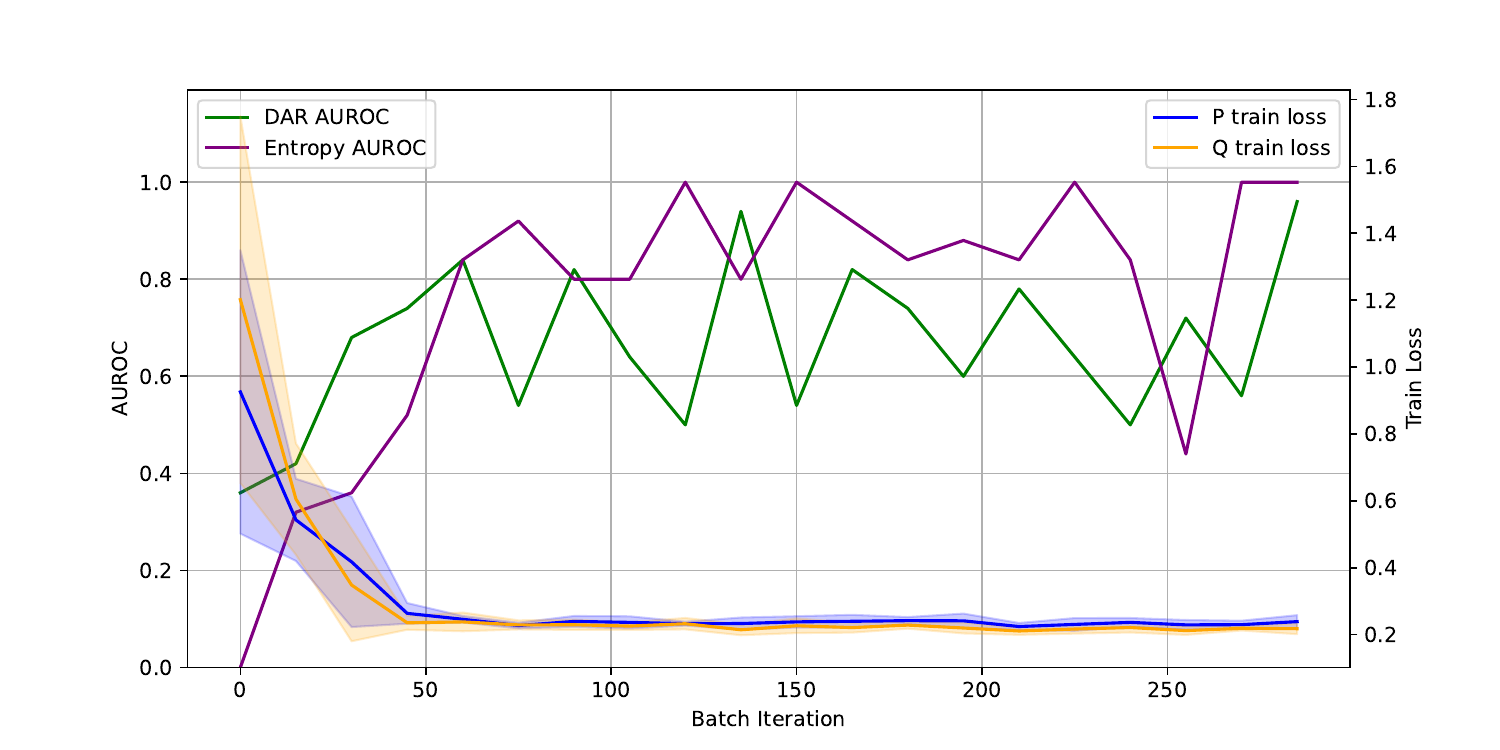}
    \includegraphics[width=0.49\linewidth]{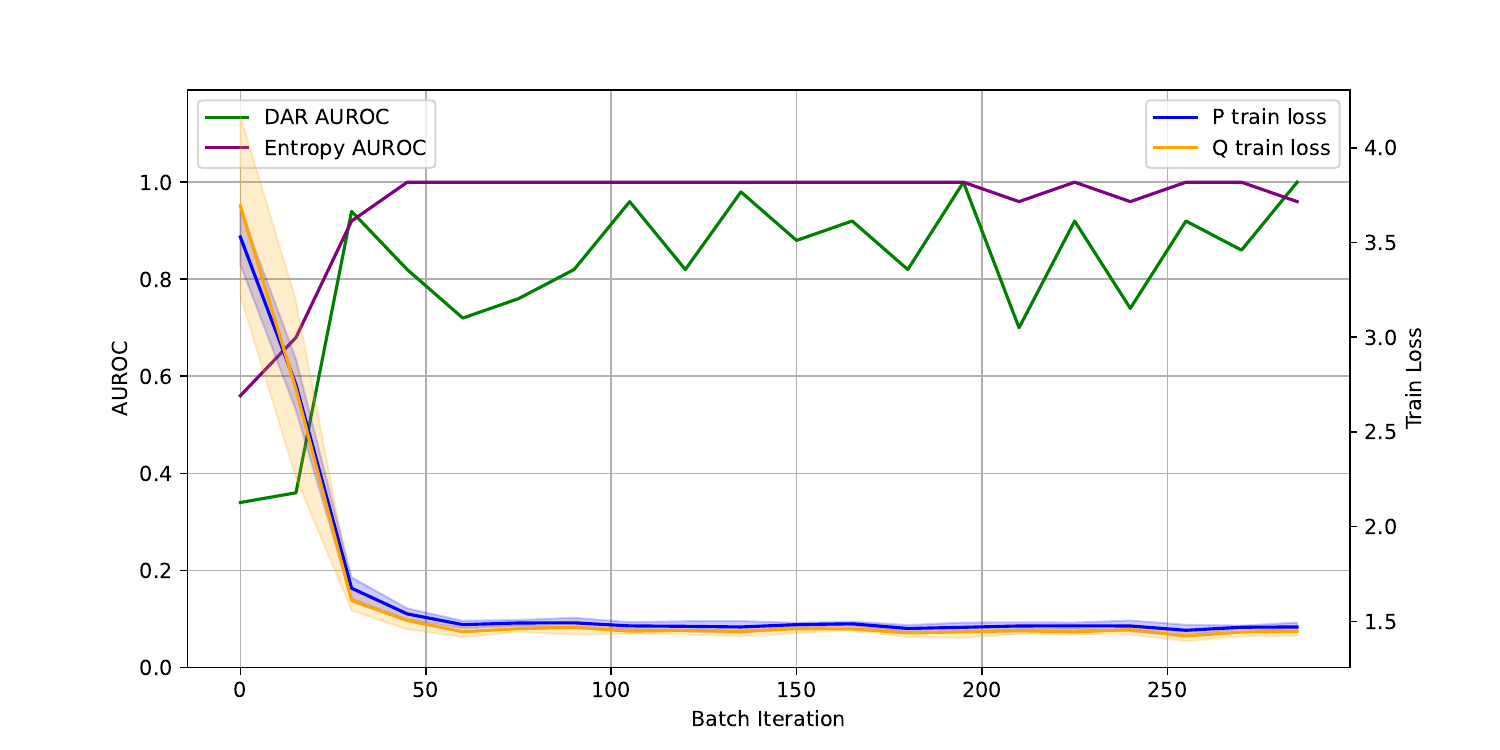}
    \includegraphics[width=0.49\linewidth]{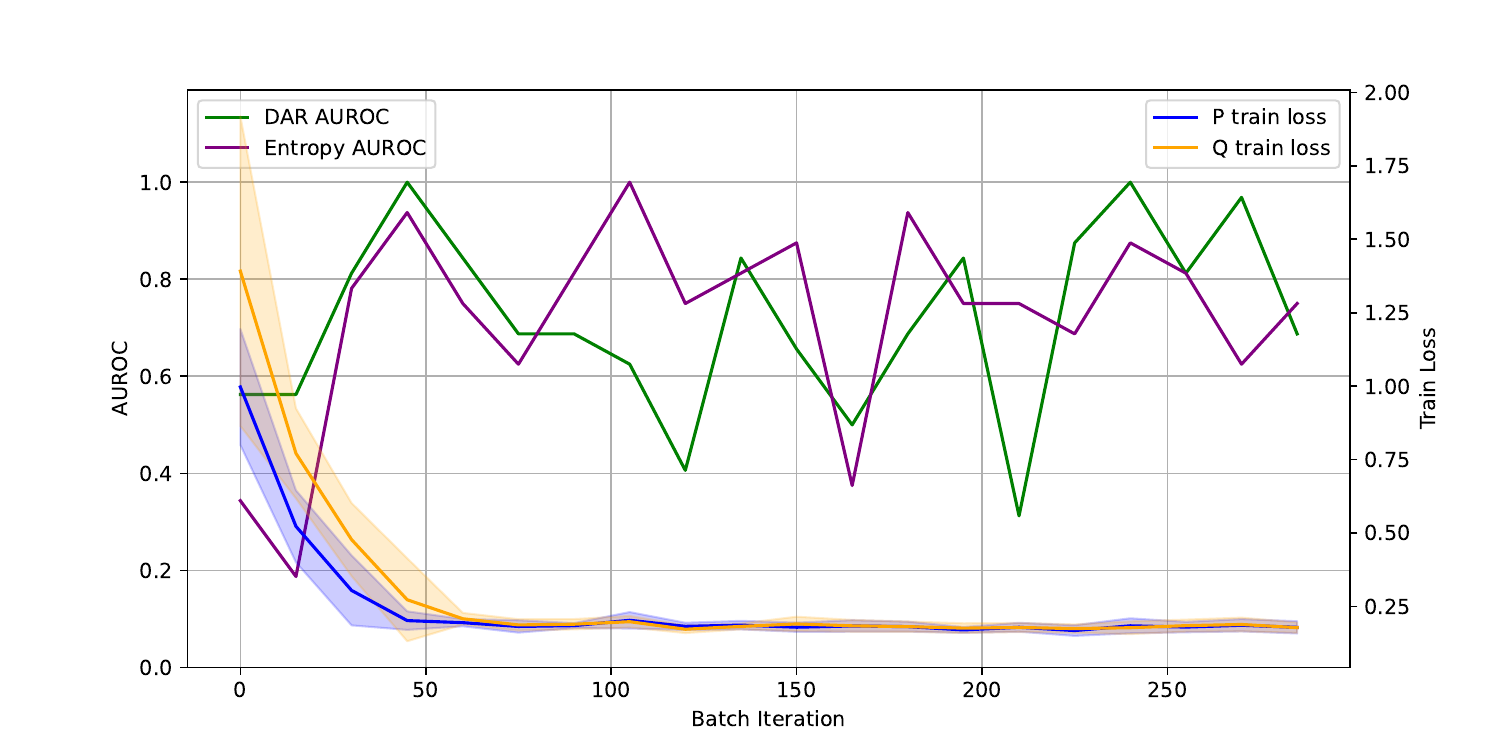}
    \caption{Plots illustrating how AUROC and $\ell_{cdc}$ evolves over reward fine-tuning iterations for CIFAR10 and Camelyon17 when $|\tQ|=10, 20, \, 50$ for \textit{fixed} $\kappa$ ordered from top to bottom.}
    \label{fig:extra-fixed-auroc-and-loss}
\end{figure}

\paragraph{Further results.} See Tables \ref{tab:extra-meta-detectron} and \ref{tab:meta-detectron-val-acc} for the results when $\kappa$ is fixed. We also show more plots in Figures \ref{fig:extra-vary-auroc-and-loss} and \ref{fig:extra-fixed-auroc-and-loss}.

\begin{figure}
    \centering
    \includegraphics[width=0.49\linewidth]{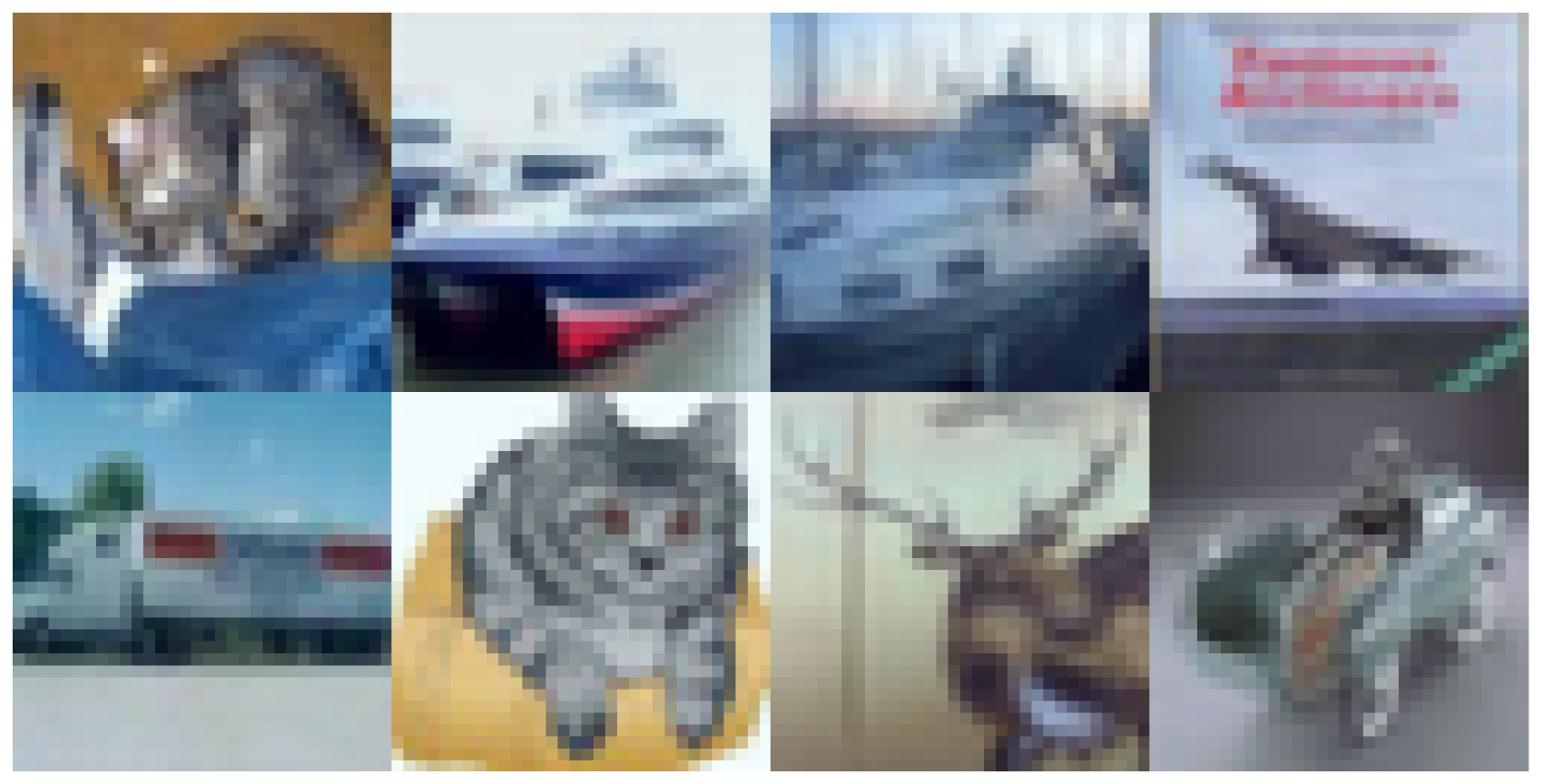}
    \includegraphics[width=0.49\linewidth]{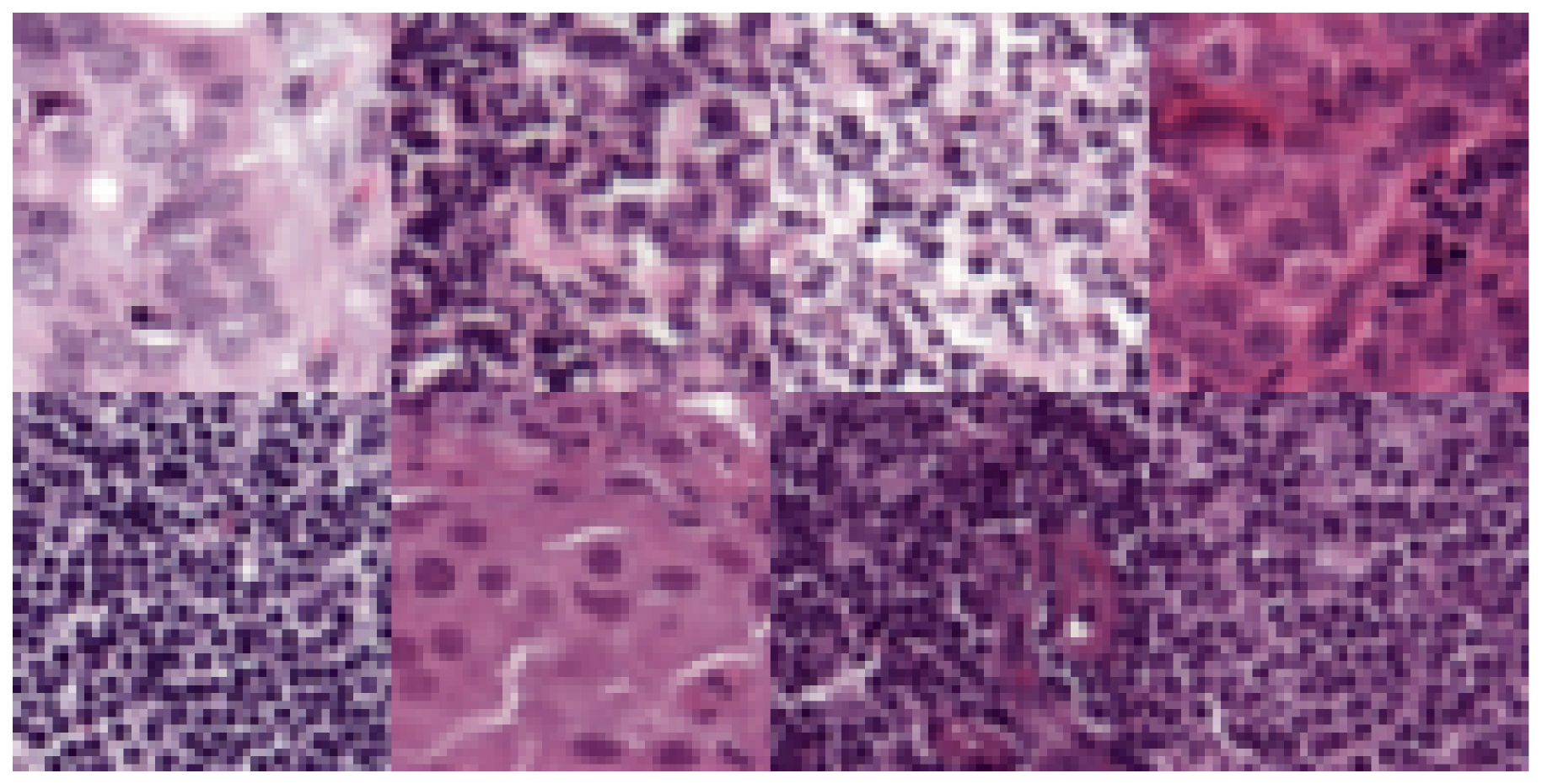}
    \caption{Samples of $\tP^*$ (\textbf{top}) and $\tQ$ (\textbf{bottom}) from the CIFAR10 (\textbf{left}) and Camelyon17 (\textbf{right}) evaluation of Meta-Detectron.}
    \label{fig:meta-detectron-samples}
\end{figure}

\end{document}